\documentclass[runningheads]{llncs}
\usepackage[T1]{fontenc}
\usepackage{graphicx}
\usepackage[numbers]{natbib}
\usepackage{multicol}
\usepackage[bookmarks=true]{hyperref}
\usepackage{xspace}
\usepackage{tikz}
\usetikzlibrary{shapes, arrows.meta, positioning, calc, fit, backgrounds}

\usepackage[dvipsnames]{xcolor}
\usepackage{amssymb}
\usepackage{nicefrac}
\usepackage{amsmath}
\usepackage{cuted}   
\usepackage{caption} 
\usepackage{float}   
\usepackage{graphicx}

\usepackage{algorithm}      
\usepackage[noend]{algpseudocode}  

\algrenewcommand\algorithmicensure{\textbf{Input:}}

\usepackage[bookmarks=true]{hyperref}
\usepackage{cleveref}

\usepackage[numbers]{natbib}
\usepackage{multicol}

\usepackage{tabularx}
\usepackage{booktabs}
\usepackage{multirow}
\usepackage{subcaption}

\usepackage{standalone} 
\usepackage{tikz}

\newcommand{\adir}[1]{\textcolor{blue}{(\textbf{Adir:} #1)}}
\newcommand{\kiril}[1]{\textcolor{red}{(\textbf{Kiril:} #1)}}
\newcommand{\oren}[1]{\textcolor{teal}{(\textbf{OS:} #1)}}

\newcommand{\rewrite}[1]{\textcolor{darkgreen}{(\textbf{Rewrite:} #1)}}
\newcommand{\former}[1]{\textcolor{orange}{(\textbf{Former:} #1)}}

\newcommand{\ip}{\textsf{IP}\xspace}

\newcommand{\gip}{\textsf{GIP}\xspace}

\newcommand{\gtsp}{\textsf{Group-TSP}\xspace}
\newcommand{\gst}{\textsf{Group-ST}\xspace}

\def\niceparagraph#1{\vspace{3pt} \noindent \textbf{#1}}

\spnewtheorem{problem}{Problem}{\bfseries}{\itshape}

\newcommand{\classP}{\ensuremath{\mathsf{P}}}
\newcommand{\classNP}{\ensuremath{\mathsf{NP}}}

\newcommand{\ignore}[1]{}

\newcommand{\Cpp}{C\raise.08ex\hbox{\tt ++}\xspace}

\newcommand\algname[1]{\textsf{#1}\xspace}
\definecolor{darkgreen}{RGB}{0,128,0}

\begin{document}

\title{Scalable Inspection Planning via Flow-based Mixed Integer Linear Programming}

\author{Adir Morgan\thanks{Corresponding author.} \and Kiril Solovey \and Oren Salzman}
\institute{Technion--Israel Institute of Technology, Haifa, Israel\\
\email{samorgan@campus.technion.ac.il, kirilsol@technion.ac.il, osalzman@cs.technion.ac.il}}
\authorrunning{Morgan, Solovey, and Salzman}

\titlerunning{Scalable Inspection Planning via Flow-based MILP}

\maketitle

\begin{abstract}
\emph{Inspection planning} is concerned with computing the shortest robot path to inspect a given set of points of interest (POIs) using the robot's sensors. This problem arises in a wide range of applications from manufacturing to medical robotics. 
To alleviate the problem's complexity, recent methods rely on sampling-based methods to obtain a more manageable (discrete) \emph{graph inspection planning} (\gip) problem. Unfortunately, \gip still remains highly difficult to solve at scale as it requires simultaneously satisfying POI-coverage and path-connectivity constraints, giving rise to a challenging optimization problem, particularly at scales encountered in real-world scenarios.  
In this work, we present highly scalable Mixed Integer Linear Programming (MILP) solutions for \gip that significantly advance the state-of-the-art in both runtime and solution quality. Our key insight is a reformulation of the problem’s core constraints as a network flow, which enables effective MILP models and a specialized Branch-and-Cut solver that exploits the combinatorial structure of flows.
We evaluate our approach on medical and infrastructure benchmarks alongside large-scale synthetic instances. Across all scenarios, our method produces substantially tighter lower bounds than existing formulations, reducing optimality gaps by 30–50\% on large instances. Furthermore, our solver demonstrates unprecedented scalability: it provides non-trivial solutions for problems with up to \emph{15,000 vertices} and thousands of POIs, where prior state-of-the-art methods typically exhaust memory or fail to provide any meaningful optimality guarantees.

\end{abstract}


\section{Introduction \& Related Work}

In inspection planning (\ip), a robot equipped with an onboard sensor is tasked with computing a path in a known environment to inspect a set of points of interest (POIs)
while avoiding obstacles and minimizing path cost.
Applications of \ip arise in a wide range of domains, including construction~\cite{cheng2008time}, manufacturing~\cite{atkar2005uniform}, and medical robotics~\cite{cho2024efficient,cho2021planning}.

State-of-the-art approaches to the \ip problem~\cite{fu2019toward,fu2021computationally} use sampling-based motion planning to discretize the robot’s continuous configuration space into a roadmap represented as a graph. The vertices and edges of this graph correspond to valid robot configurations and feasible local motions, respectively. Edge costs encode the cost of local motion, and directed edges may be used to capture asymmetric motions arising from kinodynamic constraints. Each vertex is additionally associated with a \emph{visibility set} that specifies which POIs can be inspected from that configuration. The \ip task then reduces to finding a minimum-cost tour on this graph that inspects all POIs, a problem known as \emph{graph inspection planning} (\gip).
\gip presents a significant computational challenge, as it entails the simultaneous selection of a subset of vertices that jointly cover all POIs, together with the computation of a minimum-length tour connecting them. Thus, \gip generalizes both the \emph{set cover} problem and the \emph{traveling salesman problem} (TSP)~\cite{karp1972reducibility, lawler1985tsp},  both of which are NP-hard, and even hard to approximate within constant factors~\cite{feige1998threshold, sahni1976p}.

\citet{fu2019toward} proposed a search-based approach that reformulates \gip as a shortest-path problem on a new graph $G_{\ip}$, whose vertex set captures all original roadmap vertices and all possible POI subsets. Consequently, the size of~$G_{\ip}$ is exponential in the number of POIs. While this approach provides theoretical guarantees on solution quality, 
it does not manage to handle  real-world instances involving thousands of POIs within a reasonable time frame, thus limiting its practical applicability.
More recently, \citet{mizutani2024leveraging} tackled \gip using dynamic programming and mixed-integer linear programming (MILP) approaches. 
Among the two, the most scalable is their MILP formulation, solved using the Branch-and-Bound (BnB) paradigm (see Sec.~\ref{sec:alg}).
While this formulation can handle substantially larger \gip instances than
earlier methods, its performance degrades as the graph size and the number of POIs
grow, leading to loose lower bounds and limited near-optimality certification,
even for modest graph sizes (relative to those commonly considered
in motion planning~\cite{Panasoff.Solovey.25}).
This motivates 
alternative formulations that retain strong solution-quality guarantees while enabling scalability.

\begin{figure*}[t]
    \hspace*{-0.8cm}
    \centering
    \subfloat[]{\includegraphics[height=1.9cm]{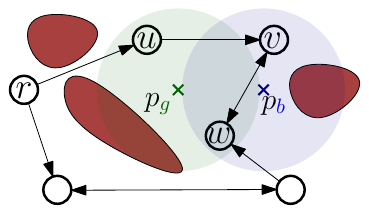}}
    \hspace{4mm}
    \subfloat[]{\includegraphics[height=1.9cm]{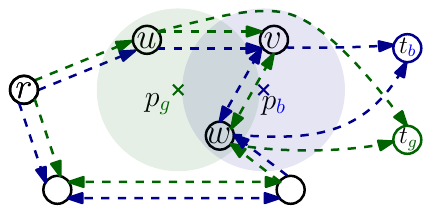}}
    \hspace{4mm}
    \subfloat[]{\includegraphics[height=1.9cm]{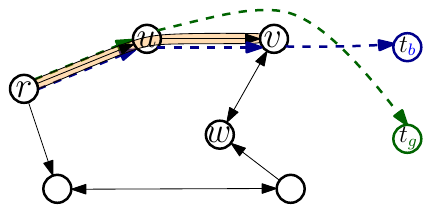}}
  \caption{
  Graph inspection planning, with network flow reduction for inspecting tour generation: (a)~A given inspection planning graph, where The POI
  $p_{\color{blue} b}$ can be inspected from  vertices~$v$ and~$w$,
  and the POI
  $p_{\color{darkgreen} g}$ can be inspected from~$u$ and~$w$ (depicted by colored circles surrounding each POI).
  (b)~Pseudo-terminals $t_{\color{blue} b}$ and $p_{\color{green} g}$ are introduced, and a multi-commodity network flow problem is formulated (dashed colored edges), where providing positive flow from the root $r$ to each pseudo-terminal will correspond to a path from the root to visit an inspecting configuration for the associated POI.
  (c)~A specific flow solution (orange edges), minimizing the cumulative path cost to inspect all POIs. Additional constraints will handle path returning to root.
  }
 \label{fig:flow-interp}
\end{figure*}

\niceparagraph{Contribution.}
We develop MILP-based \gip solvers that scale to large instances while providing tight bounds on solution quality.
Our approach is driven by a flow-based interpretation of \gip, illustrated in Fig.~\ref{fig:flow-interp}, which demonstrates how coverage and connectivity constraints can be expressed through network-flow structures within MILP formulations.
Toward this goal, we start by formally defining the \gip problem (Sec.~\ref{sec:problem-def}) 
and then reformulate it using a group-covering perspective (Sec.~\ref{sec:group-covering}). 
This perspective situates \gip within a well-studied class of combinatorial optimization problems and yields structural insights that guide the design of effective MILP solvers.
Using this perspective, we present a baseline MILP formulation that enforces coverage and local structure (Sec.~\ref{gip-form}), yet lacks global connectivity enforcement. We next propose three different approaches to ensure global connectivity (Sec.~\ref{gip-sec-poly}), each offering a trade-off between formulation compactness and solver strength.

Our most scalable approach out of the three relies on network-flow cutset constraints, that are too numerous to be explicitly evaluated simultaneously. Instead, we consider a lazy constraint-generation approach with the Branch-and-Cut framework~\cite{mitchell2002branch,padberg1991branch}.
%
To construct this solver, we introduce in Sec.~\ref{sec:alg} tailored algorithmic building blocks, that enable on-demand constraint generation and the construction of progressively-improving feasible solutions to \gip.
We conclude (Sec.~\ref{sec:evaluation}) with an extensive experimental evaluation on real-world and simulated instances spanning a wide range of problem scales, demonstrating the scalability of the proposed approach and the trade-offs between different formulations. 

\vspace{-5pt}
\section{Problem Definition}
\label{sec:problem-def}
Let $G=(V,E,c)$ be a weighted directed graph that abstracts the robot's motion, where vertices correspond to robot configurations and edges correspond to dynamically feasible, collision-free motions with cost $c(e)$.
Let $\mathcal{P}$ denote a set of points of interest (POIs) in the workspace.
The robot is equipped with an exteroceptive sensor (e.g. camera, lidar), inducing a \emph{coverage function} $\chi:V \rightarrow 2^{\mathcal{P}}$ that specifies the POIs inspected when the robot is positioned at a given vertex.

Given a designated root vertex $r \in V$, a \emph{tour} $\tau$ is a closed path starting and ending at $r$, traversing edges in $E$.
The cost of a tour is the sum of its edge costs, denoted $c(\tau)$, and its coverage is the union of POIs observed along the tour, denoted $\chi(\tau)$.
Let $\mathcal{T}(G,r)$ denote the set of all such tours.

\begin{definition}[Graph inspection planning]
\label{def:gipp}
Given $\langle G=(V,E,c), r, \mathcal{P}, \chi \rangle$, find a minimum-cost tour from $r$ that inspects all POIs, i.e.,
\[
\arg \min_{\tau \in \mathcal{T}(G,r)} c(\tau)
\quad \text{subject to} \quad
\chi(\tau) = \mathcal{P}.
\]
\end{definition}

\vspace{-15pt}
\section{Graph Inspection Planning  via Group Covering}
\label{sec:group-covering}
To set the foundations for the design of scalable MILP formulations for \gip, we reinterpret the problem through a group-covering (GC) lens, revealing similarities to group variants of 
\algname{TSP}~\cite{applegate2011traveling} and steiner tree (\algname{ST})~\cite{hwang1992steiner}.
In GC  problems~\cite{garg2000polylogarithmic, laporte1983generalized}, alongside a weighted graph $G=(V,E,c)$ we are also given a set~$\mathcal{S} = \{S_1, S_2, \dots, S_k\}$ of $k\geq 1$ subsets (or ``groups'') of vertices~$S_i \subseteq V$. 
The goal is to visit the vertices of~$V$, according to some path constraints, such that at least one vertex in each set~$S_i$ is visited. 
To express \gip as a GC problem, recall that in \gip each vertex~$v \in V$ covers a set of POIs~$\chi(v) \subseteq \mathcal{P}$.
We define the associated family of groups by \emph{inverting}~$\chi$: for each~$p \in \mathcal{P}$, define a set~$
    S_p := \{ v \in V \mid p \in \chi(v) \},
$
and let $\mathcal{S}:=\{S_p\}_{p\in \mathcal{P}}$.
Here, $S_p$ consists of all vertices from which POI $p$ can be inspected,
and inspecting $p$ in a tour corresponds to visiting at least one vertex in $S_p$. Thus, \gip can be expressed as follows.

\begin{definition}[\textsf{Group-covering form of \gip}]
\label{def:group-gipp}
Given $\langle G=(V,E,c), r, \mathcal{S} \rangle$,
find a minimum-cost tour from $r$ that covers all groups in $\mathcal{S}$.
\[
    \operatorname*{arg\,min}_{\tau \in \mathcal{T}(G,r)} \; c(\tau)
    \quad \text{subject to} \quad
    \forall\, S_i \in \mathcal{S}: \; V(\tau) \cap S_i \neq \emptyset .
\]
\end{definition}

This representation reveals a connection to two well-studied GC
problems: 

\begin{definition}[\gtsp~\cite{laporte1983generalized}]
\label{def:gtsp}
Given  $\langle G=(V,E,c), r, \mathcal{S} \rangle$,
let $\mathcal{T}_1(G,r)\subseteq \mathcal{T}(G,r)$ denote tours from $r$ that visit any vertex in $V\setminus \{r\}$ at most once, 
find a minimum-cost tour in $\mathcal{T}_1(G,r)$
that covers $\mathcal{S}$. 
\[
    \operatorname*{arg\,min}_{\tau \in \mathcal{T}_1(G,r)} \; c(\tau)
    \quad \text{subject to} \quad
    \forall\, S_i \in \mathcal{S}: \; V(\tau) \cap S_i \neq \emptyset .
\]
\end{definition}
%
\begin{definition}[\gst~~\cite{garg2000polylogarithmic}]
\label{def:gst}
Given $\langle G=(V,E,c), r, \mathcal{S} \rangle$,
find a minimum-cost tree $T\in \textup{Trees}(G,r)$, where $\textup{Trees}(G,r)$ is the set of all subtrees of $G$ rooted in $r$,  that covers all groups in $\mathcal{S}$.
\[
    \operatorname*{arg\,min}_{T \in \textup{Trees}(G,r)} \; c(T)
    \quad \text{subject to} \quad
    \forall\, S_i \in \mathcal{S}: \; V(T) \cap S_i \neq \emptyset .
\]
\end{definition}

Despite their apparent similarity, the path-simplicity constraint in \gtsp introduces additional combinatorial complexity, distinguishing it from \gip, where vertex revisits are allowed.
In contrast, \gst is structurally closer to \gip, as it emphasizes connectivity between the root and representative vertices of each group rather than tour simplicity.
However, existing \gst approaches typically rely on reductions to classical Steiner Tree~\cite{gamrath2017scip}, which enlarge the problem and limit scalability. Additionally, their conversion to tours can yield poor inspection paths, especially in directed graphs.
These observations motivate MILP formulations for \gip that, inspired by \gst, enforce root-to-group connectivity rather than explicit tour ordering.
They further imply that approximation bounds for \gst translate to \gip up to constant factors, yielding state-of-the-art \gip approximation bounds, we derive in App.~\ref{sec:Approx-bounds}.

\section{MILP Formulations for Graph Inspection Planning}
\label{sec:milp-for-gip}
Following the group-covering perspective (Sec.~\ref{sec:group-covering}), we propose MILP formulations for \gip that enforce both group coverage and tour structure.
We start with a baseline formulation that enforces coverage and local connectivity, 
and then introduce additional constraints to ensure global connectivity.

We denote the number of POIs by $|\mathcal{P}| = |\mathcal{S}| = k$, and set $K = \{1,\dots,k\}$. 
Furthermore, 
$N^+(v) := \{u \in V : (v,u)\in E\}$
and
$N^-(v) := \{u \in V : (u,v)\in E\}$
denote the out-neighborhood 
and in-neighborhood
of vertex  $v$, respectively.

\subsection{Baseline MILP Formulation} \label{gip-form}
The following is our baseline MILP formulation of \gip{\footnote{Although some instances involve only integer variables, we consistently use the term MILP rather than ILP, as our solution methodology does not distinguish between the two.}. 
The inspection tour~$\tau$ is specified through binary decision variables $x_{uv}$ associated with each edge $(u,v)\in E$, i.e.,~$x_{uv}=1$ indicates that $\tau$ traverses edge $(u,v)$.
\begin{subequations}\label{eq:gip-baseline}
{\small 
\begin{alignat}{2}
\min_{x_{uv},\forall {(u,v)\in E}} \quad &
\sum_{(u,v)\in E} c(u,v)\cdot x_{uv}
\label{eq:gtsp-milp-obj}
\\[0.1em]
\text{s.t.}\quad &
\sum_{v\in N^+(r)} x_{rv} \ge 1,
\quad \label{eq:gtsp-root}
\\[0.1em]
&
\sum_{v\in S_i} \sum_{u\in N^-(v)} x_{uv} \ge 1,
\quad  \forall S_i\in\mathcal{S}, \label{eq:gtsp-groups}
\\[0.1em]
&
\sum_{u\in N^-(v)} x_{uv} = \sum_{u\in N^+(v)} x_{vu},
\quad  \forall v\in V, \label{eq:gtsp-balance}
\\[0.1em]
&
x_{uv} \in \{0,1\},
\quad  \forall (u,v)\in E.\label{eq:gtsp-binary}
\end{alignat}}
\end{subequations}
Objective~\eqref{eq:gtsp-milp-obj} minimizes the total cost of the selected edges,
and
constraints~\eqref{eq:gtsp-root} and~\eqref{eq:gtsp-groups} 
ensure that the corresponding tour will start from~$r$ and cover every group $S_i\in\mathcal{S}$, respectively.\footnote{This formulation permits revisiting vertices but explicitly forbids revisiting edges. However, the latter is not limiting: \citet{mizutani2024leveraging} (Lemma~1) prove that any optimal \gip tour traverses each directed edge at most once.
Additionally, while outside the scope of this work, we note that this formulation can be extended to partial-coverage \gip~\cite{fu2019toward, mizutani2024leveraging}. See App.~\ref{sec:partial-cover}.}
Finally, constraints~\eqref{eq:gtsp-balance} enforce that every vertex in the subgraph induced by the selected edges has equal in-degree and out-degree.
As a result, the selected edges decompose into one or more edge-disjoint
directed cycles~\cite{bondy1979graph}.
Additional subtour-elimination constraints (SEC) are therefore required to
ensure that all selected edges belong to a single connected tour containing
the root.
Before introducing our SECs, we briefly review the role of LP-relaxations in Branch-and-Bound (BnB)~\cite{lawler1966branch}, and their interpretation for~\gip.

\niceparagraph{Linear-programming (LP) relaxation.}
In BnB, the MILP's feasible region   is recursively partitioned into a sequence of increasingly restricted subproblems. For each subproblem, an \emph{LP relaxation} of the MILP, obtained by omitting integrality constraints (i.e., turning~$x_{uv}\in \{0,1\}$ into $x_{uv}\in[0,1]$), is solved by an LP-solver. The objective value of this relaxed problem provides a lower bound for any feasible integer solution within the corresponding subproblem. 

\niceparagraph{Flow-based interpretation.}
In \gip, the LP relaxation admits a natural flow interpretation, where each variable $x_{uv}\in[0,1]$ is viewed as assigning a fractional amount of flow to edge $(u,v)$. 
Feasible LP solutions therefore correspond to routing fractional flow that satisfies group-coverage, motivating our flow-based \gip formulations.

\niceparagraph{Integrality gap.}
In addition to lower bounds obtained from LP relaxations, a BnB solver maintains feasible \emph{integer} solutions that provide upper bounds.
Convergence is achieved as these bounds progressively tighten.
The effectiveness of a MILP formulation is therefore governed by its
\emph{integrality gap}, defined as the difference between the LP optimum and the best integer-feasible solution.
Large integrality gaps lead to weak bounds and slow convergence in BnB,
highlighting the importance of strong subtour-elimination constraints~\cite{land2009automatic}.

\subsection{Subtour Elimination Constraints (SEC) for \gip} \label{gip-sec-poly}
Motivated by the flow-based interpretation of the LP relaxation, we introduce three families of subtour-elimination constraints (SEC), where the binary edge-selection variables $x_{uv}$ serve as capacities for a directed network, while additional continuous flow-variables based mechanisms are used to enforce global connectivity constraints.



%

\niceparagraph{Single-commodity flow (SCF).}
We consider a single flow originating from the root which is propagated along and consumed by any selected edges, thereby enforcing connectivity. This is inspired by similar approaches for TSP solvers~\cite{gavish1978travelling}.
Specifically, we introduce a nonnegative continuous flow variable $f_{uv} \ge 0$ for each directed edge $(u,v) \in E$, together with the following constraints:
\begin{subequations}\label{eq:SCF}
{\small
\begin{alignat}{1}
f_{uv} &\le M\cdot x_{uv},
\quad \forall (u,v)\in E,
\label{eq:SCF-a}
\\[0.1em]
\sum_{u\in N^-(v)} f_{uv}
- \sum_{u\in N^+(v)} f_{vu}
&= \sum_{u\in N^+(v)} x_{vu},
\quad \forall v\in V\setminus\{r\}.
\label{eq:SCF-b}
\end{alignat}}
\end{subequations}

Constraint~\eqref{eq:SCF-a} ensures flow is permitted only on selected edges,
where $M$ is chosen sufficiently large to accommodate any required flow. Following Lemma~2 of~\citet{mizutani2024leveraging}, which (tightly) bounds the length of an optimal tour by $2\cdot (|V|-1)$, we set $M = 2\cdot (|V|-1)$. 
Constraint~\eqref{eq:SCF-b} forms a flow-consumption rule, which eliminate subtours by requiring each traversal of a selected edge to consume one unit of flow. This is expressed from a vertex-centric perspective: A visit to a vertex $v$ is induced by selecting an incoming edge $x_{uv}$ for some $u \in N^-(v)$. Each such visit consumes one unit of flow, implying that the total incoming flow at $v$ exceeds the total outgoing flow by exactly one unit per visit.

\begin{lemma}[SCF constraints eliminate subtours]
    An optimal solution for the MILP formulation defined by constraints \eqref{eq:gip-baseline} and \eqref{eq:SCF} yields a single tour containing $r$.
\end{lemma}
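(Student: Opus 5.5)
Our plan is to show that every feasible solution $(x,f)$ of \eqref{eq:gip-baseline} together with \eqref{eq:SCF} already has its selected edge set $E_x:=\{(u,v)\in E : x_{uv}=1\}$ forming a single closed walk through $r$. Optimality is not really needed, except to note that an optimal solution exists and is feasible. By the balance constraints~\eqref{eq:gtsp-balance}, $E_x$ decomposes into edge-disjoint directed cycles, and by~\eqref{eq:gtsp-root} at least one selected edge leaves $r$. It therefore suffices to prove one claim: every selected edge lies in the same connected component of the subgraph $(V,E_x)$ as $r$. Once this holds, the component containing $r$ is connected and Eulerian (in-degree equals out-degree everywhere). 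An Euler circuit of it starting and ending at $r$ is then a single tour using every selected edge. Since the groups are covered by~\eqref{eq:gtsp-groups}, this tour is feasible for Definition~\ref{def:group-gipp}.

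We prove the claim by contradiction, summing the flow-consumption constraints~\eqref{eq:SCF-b} over a component that avoids the root. Suppose some selected edge lies in a component $C\subseteq V$ of $(V,E_x)$ with $r\notin C$. Let $C$ be taken as a weakly connected component, so that no selected edge has exactly one endpoint in $C$. Summing~\eqref{eq:SCF-b} over $v\in C$, flow on edges with both endpoints in $C$ cancels. The left-hand side becomes net inflow into $C$:
\[
\sum_{(u,v)\in\delta^-(C)} f_{uv}-\sum_{(v,u)\in\delta^+(C)} f_{vu}.
\]
The right-hand side is $\sum_{v\in C}\sum_{u\in N^+(v)}x_{vu}$, which counts the selected edges with tail in $C$ and is therefore at least $1$. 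Every edge crossing the cut has $x=0$, so by~\eqref{eq:SCF-a} its flow is $0$, and the left-hand side equals $0$. This gives $0\ge 1$, a contradiction.

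A second step checks that the flow constraints do not exclude legitimate single tours, so that the optimum of the augmented MILP is still the \gip optimum. Given a tour $\tau$ from $r$ with $m$ edges, traversing each directed edge at most once (valid by Lemma~1 of~\citet{mizutani2024leveraging}), set $f_e$ to the number of tour steps remaining after and including edge $e$, counting from its position. Then each non-root vertex receives exactly as many units as it passes on, plus one per visit, and $f_e\le m\le 2(|V|-1)=M$ for an optimal tour by Lemma~2 of~\citet{mizutani2024leveraging}. A vertex visited several times needs a little care: its flows add over the visits, and each visit contributes one unit of consumption, matching the out-degree term.

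The main obstacle is the bookkeeping in the cut argument. The right-hand side of~\eqref{eq:SCF-b} is out-degree rather than in-degree, and the root is exempt from~\eqref{eq:SCF-b}. Both points need checking, but because $r\notin C$ every vertex of $C$ carries the constraint, and balance makes the two degree counts equal anyway. The construction of a valid $f$ for a tour with revisits also requires some care.
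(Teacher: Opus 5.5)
Your proof is correct and is essentially the paper's argument: you sum the flow-consumption constraints \eqref{eq:SCF-b} over a root-free set $C$ of selected edges, and get a left-hand side of zero against a right-hand side of at least one. Taking $C$ to be a weakly connected component of $(V,E_x)$ means every crossing edge has $x=0$ and hence $f=0$ by \eqref{eq:SCF-a}, which makes rigorous the paper's informal ``net flow out of $C$ is zero'' step; your second step, constructing a feasible $f$ for a genuine tour, goes beyond what the lemma requires.
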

\begin{proof}\emph{(sketch)}
    By contradiction, assume that the optimal solution contains a directed subtour that does not visit the root $r$. Denote by $C\subseteq V$ the vertices visited by this subtour. 
    Summing constraint \eqref{eq:SCF-b} over all the vertices in $C$, the left-hand side telescopes to zero, since every unit of flow leaves a cycle-vertex enters another cycle-vertex, and the net-flow out of $C$ is zero. In contrast, the right-hand side sums to the number of edges in the cycle, which is strictly positive. This yields a contradiction,
showing that such a cycle cannot satisfy the flow constraints. 
The only vertex excluded from constraint~\eqref{eq:SCF-b} is the root, which is therefore allowed to exhibit a net flow imbalance, effectively acting as a flow source. Consequently, any feasible cycle must include the root.
\qed
\end{proof}

The SCF formulation augments the baseline with constraints~\eqref{eq:SCF}, introducing~$2|E|$ additional continuous variables and $O(|E|+|V|)$ constraints, and is the most compact MILP formulation we consider.
However, for the LP-relaxed problem, the Big-$M$ constraints in~\eqref{eq:SCF-a} allow substantial flow on weakly selected edges, enabling the LP relaxation to satisfy connectivity constraints without committing to specific edges.
This results in a large integrality gap, motivating tighter flow formulations that avoid Big-$M$ constants, which we introduce next.

\citet{mizutani2024leveraging} proposed an alternative subtour-elimination scheme based on continuous \emph{charge} variables, which enforces connectivity by maintaining a global charge balance with the root acting as a sink.
Although asymptotically as compact as SCF, this formulation induces a different LP relaxation.
We empirically compare their behavior within BnB in Sec.~\ref{sec:evaluation}.

\niceparagraph{Multi-commodity flow (MCF).}
To  overcome the large integrality gap that may be induced by constraint~\eqref{eq:SCF}, we consider the following SEC mechanism, inspired by related routing and network-design problems~\cite{claus1984new, wong1980integer}.
The key idea, depicted in Fig.~\ref{fig:flow-interp}, is to associate a distinct flow commodity with each group $i\in K$, and to require that one unit of flow be delivered from the root to a vertex covering that group. 
Formally, we introduce continuous flow variables
$f^i_{uv} \in [0,1]$ for each group $i \in K$ and edge $(u,v) \in E$.
Each group $i$ requires one unit of flow to originate at the root $r$.
\begin{subequations}\label{eq:MCF}
{\small
\begin{alignat}{2}
&
\sum_{v \in N^+(r)} f^i_{rv} \ge 1,
\quad && \forall i \in K,
\label{eq:MCF-a}
\\[0.1em]
&
\sum_{v \in S_i} \sum_{u \in N^-(v)} f^i_{uv}
- \sum_{v \in S_i} \sum_{u \in N^+(v)} f^i_{vu} \ge 1,
\quad && \forall i \in K,
\label{eq:MCF-b}
\\[0.1em]
&
\sum_{u \in N^-(v)} f^i_{uv}
= \sum_{u \in N^+(v)} f^i_{vu},
\quad && \forall i \in K,\; v \in V \setminus (S_i \cup \{r\}),
\label{eq:MCF-c}
\\[0.1em]
&
f^i_{uv} \le x_{uv},
\quad && \forall i \in K,\; (u,v) \in E.
\label{eq:MCF-d}
\end{alignat}}
\end{subequations}


In the MCF formulation, each group requires one unit of flow from the root to a covering vertex, ensuring that any cycle carrying flow is connected to the root by flow conservation~\eqref{eq:MCF-c}. Cycles without commodity flow are irrelevant and can be removed without increasing cost, yielding a single structure rooted at the start vertex.
By bounding flow directly with edge-selection variables~\eqref{eq:MCF-d}, MCF avoids Big-$M$ constraints and produces a much tighter LP relaxation than SCF.
However, this strength comes at a high cost: the formulation adds $2k|E|$ continuous variables, leading to tens of millions of variables in realistic instances~\cite{fu2019toward,fu2021computationally,mizutani2024leveraging}, which makes it impractical beyond small scales (App.~\ref{appendix:small-exp}).

%
%

Consequently, we next move on to suggest a formulation that retains this connectivity strength without introducing an explicit flow commodity for each group. 
While reducing the number of variables, it will introduce an exponential number of constraints. 
To avoid generating them explicitly, which will render the approach infeasible, we will take a lazy approach and generate them on demand.

\niceparagraph{Group-cutset formulation.} \label{gip-gcutset}
We present a subtour-elimination mechanism based on explicit connectivity constraints, inspired by cutset-based approaches for \algname{TSP}~\cite{applegate2001tsp}, which explicitly enforces root-to-group connectivity.
This formulation aims to achieve the structural strength of root-to-group connectivity enforcement found in multi-commodity flow SEC, while avoiding its prohibitive memory requirements, thereby enabling application to large-scale \gip instances. We achieve this by introducing an exponential number of vertex-cut constraints, that are \emph{lazily evaluated} within a Branch-and-Cut framework. 
The effectiveness of this approach relies on the fact that these \emph{group-cutset} constraints can be efficiently validated using network-flow based algorithms, as detailed in Sec.~\ref{sec:alg}.


Formally, for any group $S_i \in \mathcal{S}$, we examine all partitions of the vertex set $V$ into two disjoint subsets $R \subset V$ and $V \setminus R$ such that the root $r \in R$ and $R \cap S_i = \emptyset$.
Such a partition defines a vertex cut  between $R$ and its complement.
If a tour were to remain entirely within $R$, it would be impossible to visit any vertex in~$S_i$, and thus the corresponding POI $p_i$ could not be inspected. Therefore, any feasible \gip tour must include at least one edge crossing this vertex cut. Thus, we add constraints requiring the selected edges to cross every such cut~$R$, enforcing connectivity between the root and each group $S_i$.
Formally, let~$\mathcal{R}$ denote the family of vertex subsets that contain $r$ but exclude at least one group, i.e.,~$
\mathcal{R} := \left\{ R \subseteq V \;\middle|\; r \in R \;\text{and}\; \exists S_i \in \mathcal{S}
\text{ such that } R \cap S_i = \emptyset \right\}$.
For any~$R \subseteq V$, we define its outgoing edge set as
$\delta^{+}(R) := \{(u,v) \in E \mid u \in R,\; v \notin R\}$.
The baseline formulation~\eqref{eq:gip-baseline} is 
augmented with the following family of \emph{group-cutset constraints}, requiring that for any $R\in \mathcal{R}$, at least one selected edge  leaves
$R$:
\begin{equation}
\label{eq:group-cutset}
\textstyle \sum_{(u,v)\in \delta^{+}(R)} x_{uv} \ge 1, \qquad \forall R \in \mathcal{R}.
\end{equation}

The following lemma states that this constraint indeed eliminates subtours.

\begin{lemma}[Group-cutset constraints eliminate subtours]
\label{lem:group_cutset_sec}
Assume $c(e) > 0$, for all $e \in E$.\footnote{
If zero-cost edges are allowed, i.e., $c(e)\ge 0$, an optimal solution may
contain additional disconnected components of zero total cost.
Such components can be removed without affecting feasibility or optimality. 
} 
The MILP formulation defined by constraints \eqref{eq:gip-baseline} and \eqref{eq:group-cutset} yields a single tour containing $r$.  
\end{lemma}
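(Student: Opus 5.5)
We argue by contradiction and decompose an optimal integer solution into cycles. Let $x^*$ be an optimal solution of the MILP given by \eqref{eq:gip-baseline} and \eqref{eq:group-cutset}, and let $E^* := \{(u,v)\in E : x^*_{uv}=1\}$. By the balance constraints \eqref{eq:gtsp-balance}, the subgraph $H=(V,E^*)$ has equal in- and out-degree at every vertex. Hence each weakly connected component of $H$ is strongly connected and admits a closed Eulerian walk traversing each of its edges exactly once. Let $C_r$ be the component containing $r$; it is nonempty by \eqref{eq:gtsp-root}. The goal is to show $E^* = E(C_r)$. Once this holds, the Eulerian circuit of $C_r$ is a tour in $\mathcal{T}(G,r)$ of cost equal to the objective value.

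\textbf{Step 1: $C_r$ alone covers all groups.} Set $R := V(C_r)$. Every edge in $\delta^+(R)$ has $x^*=0$, because an edge of $E^*$ leaving $V(C_r)$ would place its head in the same weak component $C_r$. So the left-hand side of \eqref{eq:group-cutset} is $0$ for this $R$. Since $r\in R$, the constraint \eqref{eq:group-cutset} being satisfied forces $R\notin\mathcal{R}$, i.e., $R\cap S_i\neq\emptyset$ for every $S_i\in\mathcal{S}$. It remains to check that a vertex $v\in R\cap S_i$ is actually visited in the sense of \eqref{eq:gtsp-groups}, i.e., has a selected incoming edge. Every vertex of $C_r$ other than an isolated $r$ lies on an edge of $C_r$, and $r$ has a selected out-edge by \eqref{eq:gtsp-root}. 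By balance, every vertex of $C_r$ therefore has positive in-degree in $H$. So the Eulerian circuit of $C_r$ from $r$ visits every vertex of $R$ and covers all groups.

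\textbf{Step 2: other components are removable.} Suppose $H$ has another component $C'$ with at least one edge. Define $\hat x$ by zeroing the variables on $E(C')$. We verify that $\hat x$ is feasible:
\begin{itemize}
\item Balance \eqref{eq:gtsp-balance} holds, since we removed a union of entire balanced components.
\item The root constraint \eqref{eq:gtsp-root} is untouched.
\item The group constraints \eqref{eq:gtsp-groups} hold by Step 1, using edges of $C_r$ only.
\item For the cutset constraints, take any $R'\in\mathcal{R}$, and let $S_i$ be a group with $R'\cap S_i=\emptyset$. Pick $v\in V(C_r)\cap S_i$; then $r\in R'$ and $v\notin R'$. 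Since $C_r$ is strongly connected, a directed path in $C_r$ from $r$ to $v$ must use an edge of $\delta^+(R')$ with $\hat x=1$. Hence \eqref{eq:group-cutset} still holds.
\end{itemize}
Because $c(e)>0$, $\hat x$ has strictly smaller cost than $x^*$, contradicting optimality. So $E^*=E(C_r)$, and the solution is a single tour through $r$.

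\textbf{Expected obstacle.} The delicate point is Step 1: recognizing that the cutset family must be applied with $R$ equal to the root component itself, and then matching the notion of ``covered'' in \eqref{eq:gtsp-groups} (a selected incoming edge) to mere membership in $V(C_r)$. This is handled by the strong connectivity of balanced components. I would also note, as in the footnote, that with $c(e)\ge 0$ the same argument shows that extra components may be deleted without loss of optimality.
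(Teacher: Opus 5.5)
Your proof is correct and follows the paper's own argument. Like the paper, you restrict the solution to the edges of the root's component, check that this restriction is still feasible, and use $c(e)>0$ to contradict optimality. You also carry out the feasibility check that the paper leaves as ``easily shown'': you apply \eqref{eq:group-cutset} with $R=V(C_r)$ itself, and you use the fact that in a balanced digraph every weakly connected component is strongly connected.
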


\begin{proof}
Let $\{x^*\}$ be an optimal integral solution and set
$F := \{e \in E \mid x^*_e = 1\}$. We show that $F$ forms a single tour containing the root. 
Let $C_r \subseteq V$ denote the strongly-connected component of $r$ in the
subgraph induced by $F$, and let $F_r \subseteq F$ be the edges 
whose both ends are in~$C_r$. Note that $|C_r|\ge 2$ due to constraint~\eqref{eq:gtsp-root}. 
Define a new solution $\{\hat{x}\}$ by setting $\hat{x}_e = 1$ if $e \in F_r$ and
$\hat{x}_e = 0$ otherwise and note  that it can be easily shown that $\{\hat{x}\}$ is feasible.
\ignore{Constraint~\eqref{eq:gtsp-root} holds since $r \in C_r$ and therefore retains at
least one outgoing selected edge.
For each group $S_i \in \mathcal{S}$, the group-cutset constraints~\eqref{eq:group-cutset} imply that
$S_i$ is reachable from $r$ in the subgraph induced by $F$, and hence
$S_i \cap C_r \neq \emptyset$.
Thus, $\hat{x}$ satisfies the group-coverage constraints
\eqref{eq:gtsp-groups}.
Flow-balance constraints~\eqref{eq:gtsp-balance} remain satisfied, as removing
edges outside $C_r$ does not affect balance at vertices within $C_r$.
Finally, $\hat{x}$ is integral by construction.}

If $F \neq F_r$, then $\{\hat{x}\}$ removes at least one selected edge.
Since all edge costs are strictly positive, this strictly decreases the objective
value, contradicting the optimality of $\{x^*\}$.
Therefore, $F = F_r$, and all selected edges lie in the connected component of $r$. \qed 
\end{proof}

As $\mathcal{R}$ grows exponentially with the graph size, explicitly enumerating the group-cutset formulation would render the solver intractable. However, given a candidate solution, constraints~\eqref{eq:group-cutset} can be efficiently \emph{verified}, and for infeasible candidates, violated constraints can be identified using efficient flow-based algorithms. 
This property allows us to apply the \emph{Branch-and-Cut} (BnC) approach enabling a \emph{lazy} evaluation of the constraints \eqref{eq:group-cutset}, hence avoiding explicitly enumerating them. As a result, the group-cutset formulation is particularly suited for large-scale
instances, where explicit multi-commodity flow formulations are computationally
infeasible and compact formulations yield poor lower bounds.
To use BnC, we need to introduce several additional algorithmic building blocks which we now describe.

\ignore{
\rewrite{As $\mathcal{R}$ grows exponentially with the graph size, explicitly enumerating the group-cutset formulation would render the solver intractable. However, given a candidate solution, constraints~\eqref{eq:group-cutset} can be efficiently \emph{verified}, and for infeasible candidates, violated constraints can be identified using efficient flow-based algorithms. This property enables the development of an effective Branch-and-Cut solver, described in the following section.}
\former{Unfortunately, $\mathcal{R}$ contains an exponential number of subsets, which makes explicit representation of group-cutset formulation intractable.  Thus, we leverage the \emph{Branch-and-Cut} (BnC) approach which allows for a \emph{lazy} evaluation of the constraints \eqref{eq:group-cutset}. This requires the development of specialized algorithmic building blocks that allow an effective invocation for \gip problem. \adir{It sounds like we have missfortunetly designed a very bad formulation and now we develop heavy algorithms to compensate it. It is this way by design.}}
}

\section{Branch-and-Cut Solver for Group-Cutset formulation} \label{sec:alg}
The Branch-and-Cut (BnC) framework extends Branch-and-Bound by incorporating \emph{cutting planes}, i.e., additional constraints that are generated dynamically during the search.
{In the context of the group-cutset model introduced in Sec.~\ref{gip-gcutset}, the BnC framework allows for a \emph{lazy-constraint} evaluation. 
Specifically, the solver starts from a static partial formulation, and extends the formulation only with constraints that are relevant to the regions of the solution space explored by the solver, thus avoiding the overhead of enforcing constraints that are never active.} 
Such dynamic constraint enforcement is performed by a \emph{separation oracle}, which either identifies violated constraints or certifies that none exist. This allows BnC to operate on a compact formulation while retaining the strength of a much richer, potentially exponential constraint set. We present an oracle tailored to the group-cutset formulation in Sec. \ref{sep-oracle-design}. 

A complementary component of the BnC framework is a \emph{primal heuristic},
which transforms fractional LP relaxations into integer-feasible solutions
that serve as upper bounds in the search process.
While primal heuristics are standard in BnB-based MILP solvers, their
problem-specific design is especially important in lazy-constraint
formulations.
In such settings, generic heuristics employed by solvers such as
Gurobi~\cite{gurobi} may produce solutions that satisfy the current formulation
but violate constraints introduced later in the search.
We therefore introduce a problem-specific primal heuristic that explicitly
exploits the structure of \gip to provide valid solutions (Sec.~\ref{sec:heuristic-summary}).

\subsection{Group-Cutset Separation Oracles}
\label{sep-oracle-design}
During the BnC search, partial problem formulations are LP-relaxed and solved, providing solution candidates $\{x^c\}$. The separation oracle, given such candidate, determines whether one of the group-cutset constraints~\eqref{eq:group-cutset} is violated, i.e., there exists a set $R \in \mathcal{R}$ as defined in Sec.~\ref{gip-gcutset}, such that
$
\sum_{(u,v)\in \delta^{+}(R)} x^c_{uv} < 1.
$
If such a set exists, the corresponding constraint  is returned as a separating cut. We introduce two complementary separation oracles, along with a third hybrid oracle that combines their respective strengths.

\niceparagraph{Connectivity-based separation oracle.}
We first consider a fast separation oracle for verifying integral candidate solutions~$\{x^c\}$.
We construct a subgraph~$G^{c}=(V,E^{c})$ containing only the edges $(u,v)\in E$ for which $x^c_{uv}=1$. We then compute the strongly connected component (SCC) $R^c \subseteq V$ of the root~$r$ in $G^c$, i.e., $R^c$ contains any vertex $v\in V$ such that there exists a directed path from $r$ to $v$, and $v$ to $r$. For each group $S_i \in \mathcal{S}$, we check whether $R^c \cap S_i$ is empty.
If so, the set $R^c$ defines a violated group-cutset constraint, which is returned as a certificate. Otherwise, the oracle certifies that $\{x^c\}$ satisfies all group-cutset constraints.
As the complexity of the SCC operation is linear in the number of edges~\cite{cormen2009introduction} this oracle runs in
$O( |E|+|\mathcal{S}|\cdot|V|)$ time and guarantees that no invalid integral solution is ever accepted by the solver. Although this oracle is computationally efficient, it generates only a single violated constraint per integer candidate solution, which may make the  BnC solver struggle to develop meaningful problem representation.

\niceparagraph{Flow-based separation oracle.}
To strengthen constraint generation, we introduce a separation oracle applicable to both fractional and integral solutions, based on solving an $s$--$t$ max-flow problem (Alg.~\ref{alg:fb-check}).
The key idea is to interpret the fractional values $x^c_{uv}$ as capacities on directed edges and test whether the resulting network can route at least one unit of flow from the root $r$ to a vertex covering a given group $S_i$.
Under integrality of~$x^c_{uv}$, such a flow corresponds to a path composed of selected edges.
Formally, for each group $S_i \in \mathcal{S}$, we compute the maximum flow from $r$ to $S_i$, as illustrated in Fig.~\ref{fig:flow-interp}.
If the flow value is smaller than one, then no feasible solution can connect $r$ to $S_i$ using the current edge selections.
By the max-flow min-cut theorem~\cite{dantzig2003max}, this implies the existence of a cut separating $r$ from $S_i$ with total capacity less than one, yielding a violated group-cutset constraint that can be added as a separating inequality.
While effective at tightening LP relaxations, this oracle is computationally expensive, as fully verifying the validity of a candidate solution requires solving a max-flow problem for each group.
Using standard algorithms, Alg.~\ref{alg:fb-check} runs in $O(|\mathcal{S}|\cdot |V|^2\cdot |E|)$ time~\cite{ahuja1994network,edmonds1972theoretical}, motivating its use only in a limited manner within a hybrid approach.

{\small 
\begin{algorithm}[t]
\caption{Flow-based separation oracle ($\{x^c\}$, $G=(V,E)$, $\mathcal{S}$, $r$)}
\label{alg:fb-check}
\begin{algorithmic}[1]

\State Initialize an empty set $\mathcal{C}$ of cuts
\For{each group $S_i \in \mathcal{S}$}

    \State Define flow network $\mathcal{N}_i=(V_i, E_i, \kappa)$:
    \State \hspace{\algorithmicindent} $V_i=V \cup \{t\}$ \Comment{$t$ is an auxiliary sink vertex for $S_i$}
    \State \hspace{\algorithmicindent} Generate auxiliary edges $A_i \gets \{(v,t): v\in S_i\}$ with $\kappa(v,t)\gets 1$. 
    \State \hspace{\algorithmicindent} $E_i=E \cup A_i$.
    \State \hspace{\algorithmicindent} Update capacities for $E$ edges  $\kappa(u,v)\gets x^c_{uv}$ for any $(u,v)\in E$. 
    \State Compute a minimum $r$--$t$ cut in $\mathcal{N}_i$: $(R_i, (V\cup\{t\})\setminus R_i)$.
    \If{$\sum_{(u,v)\in\delta^{+}(R_i)} x^c_{uv} < 1$}
        \State Add the violated constraint $\sum_{(u,v)\in\delta^{+}(R_i)} x_{uv} \ge 1$ to $\mathcal{C}$.
    \EndIf
\EndFor

\State \Return $\mathcal{C}$
\end{algorithmic}
\end{algorithm}}

\niceparagraph{Combined separation oracle.}
The two prior oracles offer complementary strengths.
The connectivity-based oracle is computationally efficient and guarantees
correctness by validating integral candidate solutions. However, it can generate
at most a single separating constraint per integral solution, which limits its
ability to substantially tighten the relaxation.
In contrast, the flow-based oracle is capable of generating multiple effective
separating constraints, including at fractional solutions, but its computational cost prevents it from being used to validate or rule out candidate solutions.
We therefore propose a combined separation oracle that applies connectivity-based checks to all integer-feasible candidates and selectively applies flow-based separation to a \emph{uniformly sampled subset} of groups at fractional solutions.
This balances relaxation strength and computational cost while preserving correctness.
Importantly, sampling introduces no loss of correctness: although violated constraints for unsampled groups may be missed at fractional solutions, any integral solution is ultimately validated by the connectivity oracle. As shown in Sec.~\ref{sec:oracle-eval}, this combined approach yields strong cuts at low computational cost, significantly improving convergence while maintaining correctness guarantees. 
Furthermore, as we will demonstrate empirically, the method is insensitive  to the exact value of the group sample size.

\subsection{Primal Heuristic for \gip}
\label{sec:heuristic-summary}

We present a problem-specific primal heuristic for \gip.
While applicable to all SEC variants (Sec.~\ref{sec:milp-for-gip}), it is particularly important for lazy-constraint formulations.
In such settings, generic MILP heuristics (e.g.~\cite{bertacco2007feasibility,danna2005exploring,fischetti2005feasibility}) may generate solutions that satisfy only the currently enforced constraints, yet violate constraints that will be introduced later by the separation oracle.
This issue is most pronounced in the early stages of the search, when the formulation is still highly partial, making incumbent generation unreliable and motivating a heuristic that is explicitly aware of the full \gip structure.

Our heuristic is tightly integrated with the BnC search and exploits information from the current LP relaxation.
At a high level, it consists of three phases.
First, edge costs are modified using the fractional LP solution $\{x_e\}_{e \in E}$ by defining
$c_n(e) := c(e)\cdot(1 - x_e)$,
which biases the search toward edges favored by the relaxation.
Second, a group-covering tree rooted at the start vertex is greedily constructed in the discounted graph, incrementally connecting the root to a representative vertex of an uncovered group while ensuring connectivity.
Third, the resulting tree is augmented and traversed to produce a valid \gip tour.
Rather than doubling tree edges, we add a minimum-weight matching over odd-degree tree vertices, yielding an Eulerian subgraph whose traversal produces a lower-cost tour~\cite{christofides2022worst}.
This heuristic reliably generates high-quality incumbent solutions early in the BnC search, leading to substantially improved upper bounds and faster convergence, and may be considered as a \emph{problem-aware rounding mechanism} applied on fractional LP solutions.
A complete algorithmic description is provided in App.~\ref{appendix:gip-heuristic}, and an empirical evaluation appears in App.~\ref{heuristic-eval}.

\ignore{
\section{Scalable MILP Formulation using Lazy Constraints} \label{sec:cutset}
\oren{motivation\\}
The multi-commodity flow formulation demonstrates that enforcing explicit root-to-group connectivity yields tight LP relaxations, but its explicit
representation is prohibitively large for realistic problem instances. We therefore seek a formulation that retains this connectivity strength without
introducing an explicit flow commodity for each group.

\oren{approach\\}
In this section, we present a new subtour-elimination formulation for \gip based on \emph{group cutsets}. This formulation enforces root-to-group connectivity
through an exponential family of cut constraints, yielding a strong MILP relaxation without an explicit multi-commodity representation.
Since these constraints are too numerous to be instantiated explicitly, we solve the resulting model
using a \emph{Branch-and-Cut} (BnC) approach~\cite{padberg1991branch,
mitchell2002branch}, in which violated constraints are generated on demand during the optimization.

\oren{Useage\\}
At a high level, BnC alternates between solving relaxations of the MILP and incrementally strengthening them by adding violated constraints.
Rather than enforcing all group-cutset constraints upfront, the solver starts without them, and iteratively identifies and inserts only those constraints that are necessary to rule out invalid, subtour-based solutions.
This approach retains the connectivity strength of flow-based formulations such as MCF while avoiding their prohibitive size.

\oren{alg components\\}
The resulting solver is built around two key algorithmic components. First, a \emph{separation oracle} examines candidate solutions and identifies violated group-cutset constraints when they exist. Second, a problem-specific \emph{primal heuristic}  is invoked to efficiently construct feasible inspection tours, enabling rapid improvement of upper bounds during the search. Together, these components enable an efficient and scalable BnC solver for \gip that preserves the strength of flow-based formulations while remaining practical for large-scale inspection planning problems. \kiril{it would be worth briefly mentioning, potentially in the appendix, when those components are invoked. }

\subsection{Group-Cutset Subtour Elimination Constraints}
\label{gip-gcutset}
To develop a lazy formulation for \gip, we adapt classical cutset-based subtour-elimination constraints from TSP~\cite{applegate2001tsp} to the group-covering setting.
Specifically, we consider vertex cuts that separate the root from an entire group.
Formally, for any group $S_i \in \mathcal{S}$, we examine all partitions of the vertex set $V$ into two disjoint subsets $R \subset V$ and $V \setminus R$ such that the root $r \in R$ and $R \cap S_i = \emptyset$.
Such a partition defines a cut between $R$ and its complement.
If a tour were to remain entirely within $R$, it would be impossible to visit any vertex in $S_i$, and thus the corresponding POI would not be inspected.

Therefore, any feasible \gip tour must include at least one edge crossing this cut, connecting a vertex in $R$ to a vertex in $V \setminus R$.
By requiring the selected edges to cross every such root-group cut, we enforce
connectivity between the root and each group.
As we show below, these constraints force the formation of a single tour visiting $r$.
Formally, let $\mathcal{R}$ denote the family of vertex subsets that contain the root but exclude at least one group:
\begin{equation}
\mathcal{R} := \left\{ R \subseteq V \;\middle|\; r \in R \;\text{and}\; \exists S_i \in \mathcal{S}
\text{ such that } R \cap S_i = \emptyset \right\}.
\end{equation}
For any vertex subset $R \subseteq V$, we denote its outgoing edge set as
\[
\delta^{+}(R) := \{(u,v) \in E \mid u \in R,\; v \notin R\}.
\]
\adir{V or $\mathcal{V}$}

The baseline formulation~\eqref{eq:gtsp-milp-obj}--\eqref{eq:gtsp-binary} is then
augmented with the following family of group-cutset constraints:
\begin{equation}
\label{eq:group-cutset}
\sum_{(u,v)\in \delta^{+}(R)} x_{uv} \ge 1, \qquad \forall R \in \mathcal{R}.
\end{equation}
Namely, for any vertex set $R$ that contains the root $r$ but does not intersect
at least one group $S_i \in \mathcal{S}$, at least one selected edge must leave
$R$.
These constraints enforce root-group connectivity, but do not explicitly impose
that the selected edges form a single tour: in principle, multiple disconnected
components may still satisfy~\eqref{eq:group-cutset}.
The following lemma shows that, at optimality, such disconnected components
cannot occur.

\begin{lemma}[Group-cutset constraints eliminate subtours]
\label{lem:group_cutset_sec}
Assume $c(e) > 0$, for all $e \in E$.\footnote{
If zero-cost edges are allowed, i.e., $c(e)\ge 0$, an optimal solution may
contain additional disconnected components of zero total cost.
Such components can be removed without affecting feasibility or optimality,
yielding an optimal solution that satisfies the lemma.
}
Consider the baseline \gip MILP~\eqref{eq:gtsp-milp-obj}-\eqref{eq:gtsp-binary}
augmented with the group-cutset constraints~\eqref{eq:group-cutset}.
Then any optimal integral solution forms a single
connected component containing the root $r$. \kiril{Shouldn't we care about tours, not CCs?}
\end{lemma}

\begin{proof}
Let \adir{Notice notations - $\{\}$ when needed}$x^*$ be an optimal integral solution and let
$F := \{e \in E \mid x^*_e = 1\}$ be its selected directed edges.
Let $C_r \subseteq V$ denote the (undirected) connected component of $r$ in the
subgraph induced by $F$, and let $F_r \subseteq F$ be the edges incident only to
vertices in $C_r$.
Define a new solution $\hat{x}$ by setting $\hat{x}_e = 1$ if $e \in F_r$ and
$\hat{x}_e = 0$ otherwise. We show that $\hat{x}$ is feasible.
\adir{From here on - necessary?}
Constraint~\eqref{eq:gtsp-root} holds since $r \in C_r$ and therefore retains at
least one outgoing selected edge.
For each group $S_i \in \mathcal{S}$, the group-cutset constraints~\eqref{eq:group-cutset} imply that
$S_i$ is reachable from $r$ in the subgraph induced by $F$, and hence
$S_i \cap C_r \neq \emptyset$.
Thus, $\hat{x}$ satisfies the group-coverage constraints
\eqref{eq:gtsp-groups}.
Flow-balance constraints~\eqref{eq:gtsp-balance} remain satisfied, as removing
edges outside $C_r$ does not affect balance at vertices within $C_r$.
Finally, $\hat{x}$ is integral by construction.

If $F \neq F_r$, then $\hat{x}$ removes at least one selected edge.
Since all edge costs are strictly positive, this strictly decreases the objective
value, contradicting the optimality of $x^*$.
Therefore, $F = F_r$, and all selected edges lie in the connected component of
$r$.
\end{proof}

\begin{corollary}
\label{thm:MILP}
Optimal solutions to the MILP defined by
\eqref{eq:gtsp-milp-obj}-\eqref{eq:gtsp-binary} augmented with
\eqref{eq:group-cutset} correspond to optimal solutions of \gip.
\end{corollary}

\subsection{Group-Cutset Separation Oracles}
\label{sep-oracle-design}
While the MILP~\eqref{eq:gtsp-milp-obj}-\eqref{eq:gtsp-binary} augmented with group-cutset constraints~\eqref{eq:group-cutset} accurately models \gip (Cor.~\ref{thm:MILP}), the number of such constraints is exponential in $|V|$. Explicitly enumerating all group-cutset constraints therefore yields an intractably large MILP.
We address this challenge using a lazy-constraint branch-and-cut framework,\adir{clarify BnC, BnB, lazy constraints.} in which group-cutset constraints are generated on demand.
Specifically, whenever a candidate solution violates an inequality of the form~\eqref{eq:group-cutset}, the corresponding constraint is added to the model.
This process is repeated throughout the optimization and thus requires efficient separation oracles.

Formally, given a candidate solution $x^c$, we seek to determine whether there exists a set $R \subset V$ with $r \in R$ and $R \cap S_i = \emptyset$ for some group $S_i \in \mathcal{S}$ such that
$
\sum_{(u,v)\in \delta^{+}(R)} x^c_{uv} < 1.
$
If such a set exists, the corresponding group-cutset constraint is violated and can be returned as a separating cut.

\adir{Still not as good as it can be - try to emphasis the conceptual step between oracle 1 and 2 alone, to the combined oracle.}
We employ two complementary separation oracles: a fast oracle for integral candidate solutions, based on graph connectivity, and a stronger oracle for fractional solutions, based on network flow. We begin with the connectivity oracle, which serves as the correctness validation backbone of our BnC scheme.

\paragraph{Connectivity oracle.}
When the candidate solution $x^c$ is integral, we apply a separation oracle based on graph connectivity (Alg.~\ref{alg:cc-check}).
We first construct a subgraph $G^c=(V,E^c)$ of the original graph $G$, 
where vertices $u$ and $v$ are connected if at least one of the directed edges $(u,v)$ or $(v,u)$ is selected.

We then compute the connected component $R^c \subseteq V$ of the root $r$ in $G^c$ (Line~2). For each group $S_i \in \mathcal{S}$, we check whether $R^c \cap S_i$ is empty (Lines~3-4).
If so, the set $R^c$ defines a violated group-cutset constraint, which is returned as a certificate (Line~5).
If all groups intersect $R^c$, the oracle certifies that $x^c$ satisfies all group-cutset constraints (Line~6).

The connectivity oracle runs in
$O(|\mathcal{S}|\cdot|V| + |E|)$ time \adir{verify} and guarantees that no invalid integer-feasible solution is ever accepted by the solver.

\begin{algorithm}[t]
    \caption{Connectivity-based separation oracle \adir{polish}}
    \label{alg:cc-check}
    \begin{algorithmic}[1]
        \Ensure $\{x^c\}$; $G = (V,E)$; $\mathcal{S}$
        \Require $\forall e \in E:$ $x^c(e) \in \{0,1\}$
        
        \State Set $G_c := (V, E_c)$, where vertices $u$ and $v$ are connected by an undirected edge in $E_c$ if at least one of the directed edges $(u,v)$ or $(v,u)$ satisfies $x^c_{uv} = 1$.

        \State Set $R^c=\{v \in V:\exists \text{ path between $v$ and $r$ in $G^c$}\} $.
        \For{\textbf{each} $S_i \in \mathcal{S}$}
            \If {$R^c\cap S_i = \emptyset$}
                \State \Return constraint $\sum_{(u,v)\in\delta^+(R^c)} x_{uv} \ge 1$
            \EndIf
        \EndFor
        
        \State \Return \texttt{Valid Solution}
        
    \end{algorithmic}
\end{algorithm}

\paragraph{Flow oracle.}
When the candidate solution $x^c$ contains fractional values, we apply a separation oracle based on solving an $s$-$t$ max-flow problem (Alg.~\ref{alg:fb-check}).
Unlike the connectivity oracle, the flow oracle is not used to validate feasibility, but rather to generate strong cutting planes that tighten the LP relaxation.
The key idea is to interpret the fractional values $x^c_{uv}$ as capacities on the directed edges of the graph, and to test whether this capacitated network
can support a flow of at least one unit from the root $r$ to a vertex covering a given group $S_i$.
The threshold value $1$ is chosen to match the semantics of the group-cutset constraints~\eqref{eq:group-cutset}, which require that at least one selected edge crosses any root-group cut.
Formally, for each group $S_i \in \mathcal{S}$, we construct a flow network and compute the maximum flow from $r$ to $S_i$.
If the value of this flow is strictly smaller than $1$, then no feasible integral solution can route a path from $r$ to that group using the currently selected edges.
By the max-flow min-cut theorem~\cite{dantzig2003max}, this implies the existence of a vertex cut $R$ separating $r$ from $S_i$ whose total capacity is less than $1$.
Such a cut corresponds directly to a violated group-cutset constraint, which can be added to the MILP as a separating inequality.

Compared to the connectivity oracle, the flow oracle can identify violated constraints already at fractional solutions, and therefore produces substantially stronger cuts. \adir{avoid this terminology..}
However, solving a max-flow problem for each group is computationally expensive, which limits the practicality of using this oracle exhaustively.

Note that Alg.~\ref{alg:fb-check} can be computed in $O(\vert \mathcal{S}\vert \cdot  \vert V\vert^2 \cdot \vert E \vert)$ time using the shortest augmenting path max-flow algorithm \cite{ahuja1994network, edmonds1972theoretical}.

\paragraph{Combined separation oracle.}
The above oracles offer complementary advantages.
The connectivity oracle is extremely efficient and guarantees correctness by certifying all integer-feasible solutions, but it can only generate cuts from integral structures and therefore tends to produce relatively weak
constraints.
The flow oracle, in contrast, is capable of generating strong cuts \adir{avoid this terminology..} from fractional solutions, but is too computationally expensive when applied to all
groups.

We therefore combine the two oracles into a single separation strategy that retains correctness while enabling scalable strengthening of the LP relaxation.
In the combined oracle, the connectivity oracle is used to validate all integer-feasible candidate solutions, ensuring that no invalid inspection plan is ever accepted.
The flow oracle is used selectively to strengthen the formulation by generating high-quality cuts from fractional solutions.

To make the flow-based separation scalable, we apply it only to a \emph{uniformly sampled subset of groups} $\mathcal{S}_{\rm sample} \subset \mathcal{S}$.
This sampling introduces no loss of correctness: although the flow oracle may miss violated constraints for unsampled groups, any integer-feasible solution
is ultimately validated by the connectivity oracle.
In practice, this approach yields a small number of strong cuts at low computational cost, significantly improving convergence while maintaining theoretical correctness. In Sec.~\ref{oracle-eval}, we empirically demonstrate that the combined oracle substantially outperforms either oracle in isolation.

\subsection{Inspection Plan Heuristic}
\label{sec:heuristic-summary}
To efficiently generate high-quality incumbent solutions within the Branch-and-Cut (BnC) framework, we employ a problem-specific primal heuristic tailored to \gip. Such a heuristic is particularly important in lazy-constraint formulations: while fully instantiated MILPs
enable modern solvers such as Gurobi~\cite{gurobi} to exploit powerful generic heuristics~\cite{bertacco2007feasibility,fischetti2005feasibility,danna2005exploring}, lazy-constraint models operate on incomplete formulations. As a result, off-the-shelf solver
heuristics may produce solutions that satisfy the currently active constraints but violate those introduced later in the search.
Our heuristic is designed to integrate tightly with the BnC process and to exploit information revealed by the current LP relaxation. At a high level, it
proceeds in three conceptual phases.

First, the heuristic leverages the fractional LP solution by modifying edge costs according to their fractional values. Edges that appear with high fractional weight in the LP relaxation are discounted, biasing subsequent computations toward structures already favored by the relaxation. This step can be viewed as a problem-aware LP-rounding mechanism that guides the heuristic toward promising regions of the solution space.
Second, using the discounted graph, the heuristic constructs a group-covering tree rooted at the start configuration. Inspired by Steiner Tree heuristics, the tree incrementally connects the root to at least one vertex from each POI group, ensuring full coverage while maintaining a compact structure. This phase enforces coverage and connectivity, but does not yet impose tour structure.
Finally, the group-covering tree is transformed into a valid \gip tour.
The tree is augmented to become Eulerian, either via minimum-weight perfect matching or a faster greedy matching strategy, and an Eulerian traversal is extracted. The resulting tour is a feasible inspection plan that is consistent with the solver’s current relaxation.

Together, these steps enable the heuristic to produce high-quality incumbent solutions early in the BnC search, substantially improving upper bounds and accelerating convergence. A complete algorithmic description of the heuristic, along with an empirical evaluation demonstrating its advantages over generic solver heuristics and prior approaches, is provided in App.~\ref{gip-heuristic}.
}

\section{Experimental Results}
\label{sec:evaluation}
We empirically evaluate how different MILP formulations and their associated algorithmic components for \gip trade off scalability and solution-quality guarantees.
We report two quantities that BnC solvers maintain  throughout the search: an upper bound $c_{\rm UB}$, given by the cost of the best \emph{incumbent} feasible solution, and a lower bound $c_{\rm LB}$, obtained from LP relaxations of the MILP. 
Another measure we report is the \emph{optimality gap}, defined as $\mathrm{Gap}:= 100 \cdot 
(c_{\rm{UB}} - c_{\rm{LB}}) / c_{\rm{UB}}$, 
which quantifies the practical tightness of the solver’s near-optimality certification. This measure is particularly important given the provable hardness of approximating \gip (App.~\ref{sec:Approx-bounds}).


\ignore{
solver performance 
is best assessed not only by the quality of the \emph{incumbent} feasible solution (upper bounds), but also by the tightness of provable lower bounds.
Accordingly, we emphasize the \emph{optimality gap}
as the principal performance metric throughout our experiments, defined as
\begin{equation}
    \mathrm{Gap}\% = 100 \cdot \frac{\mathrm{Incumbent} - \mathrm{LowerBound}}{\mathrm{Incumbent}}.
\end{equation}
}

We conduct our evaluation by progressing from medium-scale real-world benchmarks to larger-scale controlled simulated experiments, and finally to
targeted ablation studies, in order to examine the effects of formulation and solver design.
Across all settings, the proposed \algname{Group-Cutset} formulation consistently produces stronger lower bounds than the other tested formulations, and exhibits slower performance degradation on larger instances.
We complement these results in App.~\ref{appendix:complement-eval} with an
evaluation of small-scale instances, where \algname{Group-Cutset} continues to perform competitively. 
The appendix further includes an ablation study of primal-heuristic design.


The experiments were conducted on a  laptop with an Intel Core Ultra~9 185H CPU and 64\,GB RAM, without GPU acceleration or explicit parallelization beyond that provided by the MILP solver (Gurobi~~\cite{gurobi}). Algorithms described in Sec.~\ref{sec:alg} are implemented in Python, interfacing with Gurobi’s \Cpp solver core via the \texttt{gurobipy} package. Our code is publicly available in the
\href{https://github.com/adirmorgan/GraphInspectionPlanning}{\gip  repository}.

\subsection{Evaluation Scenarios}
Our evaluation uses both real-world inspection datasets and controlled simulated scenarios, to study solver behavior across varying graph sizes and numbers of POIs.
The \texttt{CRISP} scenario~\cite{anderson2017continuum,mahoney2016reconfigurable}, adapted from~\cite{fu2019toward}, models medical inspection with a continuum robot tasked with inspecting 4{,}203 POIs in a confined anatomical cavity, yielding \gip instances with dense and highly overlapping coverage groups.
The \texttt{Bridge} scenario, adapted from~\cite{fu2021computationally}, considers aerial inspection of 3{,}346 POIs distributed across a large structure, requiring inspection plans that connect distant regions of the roadmap.
\texttt{Controlled} scenarios complement these benchmarks, enabling systematic evaluation across a wide range of problem scales using a simplified planar point-robot simulator (App.~\ref{appendix:simulator}).
Experiment scenarios are illustrated in Fig.~\ref{fig:eval-scenarios}.

\begin{figure}
    \centering
    \begin{subfigure}[b]{0.32\linewidth}
        \centering
        \includegraphics[width=\linewidth]{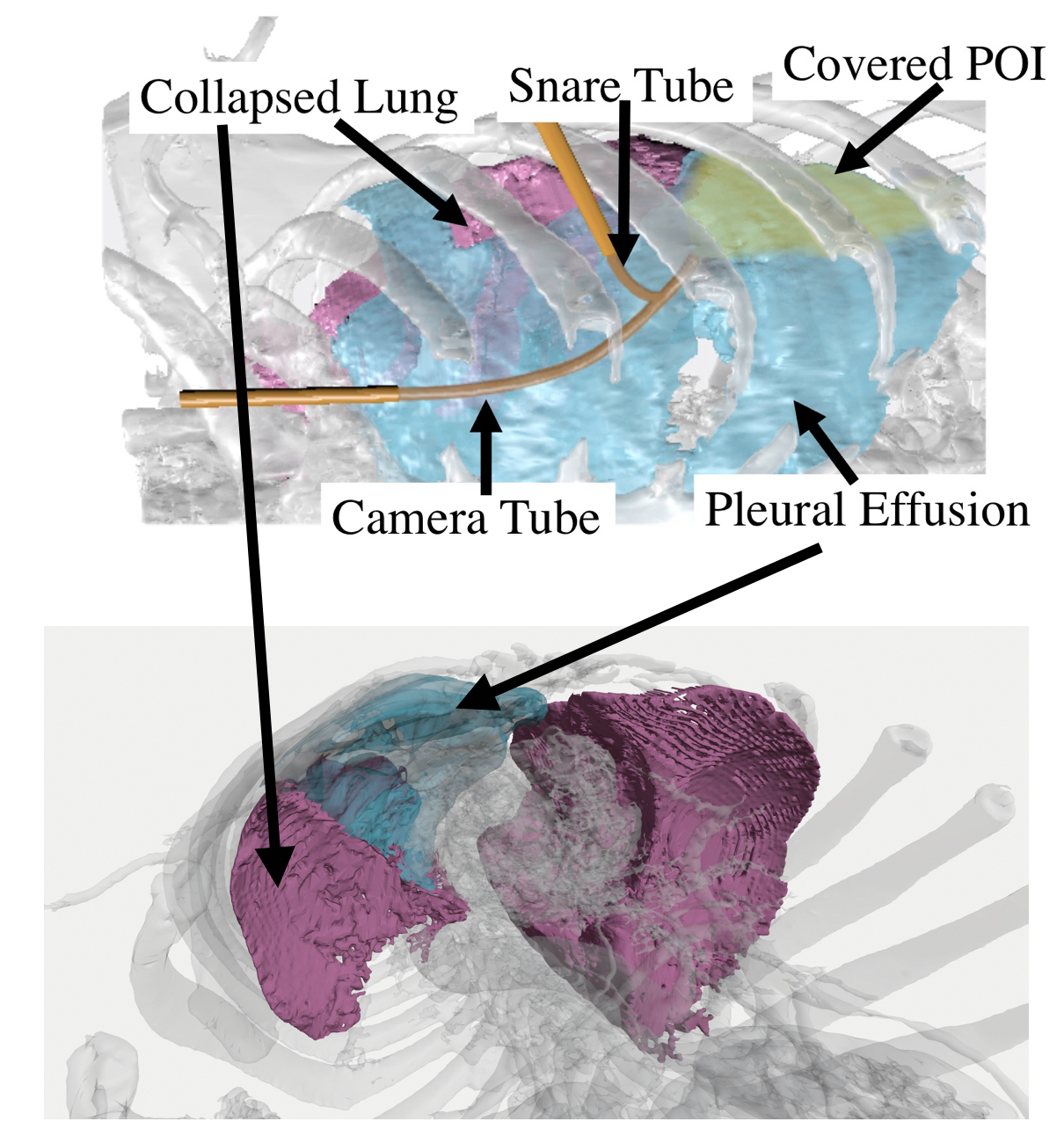}
        \caption{\texttt{CRISP}}
    \end{subfigure}\hfill
    \begin{subfigure}[b]{0.32\linewidth}
        \centering
        \includegraphics[height=4cm, width=\linewidth, keepaspectratio]{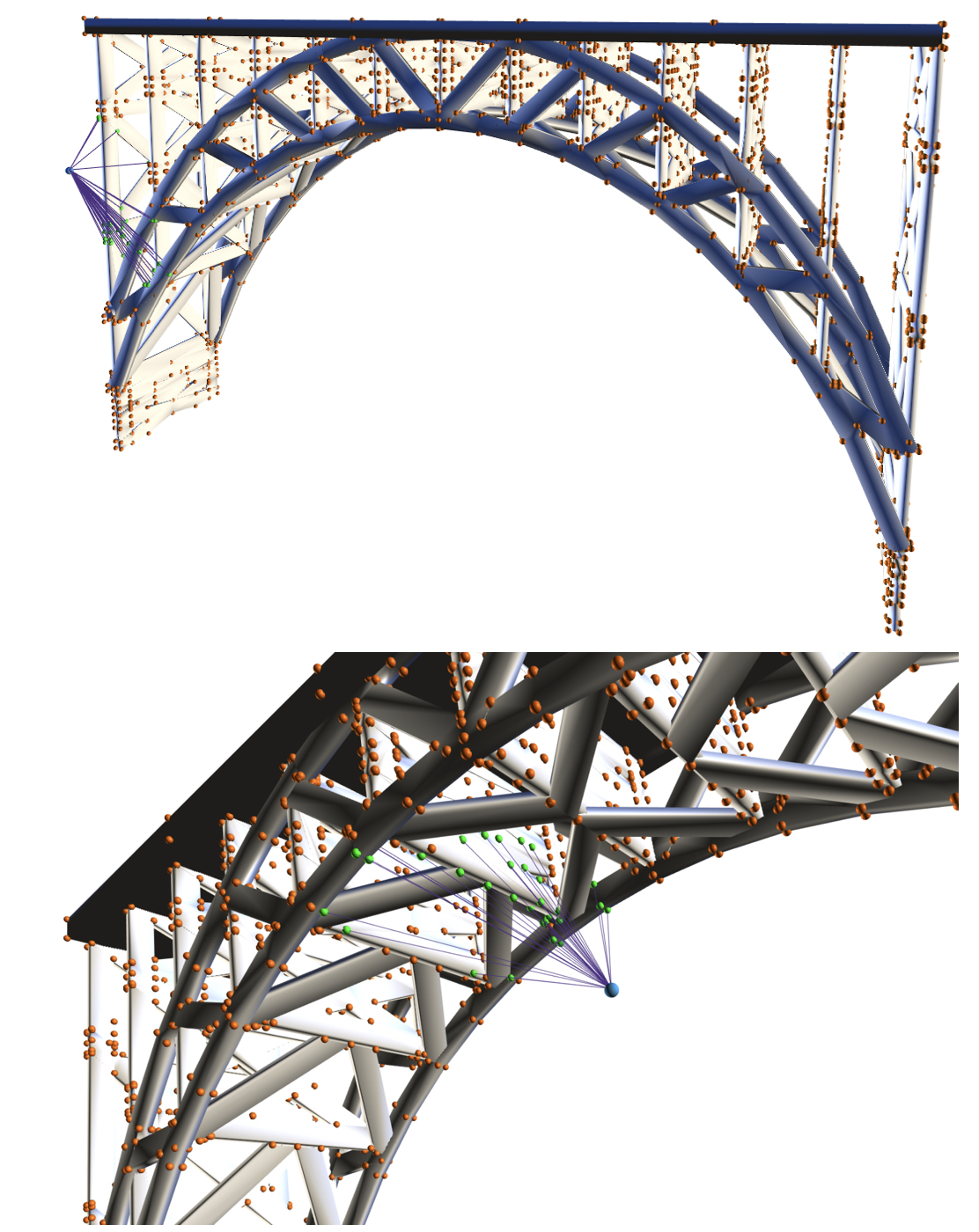}
        \caption{\texttt{Bridge}}
    \end{subfigure}\hfill
    \begin{subfigure}[b]{0.33\linewidth}
        \centering
        \includegraphics[height=5cm, width=\linewidth, keepaspectratio]{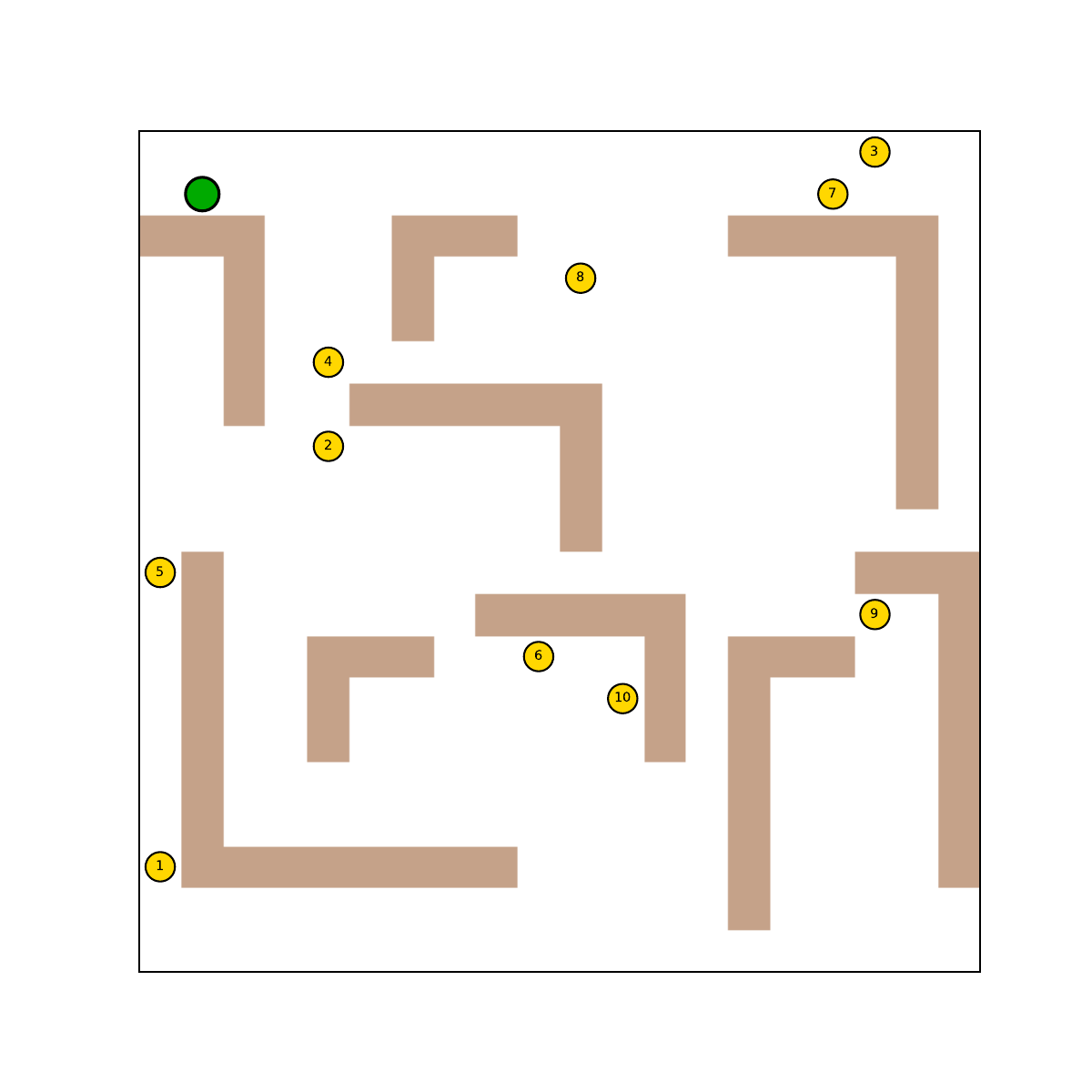}
        \caption{\texttt{Controlled}}
    \end{subfigure}
    
    \caption{Experimental evaluation scenarios. (a) CRISP robot, composed of a snare-tube and a camera-tube, inside the cavity of a patient's lungs. (b) Drone inspecting POIs marked on a bridge structure. (c) Planar point robot in a maze, starting at the green point and tasked with inspecting all POIs (yellow).}
    \label{fig:eval-scenarios}
\end{figure}

\subsection{Comparative Evaluation of MILP Formulations}\label{sec:form-comp}
We compare the performance of the different MILP formulations, i.e., the baseline MILP detailed in Sec.~\ref{gip-form} with the three different SEC detailed in Sec.~\ref{gip-sec-poly}, which we refer to as
\algname{SCF}, \algname{MCF} and \algname{Group-Cutset}
together with the charge-variables based formulation of \citet{mizutani2024leveraging}, which we refer to as \algname{Charge}.
The compact formulations (\algname{SCF}, \algname{Charge}) are solved within the BnB framework
while the \algname{Group-Cutset} formulation is solved using the BnC framework with the combined separation oracle using a group sample size of $100$, accompanied by our developed \gip primal heuristic, elaborated in App.~\ref{appendix:gip-heuristic}.
%
For each variant, we report $c_{\rm{UB}},c_{\rm{LB}}$, and the optimality gap $\rm{Gap}$ as a function of each algorithm's  running time. 
Due to its explicit large representation, \algname{MCF} formulation exceeds available memory at scales evaluated here, and is therefore omitted from these experiments and is discussed on App.~\ref{appendix:small-exp}.

\niceparagraph{Medium-size Instances.} We begin with medium-sized 
scenarios (\texttt{CRISP} and \texttt{Bridge}), with a time limit of $1{,}000$ seconds per instance.
Roadmaps are constructed using IRIS-CLI~\cite{fu2021computationally} with
$1{,}000$ and $2{,}000$ vertices, yielding graphs with over $20{,}000$ and
$40{,}000$ edges, respectively.
Results are shown in Fig.~\ref{fig:realResults-Both}.

\begin{figure}[!ht]
     \centering
     \begin{subfigure}[b]{1\linewidth}
         \centering
         \includegraphics[width=\textwidth]{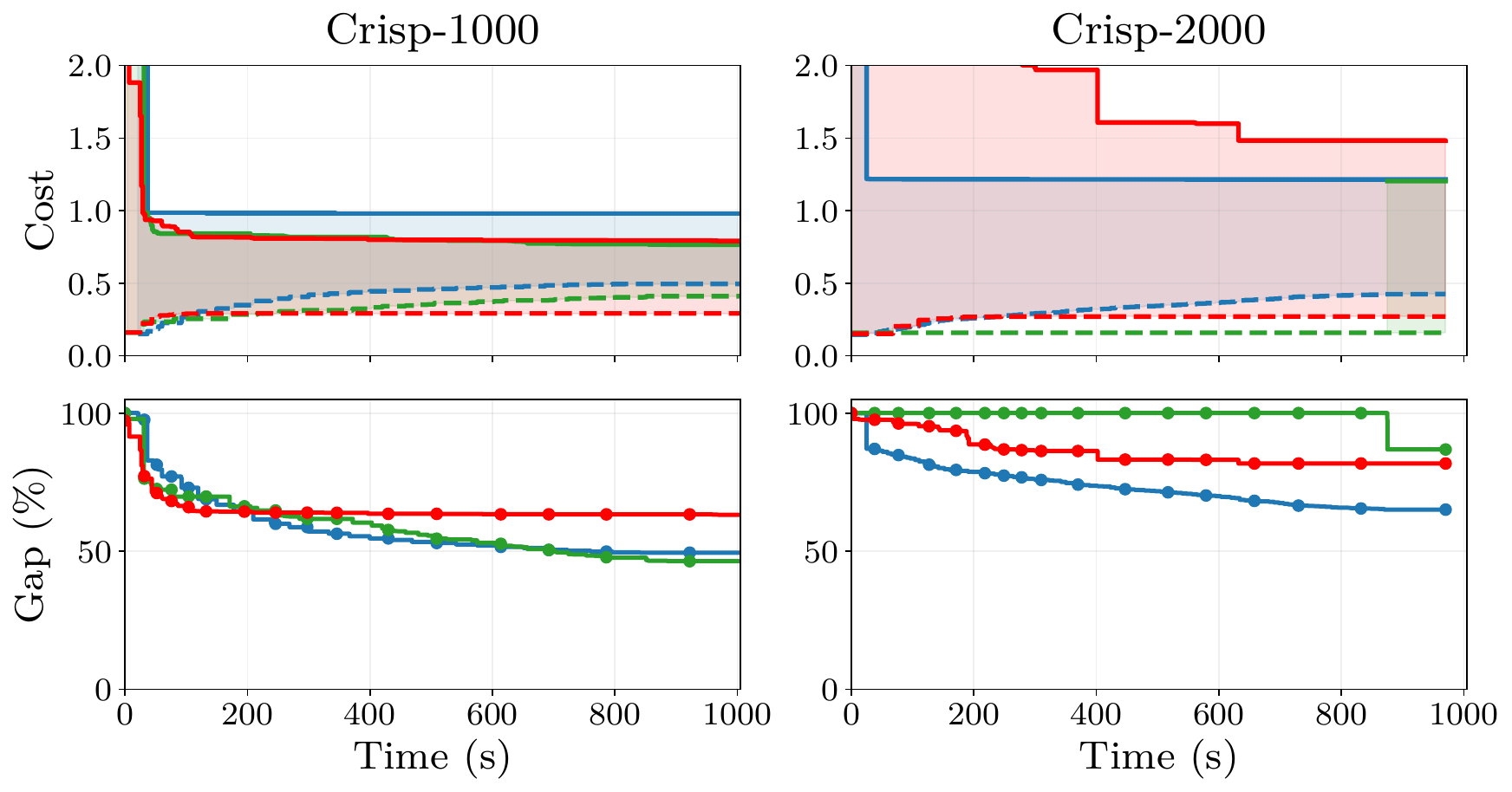}
         \label{fig:crisp}
     \end{subfigure}
     \vspace{-0.5cm} 
     
     \begin{subfigure}[b]{1\linewidth}
         \centering
         \includegraphics[width=\textwidth]{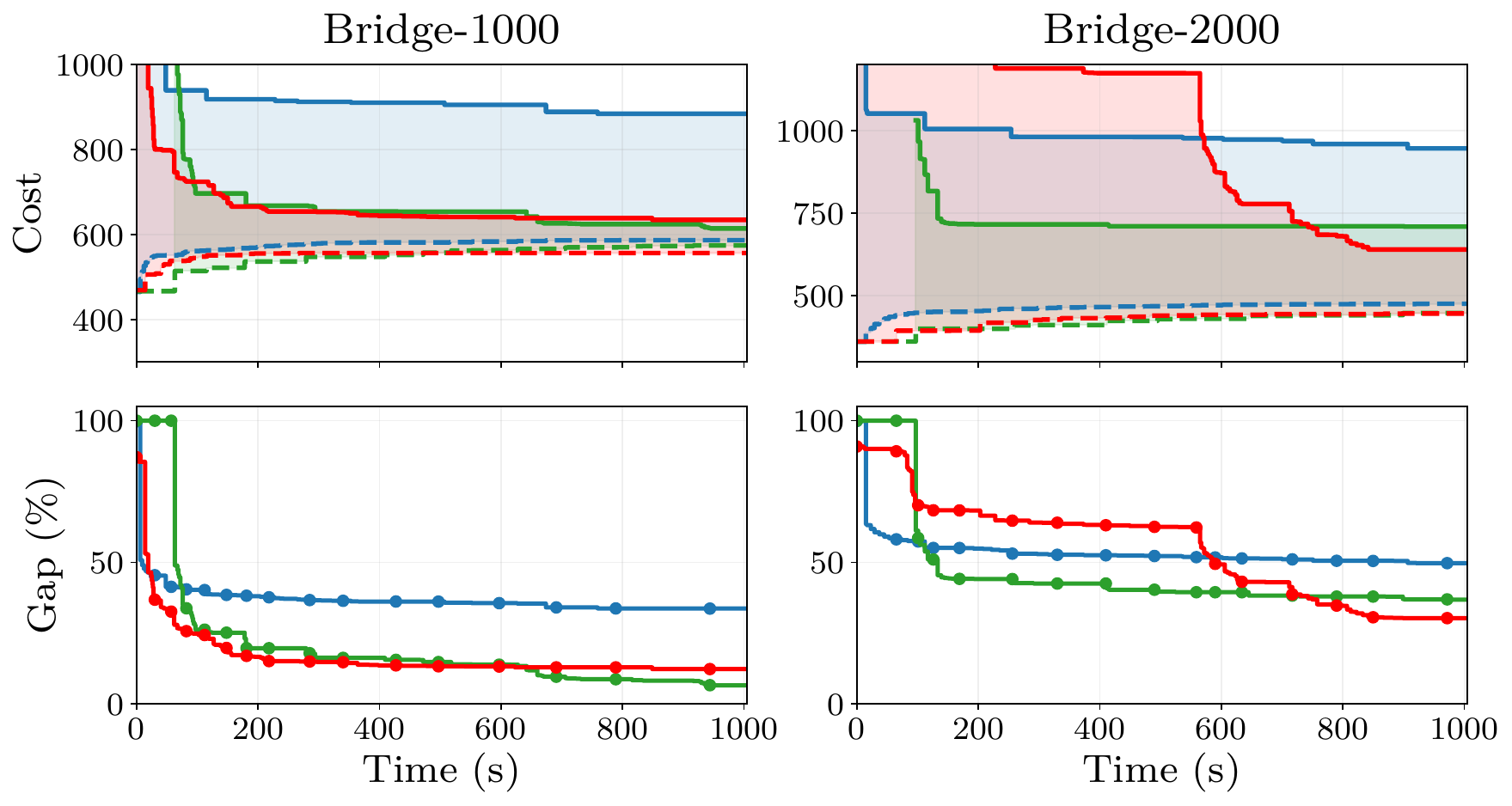}
         \label{fig:drone}
     \end{subfigure}
     \vspace{-0.4cm}
     
     {\centering
      \includegraphics[width=\textwidth]{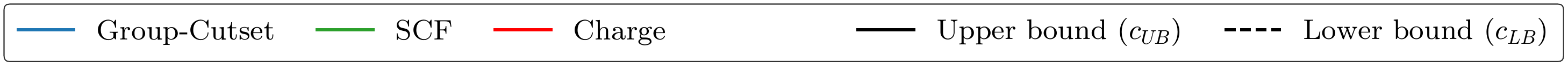}\par}
      
     \caption{Solvers evaluation on real-world instances.}
     \label{fig:realResults-Both}
\end{figure}

Across all evaluated instances, \algname{Group-Cutset} consistently produces tighter lower bounds than the compact formulations, an effect that is particularly pronounced in the \texttt{CRISP} scenarios, where its lower bounds continue to improve throughout the run while those of compact formulations stabilize early.
In contrast, compact formulations, most notably \algname{SCF}, tend to yield better incumbent solutions.
Compared to the prior MILP solver \algname{Charge}, \algname{SCF} achieves
better upper and lower bounds on both $1{,}000$-vertex instances, whereas
\algname{Charge} performs best on the \texttt{Bridge-2000} instance.

These results indicate instance-dependent behavior.
While the \algname{Group-Cutset} primal heuristic performs competitively on
\texttt{CRISP}, it is weaker on \texttt{Bridge}, suggesting limited generality of the current problem-specific heuristic (Sec.~\ref{sec:heuristic-summary}).
In contrast, the fully instantiated compact formulations allow Gurobi to exploit a broader set of generic primal heuristics, enabling better adaptation to different instance structures.
Overall, these complementary strengths suggest that hybrid approaches combining strong incumbent generation from compact formulations with the tighter lower bounds of \algname{Group-Cutset} may be a promising direction for future work.

\niceparagraph{Large Instances.}\label{sec:large-eval}
The medium-sized \gip instances correspond to relatively small roadmap sizes when compared to those encountered in realistic motion-planning problems~\cite{Panasoff.Solovey.25}. We therefore turn to simulated \gip instances with substantially larger graphs and numbers of POIs to explicitly stress solver scalability. Time limits are set to 500 seconds per instance.
Solver setups are identical to those used in the previous experiments, with the exception that the compact formulations (\algname{SCF}, \algname{Charge}) are augmented with the our \gip heuristic.\footnote{This heuristic was added because Gurobi’s internal heuristics struggled to produce feasible solutions within the time limits for the large instances considered. All reported results improved when this heuristic was included.}

\begin{figure}[t]
    \centering
    \begin{subfigure}[b]{0.24\linewidth}
        \centering
        \includegraphics[width=\linewidth]{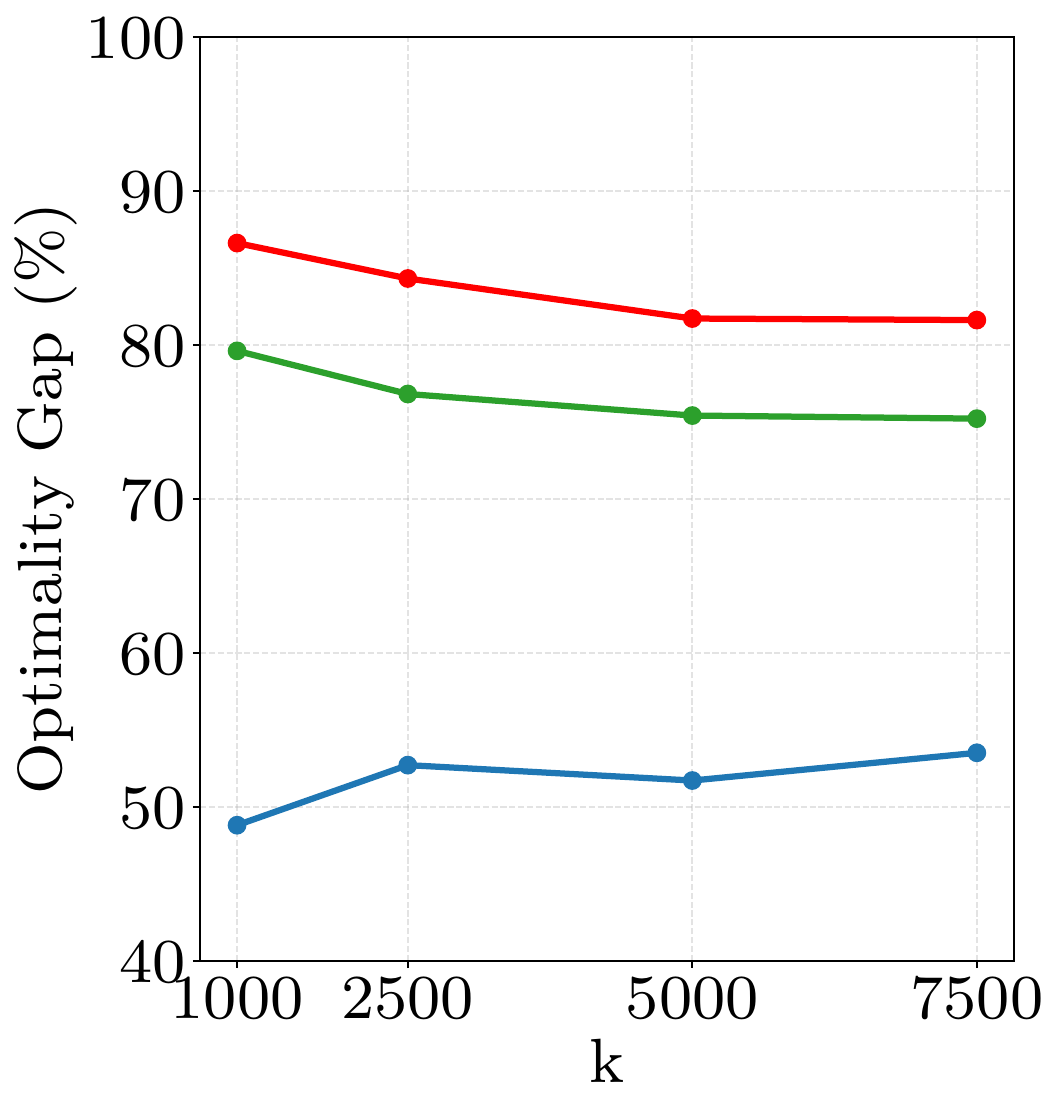}
        \caption{$n=3{,}500$}
    \end{subfigure}
    \hfill
    \begin{subfigure}[b]{0.24\linewidth}
        \centering
        \includegraphics[width=\linewidth]{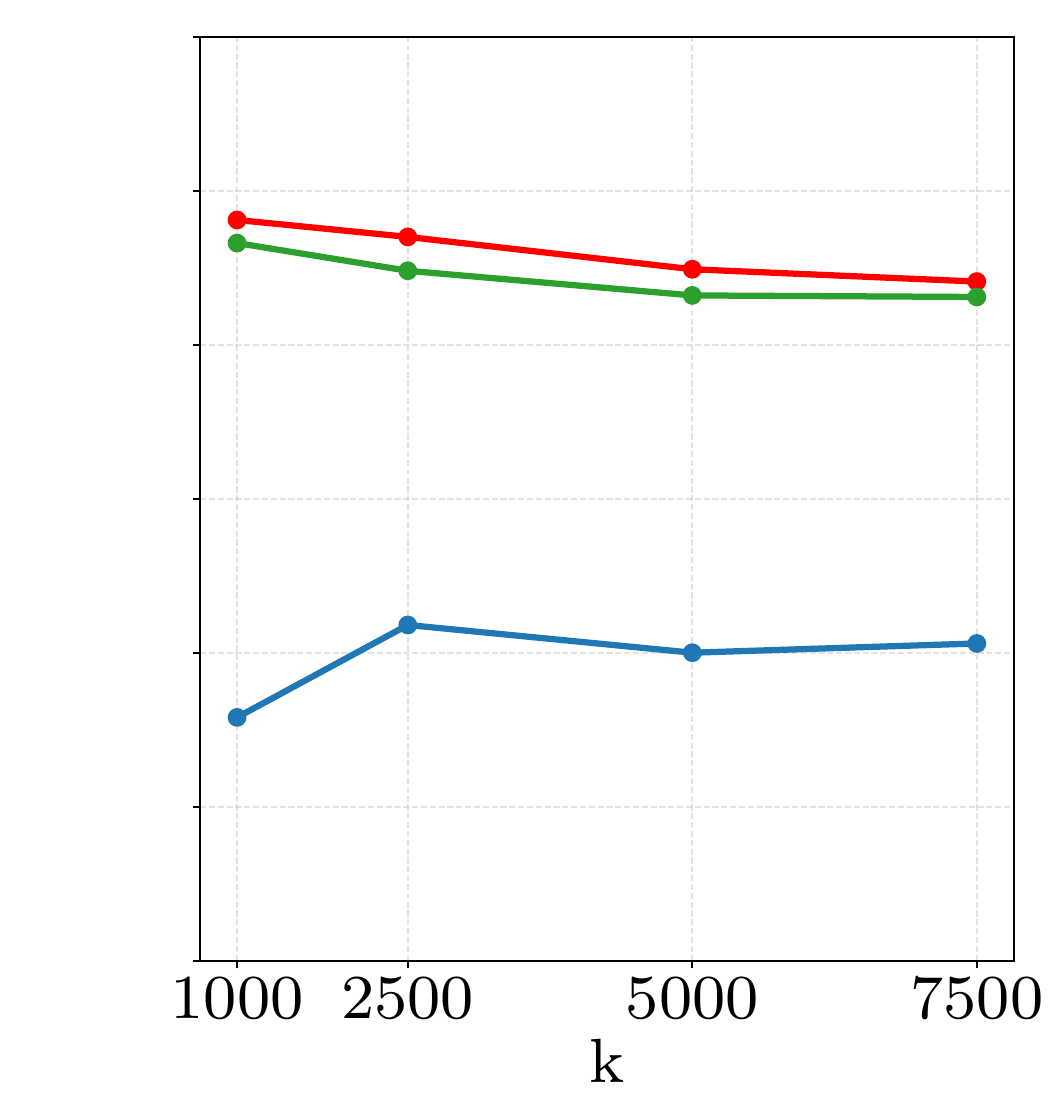}
        \caption{$n=5{,}000$}
    \end{subfigure}
    \hfill
    \begin{subfigure}[b]{0.24\linewidth}
        \centering
        \includegraphics[width=\linewidth]{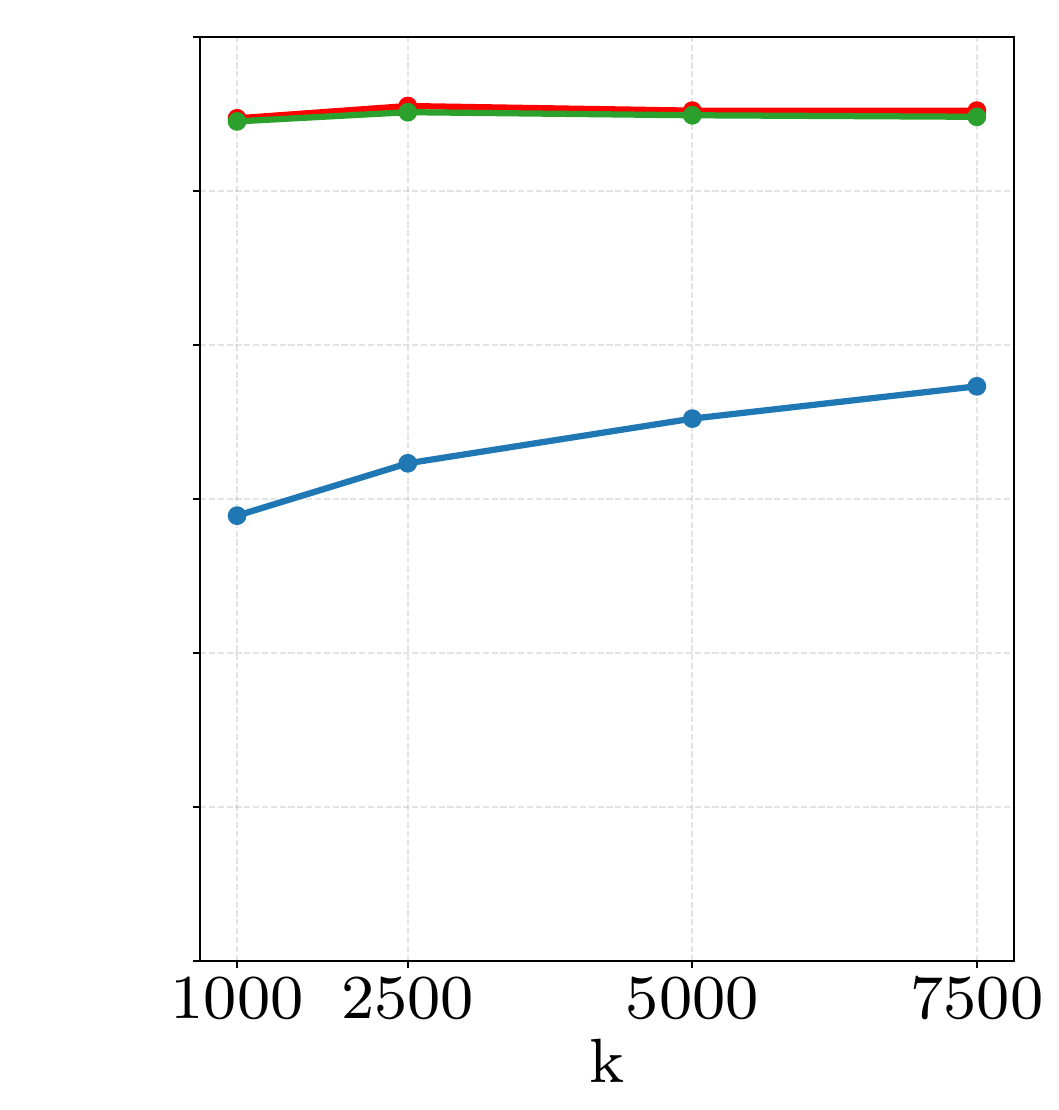}
        \caption{$n=10{,}000$}
    \end{subfigure}
    \hfill
    \begin{subfigure}[b]{0.24\linewidth}
        \centering
        \includegraphics[width=\linewidth]{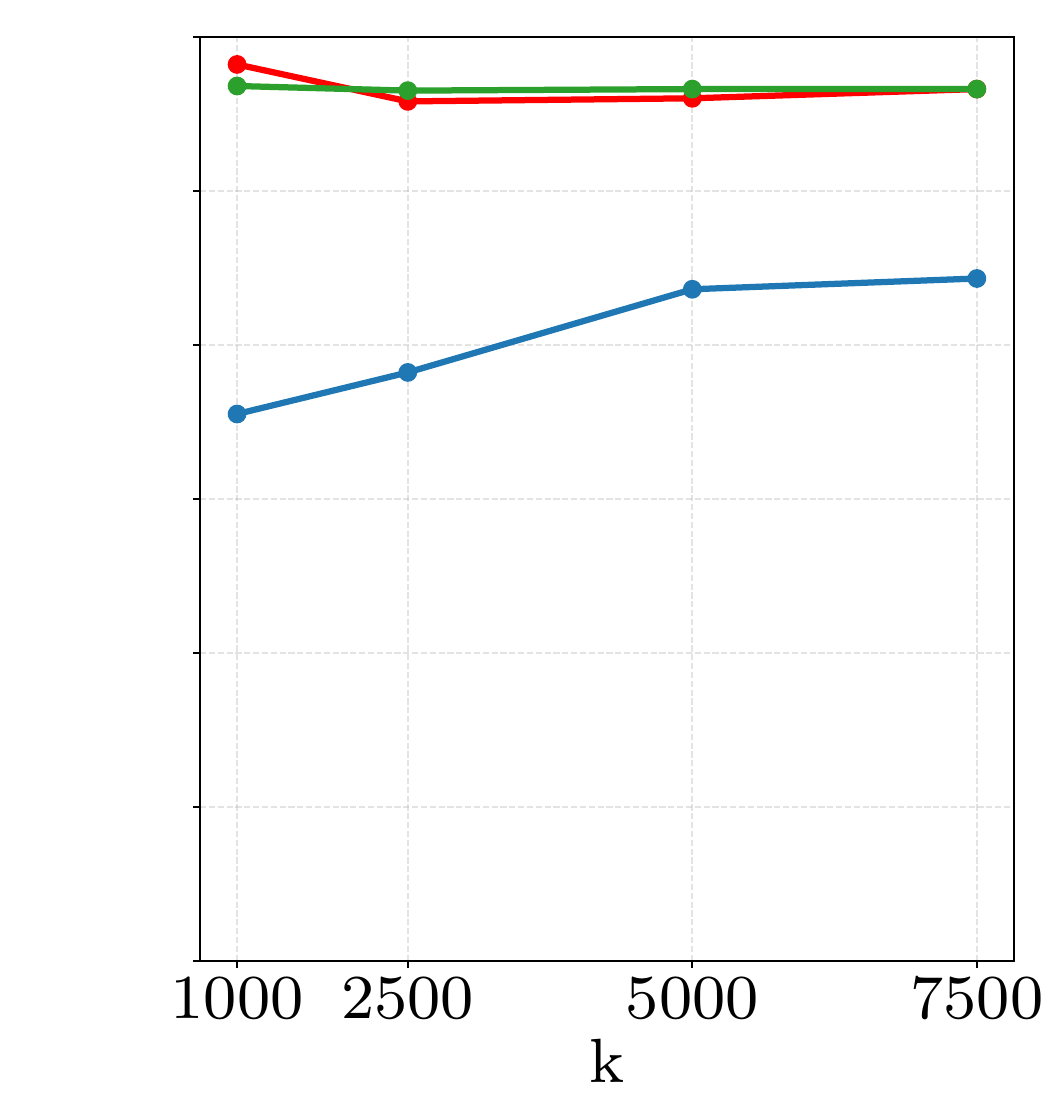}
        \caption{$n=15{,}000$}
    \end{subfigure}

    \vspace{0.1cm} 

    \centering
    \includegraphics[width=0.6\linewidth, keepaspectratio]{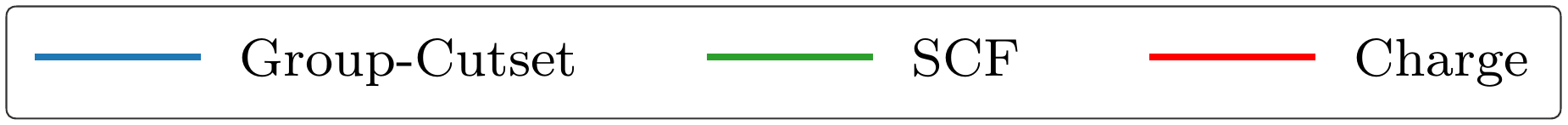}

    \caption{Optimality gap on large-scale instances after $500\,\mathrm{s}$ for varying graph sizes ($n$) and number of POIs ($k$).}
    \label{fig:large_scale_comp}
\end{figure}

Fig.~\ref{fig:large_scale_comp} presents the optimality gap as a function of the number of POIs for different roadmap sizes.
\algname{Group-Cutset} maintains substantially smaller optimality gaps as instance size grows, whereas \algname{Charge} exhibits limited gap reduction across all tested scales.
The \algname{SCF} formulation shows moderate improvements on smaller instances but degrades rapidly as problem size increases.
This behavior reflects fundamental differences in how global connectivity is enforced in the corresponding LP relaxations.
Both \algname{SCF} and \algname{Charge} rely on global flow or charge-balance constraints to eliminate subtours~\cite{gavish1978travelling,wong1980integer}, which allow fractional solutions to satisfy connectivity by spreading flow across many weakly selected edges.
As the graph size grows, such fractional connectivity patterns lead to increasingly weak lower bounds~\cite{miller1960integer,mizutani2024leveraging}.
In contrast, \algname{Group-Cutset}  enforces connectivity through explicit root-to-group cut constraints.
Any feasible solution, fractional or integral, must allocate sufficient total edge capacity across cuts separating the root from each POI group.
As the number of POIs increases, the growing number of such constraints progressively tightens the LP relaxation, explaining the more stable optimality gaps observed for \algname{Group-Cutset} on large instances.

\subsection{Separation Oracle Design Evaluation}
\label{sec:oracle-eval}
We evaluate the impact of separation-oracle design on lower-bound strength and solver convergence of the \algname{Group-Cutset} BnC solver (Sec.~\ref{sep-oracle-design}). 
We consider three variants:
(i) a \emph{connectivity-based oracle};
(ii) a \emph{flow-based oracle} applied exhaustively to all groups;
(iii) a \emph{combined oracle} that validates integral candidates via connectivity checks and applies flow-based separation to a uniformly sampled subset of groups at fractional solutions.

Fig.~\ref{fig:sampling-ablation}(a) compares the final optimality gaps obtained using these variants on real-world instances. 
Using only the connectivity-based oracle (crimson) yields large final gaps across all instances, indicating a weak LP relaxation in which fractional solutions remain poorly constrained, leading to weak lower bounds and slow convergence.
In contrast, the combined oracle (purple) substantially strengthens the formulation, consistently reducing the final optimality gap by approximately 10--30\% relative to connectivity-only separation. 
This improvement stems from the ability of the flow-based approach to generate effective constraints, even when validating only a small sample of groups.
We also evaluated a standalone flow-based oracle (turquoise bars in Fig.~\ref{fig:sampling-ablation}(a)) that performs exhaustive root-to-group separation. 
Although strong in principle, this approach is computationally prohibitive: validating a single candidate requires solving thousands of max-flow problems, leading to excessive separation time and negligible progress. 
As a result, exhaustive flow-based separation does not scale to large \gip instances.
Fig.~\ref{fig:sampling-ablation}(b) examines the effect of group-sampling size in the combined oracle. 
Final optimality gaps vary by only a few percentage points across a wide range of values, indicating low sensitivity to this parameter and enabling effective separation with modest samples.

\begin{figure}[t]
    \centering
    \begin{subfigure}{0.49\linewidth}
        \centering
        \includegraphics[width=\linewidth]{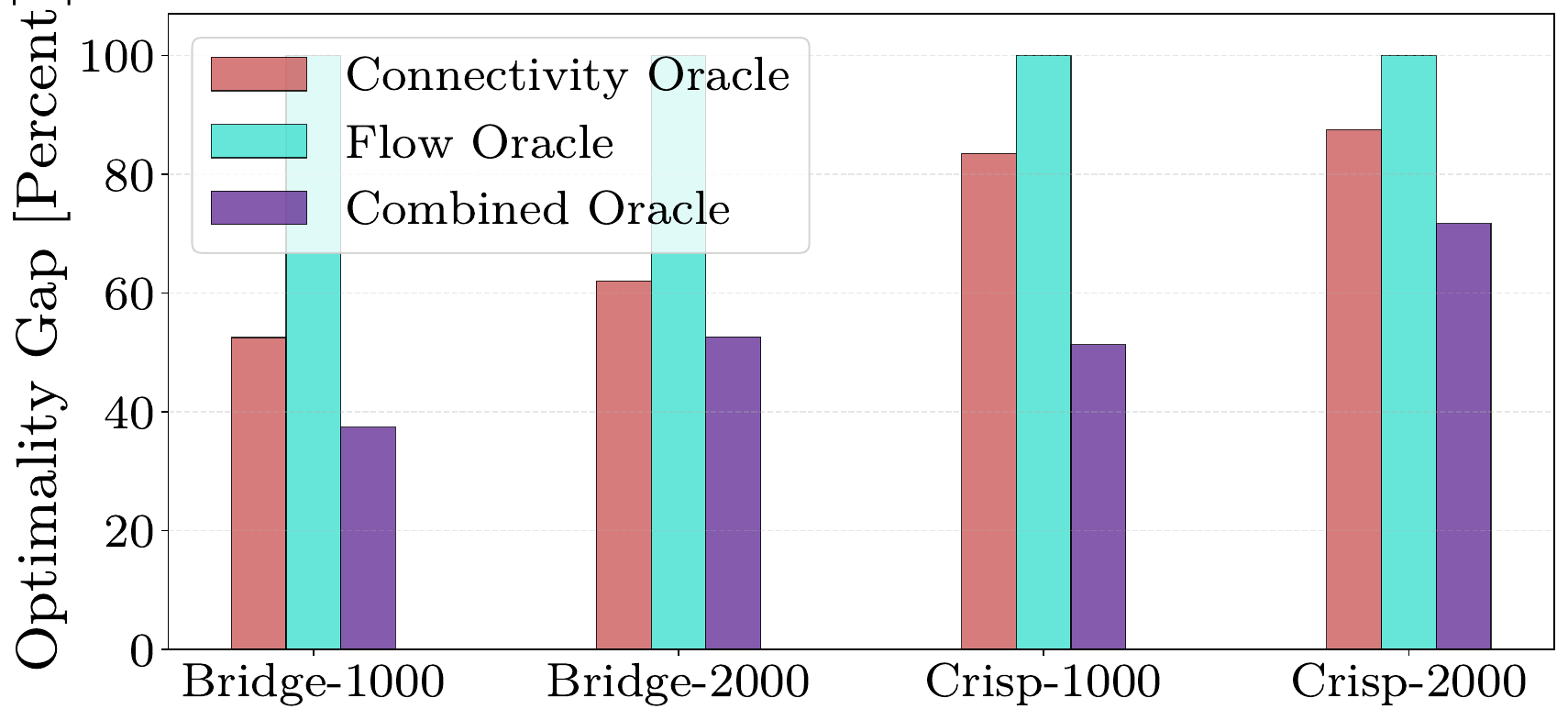}
        \caption{Comparison between separation oracles.}
        \label{fig:sampling-a}
    \end{subfigure}
    \hfill
    \begin{subfigure}{0.47\linewidth}
        \centering
        \includegraphics[width=\linewidth]{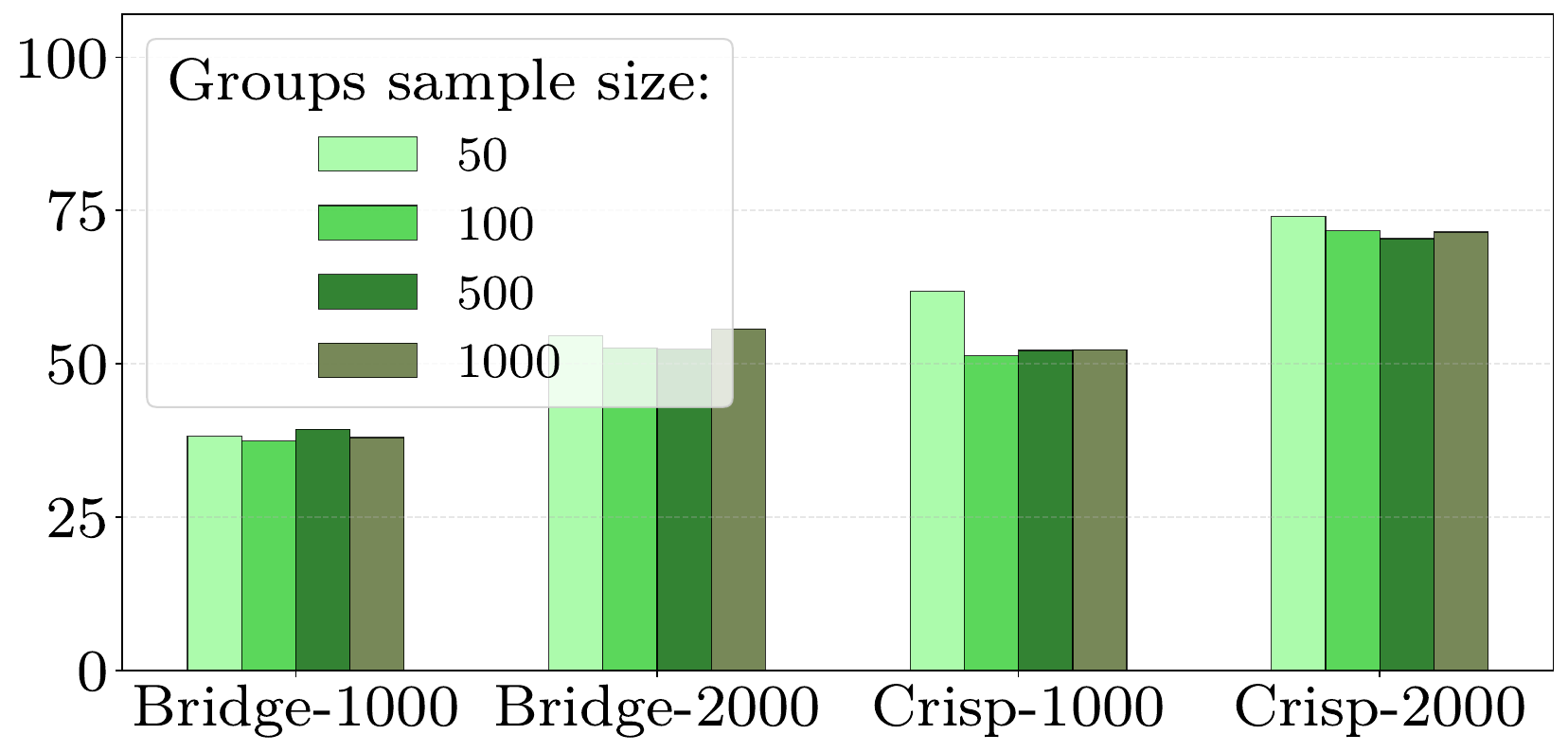}
        \caption{Effect of group sampling size.}
        \label{fig:sampling-b}
    \end{subfigure}
    \caption{Ablation study for the separation oracles reporting the final optimality gaps after
    $500\,\mathrm{s}$ on the real-world instances.
    (a)~Comparison of connectivity-only, flow-only, and combined separation
        oracles (using a group sample size of 100).
    (b)~Effect of group sampling size for the combined oracle.}
    \label{fig:sampling-ablation}
\end{figure}

Taken together, these results highlight the importance of separation-oracle design for scalability, with the combined oracle achieving a practical balance between strength and efficiency.

\section{Conclusion and Future Work}
In this work, we have positioned the graph inspection planning (\gip) problem within the broader context of graph-based optimization, highlighting its algorithmic connections to Steiner tree, TSP and network flow. Leveraging this perspective, we developed and analyzed three distinct MILP formulations, with our primary contribution being a scalable Branch-and-Cut solver centered on the \algname{Group-Cutset} formulation. 
Beyond inspection planning, our results suggest that lazy formulations within the Branch-and-Cut framework may provide a powerful paradigm for a broader class of graph-based planning problems, including those with capacity constraints, priority structures, or specialized objectives.
Several promising directions remain for future research.
As for the inspection planning problem, our results reveal significant variation in each solver performance across problem settings. Understanding the sources of this variation, and particularly how it relates to inspection structure and the locality of sensor–POI visibility, appears to be a promising direction for developing solvers tailored to specific, high-impact use cases.
From a combinatorial optimization perspective, several promising directions remain.
First, we plan to further explore the trade-off between motion-planning fidelity and optimization complexity, as increasingly dense roadmaps improve geometric accuracy but significantly complicate approximation quality and optimality certification.
Second, we aim to strengthen the solver for \gip by incorporating advanced MILP techniques, including additional classes of cutting planes and primal heuristics.
Finally, we see significant potential in extending the proposed framework to richer settings, such as multi-robot inspection planning, partial and adaptive inspection objectives, and online or incremental problem formulations.

\niceparagraph{Acknowledgments.} 
Large language models (ChatGPT and Gemini) were used for light editing and grammar refinement, as well as limited assistance with preliminary literature exploration and figure presentation and formatting.


\bibliographystyle{plainnat}
\bibliography{references}

@inproceedings{Panasoff.Solovey.25,
  author       = {Itai Panasoff and
                  Kiril Solovey},
  title        = {Effective Sampling for Robot Motion Planning Through the Lens of
 Lattices},
  booktitle      = {Robotics: Science and Systems},
  year         = {2025},
}

@article{karaman2010incremental,
  title={\href{https://arxiv.org/pdf/1005.0416
}{Incremental sampling-based algorithms for optimal motion planning}},
  author={Karaman, Sertac and Frazzoli, Emilio},
  journal={Robotics Science and Systems VI},
  volume={104},
  number={2},
  pages={267--274},
  year={2010}
}

@article{fu2019toward,
  title={\href{https://www.roboticsproceedings.org/rss15/p57.pdf}{Toward asymptotically-optimal inspection planning via efficient near-optimal graph search}},
  author={Fu, Mengyu and Kuntz, Alan and Salzman, Oren and Alterovitz, Ron},
  journal={Robotics science and systems: online proceedings},
  volume={2019},
  pages={10--15607},
  year={2019}
}

@inproceedings{fu2021computationally,
  title={\href{https://ieeexplore.ieee.org/document/9561653}{Computationally-efficient roadmap-based inspection planning via incremental lazy search}},
  author={Fu, Mengyu and Salzman, Oren and Alterovitz, Ron},
  booktitle={2021 IEEE International Conference on Robotics and Automation (ICRA)},
  pages={7449--7456},
  year={2021},
  organization={IEEE}
}

@article{mizutani2024leveraging,
  title={\href{https://arxiv.org/pdf/2407.00251}{Leveraging fixed-parameter tractability for robot inspection planning}},
  author={Mizutani, Yosuke and Salomao, Daniel Coimbra and Crane, Alex and Bentert, Matthias and Drange, P{\aa}l Gr{\o}n{\aa}s and Reidl, Felix and Kuntz, Alan and Sullivan, Blair D},
  journal={arXiv preprint arXiv:2407.00251},
  year={2024}
}

@misc{gurobi,
  author = {{Gurobi Optimization, LLC}},
  title = {\href{https://www.gurobi.com}{Gurobi Optimizer Reference Manual}},
  year = 2024,
}

@inproceedings{christofides2022worst,
  title={\href{https://link.springer.com/content/pdf/10.1007/s43069-021-00101-z.pdf}{Worst-case analysis of a new heuristic for the travelling salesman problem}},
  author={Christofides, Nicos},
  booktitle={Operations Research Forum},
  volume={3},
  number={1},
  pages={20},
  year={2022},
  organization={Springer}
}

@article{takahashi1980approximate,
  title={\href{https://cir.nii.ac.jp/crid/1570009750462176256}{An approximate solution for {Steiner} problem in graphs}},
  author={Takahashi, Hiromitsu},
  journal={Math. Japonica},
  volume={24},
  number={6},
  pages={573--577},
  year={1980}
}

@article{koch1998solving,
  title={\href{https://www.academia.edu/download/101109106/kochmartin1998.pdf}{Solving {Steiner} tree problems in graphs to optimality}},
  author={Koch, Thorsten and Martin, Alexander},
  journal={Networks: An International Journal},
  volume={32},
  number={3},
  pages={207--232},
  year={1998},
  publisher={Wiley Online Library}
}

@article{garg2000polylogarithmic,
  title={A polylogarithmic approximation algorithm for the group {Steiner} tree problem},
  author={Garg, Naveen and Konjevod, Goran and Ravi, Ramamoorthi},
  journal={Journal of Algorithms},
  volume={37},
  number={1},
  pages={66--84},
  year={2000},
  publisher={Elsevier}
}

@inproceedings{halperin2003polylogarithmic,
  title={Polylogarithmic inapproximability},
  author={Halperin, Eran and Krauthgamer, Robert},
  booktitle={Proceedings of the thirty-fifth annual ACM symposium on Theory of computing},
  pages={585--594},
  year={2003}
}

@article{chekuri2006greedy,
  title={A greedy approximation algorithm for the group {Steiner} problem},
  author={Chekuri, Chandra and Even, Guy and Kortsarz, Guy},
  journal={Discrete Applied Mathematics},
  volume={154},
  number={1},
  pages={15--34},
  year={2006},
  publisher={Elsevier}
}

@article{gamrath2017scip,
  title={SCIP-Jack—a solver for {STP} and variants with parallelization extensions},
  author={Gamrath, Gerald and Koch, Thorsten and Maher, Stephen J and Rehfeldt, Daniel and Shinano, Yuji},
  journal={Mathematical Programming Computation},
  volume={9},
  number={2},
  pages={231--296},
  year={2017},
  publisher={Springer}
}

@book{cormen2009introduction,
  title     = {Introduction to Algorithms},
  author    = {Cormen, Thomas H. and Leiserson, Charles E. and Rivest, Ronald L. and Stein, Clifford},
  edition   = {3},
  year      = {2009},
  publisher = {The MIT Press},
  address   = {Cambridge, MA, USA},
}

@article{dantzig2003max,
  title={On the max flow min cut theorem of networks},
  author={Dantzig, George and Fulkerson, Delbert Ray},
  journal={Linear inequalities and related systems},
  volume={38},
  pages={225--231},
  year={2003}
}

@article{bertacco2007feasibility,
  title={A feasibility pump heuristic for general mixed-integer problems},
  author={Bertacco, Livio and Fischetti, Matteo and Lodi, Andrea},
  journal={Discrete Optimization},
  volume={4},
  number={1},
  pages={63--76},
  year={2007},
  publisher={Elsevier}
}

@article{miller1960integer,
  title={Integer programming formulation of traveling salesman problems},
  author={Miller, Clair E and Tucker, Albert W and Zemlin, Richard A},
  journal={Journal of the ACM (JACM)},
  volume={7},
  number={4},
  pages={326--329},
  year={1960},
  publisher={ACM New York, NY, USA}
}

@techreport{gavish1978travelling,
  title={The travelling salesman problem and related problems},
  author={Gavish, Bezalel and Graves, Stephen C},
  year={1978},
  number = {OR-078-78},
  institution={Massachusetts Institute of Technology, Operations Research Center}
}

@article{fischetti2005feasibility,
  title={The feasibility pump},
  author={Fischetti, Matteo and Glover, Fred and Lodi, Andrea},
  journal={Mathematical Programming},
  volume={104},
  number={1},
  pages={91--104},
  year={2005},
  publisher={Springer}
}

@article{danna2005exploring,
  title={Exploring relaxation induced neighborhoods to improve {MIP} solutions},
  author={Danna, Emilie and Rothberg, Edward and Pape, Claude Le},
  journal={Mathematical Programming},
  volume={102},
  number={1},
  pages={71--90},
  year={2005},
  publisher={Springer}
}

@incollection{applegate2001tsp,
  title={{TSP} cuts which do not conform to the template paradigm},
  author={Applegate, David and Bixby, Robert and Chv{\'a}tal, Va{\v{s}}ek and Cook, William},
  booktitle={Computational Combinatorial Optimization: Optimal or Provably Near-Optimal Solutions},
  pages={261--303},
  year={2001},
  publisher={Springer}
}

@article{atkar2005uniform,
  title={Uniform coverage of automotive surface patches},
  author={Atkar, Prasad N and Greenfield, Aaron and Conner, David C and Choset, Howie and Rizzi, Alfred A},
  journal={The International Journal of Robotics Research},
  volume={24},
  number={11},
  pages={883--898},
  year={2005},
  publisher={SAGE Publications}
}

@inproceedings{cho2021planning,
  title={Planning sensing sequences for subsurface {3D} tumor mapping},
  author={Cho, Brian Y and Hermans, Tucker and Kuntz, Alan},
  booktitle={2021 international symposium on medical robotics (ISMR)},
  pages={1--7},
  year={2021},
  organization={IEEE}
}

@inproceedings{cho2024efficient,
  title={Efficient and Accurate Mapping of Subsurface Anatomy via Online Trajectory Optimization for Robot Assisted Surgery},
  author={Cho, Brian Y and Kuntz, Alan},
  booktitle={2024 IEEE International Conference on Robotics and Automation (ICRA)},
  pages={15478--15484},
  year={2024},
  organization={IEEE}
}

@inproceedings{cheng2008time,
  title={Time-optimal {UAV} trajectory planning for {3D} urban structure coverage},
  author={Cheng, Peng and Keller, James and Kumar, Vijay},
  booktitle={2008 IEEE/RSJ International Conference on Intelligent Robots and Systems},
  pages={2750--2757},
  year={2008},
  organization={IEEE}
}

@article{karp1972reducibility,
  author  = {Richard M. Karp},
  title   = {Reducibility Among Combinatorial Problems},
  journal = {Complexity of Computer Computations},
  year    = {1972},
  pages   = {85--103}
}

@book{lawler1985tsp,
  author    = {Eugene L. Lawler and Jan Karel Lenstra and Alexander H. G. Rinnooy Kan and David B. Shmoys},
  title     = {The Traveling Salesman Problem},
  publisher = {Wiley},
  year      = {1985}
}

@article{feige1998threshold,
  author  = {Uriel Feige},
  title   = {A Threshold of $\ln n$ for Approximating Set Cover},
  journal = {Journal of the ACM},
  volume  = {45},
  number  = {4},
  pages   = {634--652},
  year    = {1998}
}

@article{sahni1976p,
  author  = {Sartaj Sahni and Teofilo Gonzalez},
  title   = {P-complete Approximation Problems},
  journal = {Journal of the ACM},
  volume  = {23},
  number  = {3},
  pages   = {555--565},
  year    = {1976}
}

@book{ahuja1994network,
  title={Network flows: theory, algorithms and applications},
  author={Ahuja, Ravindra K and Magnanti, Thomas L and Orlin, James B},
  year={1994},
  publisher={Prentice hall}
}

@article{edmonds1972theoretical,
  title={Theoretical improvements in algorithmic efficiency for network flow problems},
  author={Edmonds, Jack and Karp, Richard M},
  journal={Journal of the ACM (JACM)},
  volume={19},
  number={2},
  pages={248--264},
  year={1972},
  publisher={ACM New York, NY, USA}
}

@article{claus1984new,
  title={A new formulation for the travelling salesman problem},
  author={Claus, A},
  journal={SIAM Journal on Algebraic Discrete Methods},
  volume={5},
  number={1},
  pages={21--25},
  year={1984},
  publisher={SIAM}
}

@article{anderson2017continuum,
  title={Continuum reconfigurable parallel robots for surgery: Shape sensing and state estimation with uncertainty},
  author={Anderson, Patrick L and Mahoney, Arthur W and Webster, Robert James},
  journal={IEEE robotics and automation letters},
  volume={2},
  number={3},
  pages={1617--1624},
  year={2017},
  publisher={IEEE}
}

@inproceedings{mahoney2016reconfigurable,
  title={Reconfigurable parallel continuum robots for incisionless surgery},
  author={Mahoney, Arthur W and Anderson, Patrick L and Swaney, Philip J and Maldonado, Fabien and Webster, Robert J},
  booktitle={2016 IEEE/RSJ International Conference on Intelligent Robots and Systems (IROS)},
  pages={4330--4336},
  year={2016},
  organization={IEEE}
}

@book{hwang1992steiner,
  title={The Steiner Tree Problem},
  author={Hwang, Frank K. and Richards, Dana S. and Winter, Pawel},
  publisher={Elsevier},
  year={1992}
}

@article{laporte1983generalized,
  title={Generalized travelling salesman problem through n sets of nodes: an integer programming approach},
  author={Laporte, Gilbert and Nobert, Yves},
  journal={INFOR: Information Systems and Operational Research},
  volume={21},
  number={1},
  pages={61--75},
  year={1983},
  publisher={Taylor \& Francis}
}

@incollection{applegate2011traveling,
  title={The traveling salesman problem: a computational study},
  author={Applegate, David L and Bixby, Robert E and Chv{\'a}tal, Va{\v{s}}ek and Cook, William J},
  booktitle={The Traveling Salesman Problem},
  year={2011},
  publisher={Princeton university press}
}

@article{edmonds1965paths,
  title={Paths, trees, and flowers},
  author={Edmonds, Jack},
  journal={Canadian Journal of Mathematics},
  volume={17},
  pages={449--467},
  year={1965}
}

@article{cook1999computing,
  title={Computing minimum-weight perfect matchings},
  author={Cook, William J. and Rohe, Andre},
  journal={INFORMS Journal on Computing},
  volume={11},
  number={2},
  pages={138--148},
  year={1999}
}

@inproceedings{karp1981maximum,
  title={Maximum matching in sparse random graphs},
  author={Karp, Richard M. and Sipser, Michael},
  booktitle={Proceedings of the 22nd Annual Symposium on Foundations of Computer Science},
  pages={364--375},
  year={1981}
}

@book{vazirani2001approximation,
  title={Approximation Algorithms},
  author={Vazirani, Vijay V.},
  publisher={Springer},
  year={2001}
}

@article{lawler1966branch,
  title={Branch-and-bound methods: A survey},
  author={Lawler, Eugene L and Wood, David E},
  journal={Operations research},
  volume={14},
  number={4},
  pages={699--719},
  year={1966},
  publisher={INFORMS}
}

@incollection{land2009automatic,
  title={An automatic method for solving discrete programming problems},
  author={Land, Ailsa H and Doig, Alison G},
  booktitle={50 Years of Integer Programming 1958-2008: From the Early Years to the State-of-the-Art},
  pages={105--132},
  year={2009},
  publisher={Springer}
}

@article{padberg1991branch,
  title={A branch-and-cut algorithm for the resolution of large-scale symmetric traveling salesman problems},
  author={Padberg, Manfred and Rinaldi, Giovanni},
  journal={SIAM review},
  volume={33},
  number={1},
  pages={60--100},
  year={1991},
  publisher={SIAM}
}

@article{mitchell2002branch,
  title={Branch-and-cut algorithms for combinatorial optimization problems},
  author={Mitchell, John E},
  journal={Handbook of applied optimization},
  volume={1},
  number={1},
  pages={65--77},
  year={2002},
  publisher={Oxford, UK}
}

@book{bondy1979graph,
  title={Graph theory with applications},
  author={Bondy, John Adrian and Murty, Uppaluri Siva Ramachandra},
  year={1979},
  publisher={north-Holland}
}

@inproceedings{wong1980integer,
  title={Integer programming formulations of the traveling salesman problem},
  author={Wong, Richard T},
  booktitle={Proceedings of the IEEE international conference of circuits and computers},
  volume={149},
  pages={152},
  year={1980},
  organization={IEEE Press Piscataway NJ}
}

\newpage
\appendix
\ignore{
\section{Preliminaries - MILP Solving frameworks}
\oren{This is very high level and it is not clear how and when the separation oracles and primal heuristic are used}
\subsection{Branch and Bound}
Mixed Integer Linear Programming (MILP) is a generalization of the Linear Programming (LP) combinatorial optimization framework, which augments it with the ability to constraint variables to be integer. 
This generalization enhances the expressive power of the formulation, enabling the description on any NP-hard problem as a MILP.

The Branch and Bound framework is a general approach to solve MILP problems.
In this framework, MILP feasible region is repeatedly split across chosen values of integer variables, yielding a search tree where nodes describing subregions of the feasible region, with growing refinement until the tree is spanning the entire feasible region.

Improving the search, bounds are found at each node, assessing the search potential in the represented subregions. In a minimization problem (W.L.O.G), lower bounds are found by solving LP-relaxation, resulting in a feasible solution that is not necessarily integer, which is guaranteed to be better then any integer feasible solution. 

Upper bound for a node can be any integer feasible solution in the represented subregion. Any such solution is a candidate optimal solution, and the best found so far is stored as an incumbent solution. 

Those bounds enable the efficient managing of the search process by the solver - global upper and lower bounds yield the \textit{optimality gap} that measures the gap between the best solution found, to the currently theoretically achievable solution, while local bounds enable to prune subregions that are provably sub-optimal, and guide the search to more promising subregions.

\subsection{Branch and Cut}
The Branch-and-Cut method extended framework for solving integer or mixed-integer linear optimization problems. 
As a MILP solver such as Branch and Bound uses a relaxation to linear program (LP) to bound the integer program, the tightness of the relaxation, referring to the proximity of the LP-relaxation to the non relaxed ILP, is the key factor in the tightness of this bound, and eventually the efficiency of the optimization process.

The Branch-and-Cut method approaches this issue by suggesting the use of cuts-generation algorithms - augmenting an initially weaker formulation with additional constraints, tightening it on-demand with a \textit{separation oracle}, efficiently finding violated cuts.

This added cuts are valid constraints of the full problem, satisfied by any integer-feasible solution, but are initially omitted in order to get an initial lean static formulation. 
Such cuts may be added in order to reject integer feasible solutions found for the partial model (termed \textit{Lazy Constraints}), or to help tighten the relaxation at any point deemed necessary (termed \textit{Cutting Planes}).  

Serving as an additional method to reduce the search space rather then branching, BnC solver will employ some strategy to interleave cuts generation to tighten a node's lower bound, primal heuristic to tighten the upper bound, and after exhausting the current node will deepen the search using branching.}

\section{Approximation Bounds for Graph Inspection Planning} \label{sec:Approx-bounds}
In this section, we leverage the connection between \gip and \gst to establish
approximation upper and lower bounds for \gip. For upper bound, \citet{mizutani2024leveraging} Thm.~4 stated a straight forward $O(k)$-approximation algorithm.
To the best of our knowledge, approximation lower bounds have not previously been established for \gip.\footnote{The discussion below considers the \emph{undirected} version of \gip.}
We begin by restating the following approximation bounds for \gst. Below, $k$ denotes the number of groups, and $n=|V|$. 
\begin{theorem}[Lower approximation bounds for \gst~\cite{halperin2003polylogarithmic}]\label{GSTlb}
    \gst is  hard\footnote{\label{fn:complex-assum}
    \citet{halperin2003polylogarithmic} rely on the assumption that NP has no quasi-polynomial Las Vegas algorithms. This is slightly stronger assumption than $\classP \neq \classNP$ but widely accepted as plausible.} to approximate by a factor of $\Omega(\log^{2-\epsilon}k)$, for any constant $\epsilon>0$. For Theorem \ref{GIPapproxLower} we assume the same computational model.
\end{theorem}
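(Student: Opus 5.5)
The statement is a restatement of a known result, so my plan is to verify that the cited theorem of \citet{halperin2003polylogarithmic} applies verbatim in the present setting, not to reprove it. Concretely, I would check three things in order. First, the problem they study is the same: the rooted \gst of Definition~\ref{def:gst} on an undirected weighted graph with $k$ groups, where the objective is the total edge cost of a tree containing $r$ and meeting every group. Second, their hardness parameter is the number of groups $k$, not $n$. Third, the complexity assumption matches the one in footnote~\ref{fn:complex-assum}, namely $\classNP \not\subseteq \mathsf{ZTIME}(n^{\mathrm{polylog}\, n})$.

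For completeness, I would recall the structure of their argument. It is a gap-preserving reduction from a Label-Cover instance with a polylogarithmic gap, obtained by the PCP theorem plus parallel repetition run for polylogarithmically many rounds; this is what forces the quasi-polynomial-time assumption. They then compose this instance recursively into a hierarchically well-separated tree of height $h$. Each level of the recursion contributes an additional logarithmic factor in the gap, and the two ingredients that multiply are roughly $\log k$ from the Label-Cover soundness and $\log^{1-\epsilon} k$ from the height. Choosing $h$ polylogarithmic and randomizing the construction (hence the Las Vegas qualifier) yields a gap of $\Omega(\log^{2-\epsilon} k)$, because the instance size is quasi-polynomial while $k$ is only polynomially related to it. Since the hard instances are trees, the result holds for undirected \gst, and in particular for graphs in general.

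The main obstacle is the bookkeeping between $k$ and $n$. One must confirm that the bound is stated in terms of the number of groups after the reduction, so that it transfers cleanly when Theorem~\ref{GIPapproxLower} later maps \gst instances to \gip. This step is safe because \gip tours and \gst trees are within a factor $2$ of each other: double the tree's edges to get a tour, or take the tour's edge set, which contains a spanning tree of the visited vertices. That factor-$2$ loss is absorbed by the $\Omega(\cdot)$. With these checks done, the theorem follows by citation.
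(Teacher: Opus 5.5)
Your proposal is correct and matches the paper. The paper also establishes this statement purely by citing \citet{halperin2003polylogarithmic}, and your three applicability checks (rooted undirected \gst, hardness measured in the number of groups $k$, and the assumption $\classNP \not\subseteq \mathsf{ZTIME}(n^{\mathrm{polylog}\, n})$) are exactly what that citation needs; because their hard instances are trees, the bound also holds on general undirected graphs. Your reconstruction of the internals of their reduction is loose and is not needed for the argument, so you can drop it.
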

\begin{theorem}[Upper approximation bounds for \gst~\cite{chekuri2006greedy,garg2000polylogarithmic}]\label{GSTub}
    There exists a randomized polynomial time $O(\log n \cdot \log^2 k)$-approximation algorithm for \gst.
\end{theorem}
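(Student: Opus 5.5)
The plan is to follow the two-stage template of \citet{garg2000polylogarithmic}: first reduce \gst on a general weighted graph to \gst on a tree, paying an $O(\log n)$ factor, and then solve the tree case by LP rounding, paying an $O(\log^2 k)$ factor. For the first stage I would take the metric completion of $G$ (shortest-path distances, which do not change the optimum of \gst) and apply a probabilistic tree embedding of Fakcharoenphol--Rao--Talwar type. This embedding maps the metric into a random dominating tree metric $T$ whose leaves are the vertices of $V$, with expected stretch $O(\log n)$.

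Given such a $T$, I would take the optimal group Steiner tree $T^*$ in $G$ and map each of its edges to the corresponding path in $T$. Its image connects $r$ to some vertex of every group, and its expected cost is at most $O(\log n)\cdot c(T^*)$. Hence the expected optimum of \gst on $T$, with the same groups placed at the leaves, is $O(\log n)\cdot\mathrm{OPT}$. Conversely, any subtree of $T$ that covers the groups can be mapped back to a connected subgraph of $G$ of no larger cost, because $T$ dominates the metric and each tree edge can be replaced by a shortest path in $G$. Pruning that subgraph to a tree then yields a feasible \gst solution.

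For the tree stage I would write the cut LP for \gst on $T$: for every group $S_i$, the capacities $x$ must support one unit of flow from $r$ to $S_i$. I would then apply the dependent rounding of \citet{garg2000polylogarithmic}. Each edge $e$ with parent edge $e'$ is kept with probability $x_e/x_{e'}$, independently. Three facts drive the analysis. The expected cost of one round is exactly the LP cost. Group $i$ is connected to $r$ with probability $\Omega(1/H)$, where $H$ is the depth of $T$; this follows from a Janson-type inequality on the correlated leaf events. Finally, $O(H\log k)$ independent repetitions cover all $k$ groups with constant probability, by a union bound.

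The step I expect to be the main obstacle is bringing this bound down to $O(\log^2 k)$, rather than the $O(\log n\cdot\log k)$ that the naive depth bound gives. Two things are needed. First, the tree must be compressed so that only $\mathrm{poly}(k)$ relevant leaves and depth $O(\log k)$ matter; I would contract edges whose LP values are negligible or which serve no group, following the height-reduction argument. Second, I would apply the sharper per-group analysis of \citet{chekuri2006greedy}, which relates the failure probability to $\log k$ rather than to the tree size. If this compression fails, I would fall back on the $O(\log n\log k)$ tree bound, which still gives a polylogarithmic guarantee. Combining the two stages by linearity of expectation then gives the stated randomized polynomial-time $O(\log n\cdot\log^2 k)$ approximation.
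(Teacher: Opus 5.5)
The paper does not prove this statement. It imports it from \cite{garg2000polylogarithmic,chekuri2006greedy}, so your plan has to stand on its own as a reconstruction of the standard argument.

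Your embedding stage and basic rounding analysis are fine up to constants. One small correction: internal nodes of an FRT tree are not vertices of $G$. A tree solution therefore maps back to $G$ via a DFS tour or MST of the leaves it contains, losing a factor $2$, not at ``no larger cost''.

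The genuine gap is exactly the step you flag as the obstacle, and your proposed fix does not work. As long as you treat the embedded $T$ as an arbitrary tree, you have two handles on the per-group success probability $\Omega(1/H)$. Height reduction bounds $H$ by $O(\log n)$. Discarding leaves with flow below $1/(2N)$ and counting flow-halving levels gives $O(\log N)$, where $N$ is the maximum group size. Neither is $O(\log k)$.

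Contracting by LP values cannot do better. A single group may spread its unit of flow uniformly over a complete binary subtree with $N\gg\mathrm{poly}(k)$ leaves. Then every leaf edge has value $1/N$, so dropping ``negligible'' edges destroys the group. Yet no edge is small relative to its parent, there are $\log_2 N$ halving levels, and the rounding genuinely succeeds with probability only $\Theta(1/\log N)$, since it is a critical branching process.

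Invoking \cite{chekuri2006greedy} does not supply the step either. That paper gives a deterministic recursive-greedy algorithm, not a sharper analysis of this rounding. Your fallback yields only $O(\log^2 n\cdot\log k)$, which is not the statement.

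The missing idea is to use the geometry of the FRT trees. They are $2$-HSTs, so edge lengths halve at each level going down.
\begin{itemize}
\item Guess $B$ with $\mathrm{OPT}_T\le B\le 2\,\mathrm{OPT}_T$, trying all polynomially many powers of two.
\item Edges longer than $B$ cannot be used by an optimal solution, and they occupy only the top levels. So restrict to the subtree rooted at the highest ancestor of $r$ reachable through edges of length at most $B$.
\item Take the first level whose child edges are shorter than $B/k$, and collapse the subtree below each node $w$ on that level into $w$. Group memberships, and $r$ itself, move up to $w$.
\item By the geometric sum, every leaf of a collapsed subtree lies within $2B/k$ of its top. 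So any solution of the collapsed instance lifts to $T$ at extra cost at most $(k+1)\cdot 2B/k=O(\mathrm{OPT}_T)$. Projecting the optimum shows the collapsed optimum is at most $\mathrm{OPT}_T$.
\item Only the $O(\log k)$ levels with edge lengths between $B/k$ and $B$ survive. Hence the collapsed tree, rooted at the image of $r$, has height $H=O(\log k)$.
\end{itemize}
Your own $\Omega(1/H)$ per-group bound with $O(H\log k)$ repetitions then gives $O(\log^2 k)$ on the tree. Multiplying by the $O(\log n)$ expected stretch yields the stated $O(\log n\cdot\log^2 k)$.
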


We recall that \gst was formally defined in Def.~\ref{def:gst}.
Although the problem is stated in terms of finding a tree, an equivalent formulation may allow any connected subgraph, since an optimal solution can always be taken to be a tree.

To derive analogous approximation bounds for \gip, we rely on the following two
observations:
\begin{itemize}
    \item[\textbf{O1}] Any feasible solution to \gip induces a valid solution to \gst, since a \gip tour is, by definition, a connected subgraph that covers all groups.
    \item[\textbf{O2}] Any feasible solution to \gst can be transformed into a feasible solution to \gip by traversing the solution tree in a depth-first manner and returning along each edge.
    This transformation yields a \gip tour whose cost is at most twice the cost of the original \gst solution.
\end{itemize}

The following theorem provides a lower approximation bound for \gip.
\begin{theorem}[Lower approximation bound for \gip]
\label{GIPapproxLower}
    \gip is hard$^{\ref{fn:complex-assum}}$ to approximate within a factor of $\Omega(\log^{2-\epsilon}k)$ for all constant $\epsilon>0$.
\end{theorem}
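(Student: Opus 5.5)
We argue by contradiction, via an approximation-preserving reduction from \gst to \gip that uses observations \textbf{O1} and \textbf{O2}. Suppose there is a polynomial-time algorithm $\mathcal{A}$ that approximates \gip within a factor $\alpha(k)=o(\log^{2-\epsilon}k)$ for some constant $\epsilon>0$. We will use $\mathcal{A}$ to approximate \gst within $2\alpha(k)$. This contradicts Theorem~\ref{GSTlb} once we choose the exponent appropriately.

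Given an undirected \gst instance $\langle G,r,\mathcal{S}\rangle$, we pass the same triple to $\mathcal{A}$ as a \gip instance. No transformation is needed: by the group-covering form (Def.~\ref{def:group-gipp}), the groups $S_i$ act as the visibility groups, so the number of groups $k$ is unchanged. First we compare optima. By \textbf{O2}, an optimal \gst tree $T^*$ yields a closed walk from $r$ of cost $2c(T^*)$ that covers all groups, so $\mathrm{OPT}_{\gip}\le 2\,\mathrm{OPT}_{\gst}$. Next, $\mathcal{A}$ returns a tour $\tau$ with $c(\tau)\le\alpha(k)\,\mathrm{OPT}_{\gip}\le 2\alpha(k)\,\mathrm{OPT}_{\gst}$. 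By \textbf{O1}, the edge set of $\tau$ is a connected subgraph containing $r$ that meets every group. Taking any spanning tree of it (computable in polynomial time) gives a feasible \gst solution of cost at most $c(\tau)$. If $\tau$ traverses an edge twice, the cost of the distinct edges is only smaller, which works in our favour.

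We thus obtain a polynomial-time $2\alpha(k)$-approximation for \gst. To contradict Theorem~\ref{GSTlb}, fix $\epsilon>0$ and apply that theorem with $\epsilon/2$. \gst is then hard to approximate within $\Omega(\log^{2-\epsilon/2}k)$. But $2\alpha(k)=o(\log^{2-\epsilon}k)\subseteq o(\log^{2-\epsilon/2}k)$, which contradicts the hardness under the same complexity assumption (footnote~\ref{fn:complex-assum}). The constant factor $2$ is absorbed by the asymptotic notation in any case.

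The main point needing care is the bookkeeping of the hardness notion rather than any combinatorial difficulty. We must make sure the reduction preserves the parameter $k$ and the polynomial running time, and that the \gip instance produced lies in the class covered by the theorem. That class is undirected, as the footnote specifies. The construction needs no edge orientations, since the walk built in \textbf{O2} traverses undirected edges both ways. We also verify that the cost of a \gip tour, counted with multiplicity, upper-bounds the cost of its underlying edge set, so that \textbf{O1} holds with the multiplicity convention of Def.~\ref{def:gipp}.
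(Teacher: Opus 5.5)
Your proposal is correct and follows essentially the same route as the paper's proof. You feed the \gst instance to the assumed \gip approximation, bound $\mathrm{OPT}_{\mathrm{GIP}}\le 2\,\mathrm{OPT}_{\mathrm{GST}}$ via \textbf{O2}, and turn the returned tour back into a \gst tree via \textbf{O1}, which gives a $2\alpha$-approximation contradicting Theorem~\ref{GSTlb}; your passage to $\epsilon/2$ (not strictly needed under your $o(\cdot)$ reading, but harmless) is a cleaner version of the paper's informal ``$2\alpha<\Omega(\log^{2-\epsilon}k)$'' step.
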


\begin{proof}
    Assume by contradiction we are given an $\alpha$-approximation oracle $\mathcal{O}$ for \gip, such that $\alpha<\Omega(\log^{2-\epsilon}k)$. 
    Given a \gst instance, we consider the corresponding \gip and apply $\mathcal{O}$ to obtain a tour $\tau_{\mathcal{O}}$ that $\alpha$-approximates $\tau^*$, the optimal \gip tour. 

    Let~$T^*$ be an optimal \gst solution. 
    By traversing $T^*$ back and forth and following Observation \textbf{O2}, we can construct a closed tour for \gip, $T^*_p$, such that $|T^*_p|\leq 2 \cdot |T^*|$.
    As $T^*_p$ is a covering-tour for \gip and $\tau^*$ is the optimal tour we have that
    \[|\tau^*|\leq |T^*_p|.\]
    Using the assumption that $\mathcal{O}$ is an $\alpha$-approximation algorithm, we have that
    \[|\tau_\mathcal{O}|\leq \alpha \cdot |\tau^*| \leq 2\alpha\cdot |T^*|.\]
    This solution $\tau_\mathcal{O}$ forms a covering and connected edge subset, and hence is a solution to the \gst problem by Observation \textbf{O1} (and may be tightened to a tree by the removal of cycle-closing edge). 
    Hence we found a solution $\tau_\mathcal{O}$ to the \gst problem that is $2\cdot\alpha<\Omega(\log^{2-\epsilon}k)$ approximating the optimal \gst solution, in contradiction to Thm~\ref{GSTlb}. 
    Hence no such approximation algorithm exists for \gip under the computational model of \citet{halperin2003polylogarithmic} \qed
\end{proof}

The following theorem, describing upper approximation bounds for \gip, improves  the previous $O(k)$ approximation result provided by~\citet{mizutani2024leveraging}.
\begin{theorem}[Upper approximation bounds for \gip] \label{GIPapproxUpper}
    There exists an efficient $O(\log n \cdot \log^2k)$-approximation algorithm for \gip.
\end{theorem}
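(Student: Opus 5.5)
The plan is a direct reduction from \gip to \gst using observations \textbf{O1} and \textbf{O2}, combined with the approximation algorithm of Thm.~\ref{GSTub}. We work in the undirected setting, as in the footnote above.

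\textbf{Algorithm.} Given a \gip instance $\langle G, r, \mathcal{S}\rangle$, we view it unchanged as a \gst instance with the same graph, root and groups. We run the randomized $O(\log n\cdot\log^2 k)$-approximation algorithm of Thm.~\ref{GSTub} to obtain a tree $T$ rooted at $r$ that covers every group. If the algorithm returns a connected subgraph containing $r$ rather than a tree, we first prune it to a spanning tree of its vertex set; this keeps coverage and does not increase cost. We then apply \textbf{O2}: a depth-first traversal of $T$ from $r$, returning along each edge. This yields a closed walk $\tau$ from $r$ with $V(\tau)=V(T)$, so $\tau$ covers all groups and $c(\tau)\le 2c(T)$.

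\textbf{Cost bound.} Let $\tau^*$ be an optimal \gip tour and $T^*$ an optimal \gst tree. By \textbf{O1}, the edge set of $\tau^*$ is a connected subgraph containing $r$ that covers all groups. Removing cycle-closing edges leaves a covering tree, so $c(T^*)\le c(\tau^*)$. Chaining the bounds gives
\[
c(\tau)\le 2c(T)\le 2\cdot O(\log n\cdot\log^2 k)\, c(T^*)\le O(\log n\cdot\log^2 k)\, c(\tau^*).
\]
The DFS traversal runs in linear time, so the whole procedure is polynomial.

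\textbf{Main subtleties.} These are bookkeeping rather than real obstacles.

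First, "efficient" here means randomized polynomial time, since Thm.~\ref{GSTub} is randomized. I would state that the guarantee holds in expectation or with high probability, inherited from the \gst algorithm. Alternatively, one could invoke the deterministic greedy algorithm of Chekuri et al.\ if its ratio is suitable.

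Second, the \gip tour may traverse an edge twice, whereas the MILP formulation forbids this. Def.~\ref{def:gipp} allows arbitrary closed walks, so the bound is stated for that definition. If a tour without repeated edges is needed, the walk can be shortcut using Lemma~1 of \citet{mizutani2024leveraging} or through metric closure without increasing cost.

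Third, the argument relies on edge symmetry, both for returning along tree edges and for \textbf{O1} treating a tour as a tree. This is why the result is restricted to undirected \gip.
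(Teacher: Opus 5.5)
Your proposal is correct and follows the paper's proof essentially verbatim: run the (randomized) \gst approximation of Thm.~\ref{GSTub}, double the resulting tree via \textbf{O2} to get a tour of cost at most $2c(T)$, and use \textbf{O1} to bound the optimal \gst cost by the optimal \gip cost, absorbing the factor $2$ into the $O(\cdot)$. Your remarks on randomization and on the undirected setting agree with Thm.~\ref{GSTub} and the paper's footnote. The side comment about shortcutting repeated edges is unnecessary, since Def.~\ref{def:gipp} already permits arbitrary closed walks.
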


\begin{proof}
    Given a \gip instance with optimal solution $T^*$,
    we apply the approximation algorithm $A_{\gst}$ provided by Thm.~\ref{GSTub}, yielding a solution $T_A$ to the \gst problem for which $|T_A|\leq O(\log n \cdot \log^2k)\cdot |T^*|$. 
    We build a tour $\tau$ such that $|\tau|\leq 2\cdot|T_A|$ traveling $T_A$ back and forth by Observation \textbf{O2}.
    However, as the optimal \gip solution $\tau^*$ is also a \gst solution by Observation \textbf{O1}, $|T^*|\leq |\tau^*|$, it follows that 
    \[|\tau|\leq 2\cdot|T_A| \leq 2\cdot O(\log n \cdot \log^2k)\cdot|T^*|\leq O(\log n \cdot \log^2k) \cdot |\tau^*|.\]
    Thus,   $A_{gst}$ is an efficient $O(\log n \log^2 k)$-approximation algorithm for \gip. \qed
\end{proof}

\section{Partial-coverage \gip} \label{sec:partial-cover}
In this section we briefly describe how to extend the baseline MILP formulation for \gip
\eqref{eq:gtsp-milp-obj}--\eqref{eq:gtsp-binary} to support \emph{partial} group coverage. 
This extension is orthogonal to the choice of subtour-elimination constraints, as it can be plugged into any of the discussed formulations. 
It is based on ideas proposed by \citet{mizutani2024leveraging}, and we include it here for completeness.

Given a desired number of POIs to be covered, $q \le k$, we introduce additional binary decision variables $z_i \in \{0,1\}$ for all $i \in K$, indicating whether POI $i$ is selected for inspection. The requirement to cover at least $q$ POIs is enforced by the constraint
\begin{equation}
\sum_{i \in K} z_i \ge q .
\end{equation}

The group-coverage constraint~\eqref{eq:gtsp-groups} is then modified so that coverage is enforced only for selected POIs:
\begin{equation}
\sum_{v \in S_i} \sum_{u \in N^-(v)} x_{uv} \ge z_i,
\quad \forall i \in K .
\label{eq:gtsp-groups-choice}
\end{equation}

This \gip variant is particularly interesting in the aspect of model tightness tradeoffs. 
On one hand, partial coverage relaxes the full \gip problem constraints, easing the discovery of feasible solutions. 
On the other hand, the additional combinatorial explosion of having to chose $q$ out of the $k$ POIs significantly increases the solution space. 
When viewed through the lens of MILP, it looses model integrality gap, as fractional 
solutions may spread partial coverage across many groups via the $z_i$ variables, weakening coverage constraints in the LP relaxation. Eventually, this leads to a problem where finding the \emph{optimal} partial-coverage solution may be even more challenging than finding the optimal full-coverage solution. We leave empirical evaluation of this variant to future work.

\section{Additional Details on the Primal Heuristic for \gip} \label{appendix:gip-heuristic}
We provide additional details regarding our \gip-specific primal heuristic. The heuristic consists of the following three phases (visualized in Fig.~\ref{fig:heuristic_phases} and~\ref{fig:PH_phase3} and detailed shortly). 
At a high level, we first modify edge costs based on the current LP relaxation, enabling the heuristic to leverage guidance from fractional solutions encountered during the search. Then, inspired by the \gst problem~\cite{koch1998solving, takahashi1980approximate}, we greedily construct a tree that connects the root to at least one vertex from each POI group. Finally, we transform this group-covering tree into a feasible \gip tour by adding a small set of connecting edges. 

We mention that a common approach to generate incumbent solutions in MILP problems is to round fractional solutions to enforce integrality. The LP-based cost modification strategy~\cite{koch1998solving} generalizes this idea by using fractional values to guide the construction of integer solutions indirectly: instead of rounding variables explicitly, edge costs are discounted according to their fractional values in the current relaxation. 
This biases the heuristic toward structures already supported by the LP relaxation, improving solution quality and alignment with the solver’s ongoing search. From this perspective, our heuristic can be viewed as a problem-aware LP-rounding step embedded within the BnC process.

We now provide a detailed description for each of the three phases. 

\niceparagraph{Phase~1-LP-based cost-discounted graph.}
Let $\{x_e\}_{e\in E}$ denote a fractional solution obtained from an LP relaxation solved during the BnC process.
We construct a new weighted graph $G_n = (V, E, c_n)$ over the same vertices and edges as the original graph $G = (V, E, c)$, where each edge $e \in E$ is assigned a modified cost
$
c_n(e) := c(e)\cdot(1 - x_e).
$
This modification reflects the intuition that edges partially selected by the LP relaxation are more likely to belong to high-quality integer solutions. By lowering their cost, the heuristic is encouraged to follow the structure suggested by the relaxation, while still allowing alternative edges when necessary.

\niceparagraph{Phase~2-Group-covering tree.}
In the second phase, illustrated in Fig.~\ref{fig:heuristic_phases}, we construct a group-covering tree $T$ rooted at $r$ in the discounted graph~$G_n$ that covers all groups, i.e.,~$\forall S_i \in \mathcal{S},~T \cap S_i \neq \emptyset$.
To this end, we maintain a set~$\mathcal{U}$ of uncovered groups, initialized as $\mathcal{U} := \mathcal{S}$, and incrementally expand the tree, which initially consists of the root vertex $r$ alone.

Before growing the tree, we compute all-pairs shortest paths in $G_n$, obtaining for every pair of vertices $u,v \in V$ a shortest (weighted) path $\pi^*(u,v)$ and its cost~$d^*(u,v)$ with respect to the modified edge costs $c_n$. This forms the computational bottleneck of the heuristic, as it is recalculated whenever a new LP-relaxed solution is found and graph weights are updated.
At each iteration, we define the set of candidate vertices
\[
V_{\text{covered}} := \{ v \in V \setminus T \mid \exists S_i \in \mathcal{U} \text{ such that } v \in S_i \},
\]
consisting of vertices that belong to at least one uncovered group.
Among these candidates, we greedily select the vertex $v' \in V_{\text{covered}}$ that is closest to the current tree, i.e.,
\[
v' := \arg\min_{v \in V_{\text{covered}}} \min_{w \in T} d^*(w,v).
\]
The selected vertex $v'$ is added to the tree by inserting the shortest path $\pi^*(u,v')$, where
\[
u := \arg\min_{w \in T} d^*(w,v').
\]
This process is repeated until $\mathcal{U}=\emptyset$, yielding a tree that intersects all groups in~$\mathcal{S}$.

\usetikzlibrary{shapes.geometric, arrows, positioning}

\definecolor{logicblue}{HTML}{729fcf}
\definecolor{edgegreen}{HTML}{6ab06a}
\definecolor{dashedorange}{HTML}{f9d9a5}
\definecolor{nodeblue}{HTML}{d9e8fb}
\definecolor{nodeorange}{HTML}{fef0d1}
\definecolor{nodered}{HTML}{f8cecc}
\definecolor{nodegray}{HTML}{e1e1e1}

\tikzset{
    common canvas/.style={scale=0.45, transform shape, baseline=(current bounding box.center)},
    main node/.style={circle, draw, ultra thick, minimum size=9mm, font=\large\sffamily},
    edge label/.style={font=\small, inner sep=3pt, fill=white},
    base/.style={circle, draw, ultra thick, minimum size=7mm, inner sep=0pt},
    standard/.style={base, fill=white},
    bigLabelNode/.style={base, fill=white, font=\Large\bfseries},
    ghost/.style={base, dashed, draw=gray!70},
    blueNode/.style={ghost, fill=nodeblue},
    orangeNode/.style={ghost, fill=nodeorange},
    redNode/.style={ghost, fill=nodered},
    grayNode/.style={ghost, fill=nodegray},
    solidEdge/.style={draw=edgegreen, line width=1.8pt}, 
    dashedEdge/.style={draw=black!60, line width=1.2pt, dashed},
    orangeEdge/.style={draw=dashedorange, line width=1.8pt, dashed}
}

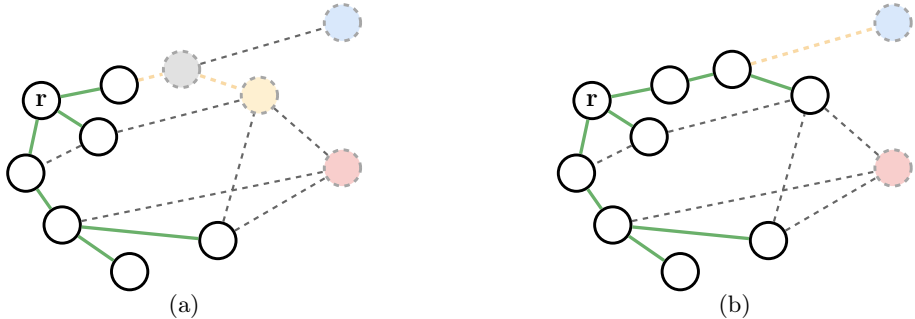
\begin{figure}[t]
    \centering
    \begin{minipage}{0.4\linewidth}
        \centering
        \resizebox{\linewidth}{!}{
            \begin{tikzpicture}[baseline=(current bounding box.center)]
                \node[bigLabelNode] (r) at (0,3) {r}; 
                \node[standard] (n1) at (1.5,3.3) {};
                \node[standard] (n2) at (1.1,2.3) {};
                \node[standard] (n3) at (-0.3,1.6) {};
                \node[standard] (n4) at (0.4,0.6) {};
                \node[standard] (n5) at (3.4,0.3) {};
                \node[standard] (n6) at (1.7,-0.3) {};
                \node[grayNode] (g_gray) at (2.7,3.6) {};
                \node[blueNode] (g_blue) at (5.8,4.5) {};
                \node[orangeNode] (g_orange) at (4.2,3.1) {};
                \node[redNode] (g_red) at (5.8,1.7) {};
                \path[solidEdge] (r)--(n1) (r)--(n2) (r)--(n3) (n3)--(n4) (n4)--(n5) (n4)--(n6);
                \path[dashedEdge] (g_gray)--(g_blue) (n2)--(g_orange) (g_orange)--(g_red) (n4)--(g_red) (n5)--(g_orange) (n5)--(g_red) (n2)--(n3);
                \path[orangeEdge] (n1)--(g_gray) (g_gray)--(g_orange);
            \end{tikzpicture}
        }
        \centerline{\small (a)}
    \end{minipage}\hfill
    \begin{minipage}{0.4\linewidth}
        \centering
        \resizebox{\linewidth}{!}{
            \begin{tikzpicture}[baseline=(current bounding box.center)]
                \node[bigLabelNode] (r) at (0,3) {r}; 
                \node[standard] (n1) at (1.5,3.3) {};
                \node[standard] (n2) at (1.1,2.3) {};
                \node[standard] (n3) at (-0.3,1.6) {};
                \node[standard] (n4) at (0.4,0.6) {};
                \node[standard] (n5) at (3.4,0.3) {};
                \node[standard] (n6) at (1.7,-0.3) {};
                \node[standard] (n7) at (2.7,3.6) {};
                \node[standard] (n8) at (4.2,3.1) {};
                \node[blueNode] (g_blue) at (5.8,4.5) {};
                \node[redNode] (g_red) at (5.8,1.7) {};
                \path[solidEdge] (r)--(n1) (r)--(n2) (r)--(n3) (n3)--(n4) (n4)--(n5) (n1)--(n7) (n7)--(n8) (n4)--(n6);
                \path[dashedEdge] (n2)--(n8) (n8)--(g_red) (n4)--(g_red) (n5)--(n8) (n5)--(g_red) (n2)--(n3);
                \path[orangeEdge] (n7)--(g_blue);
            \end{tikzpicture}
        }
        \centerline{\small (b)}
    \end{minipage}
    \caption{Illustration of Phase 2 of the primal heuristic for \gip: A group covering tree (green) is built from the root $r$ towards the closest vertex among all  the vertices covering an uncovered group (colored vertices); (a) The yellow vertex is chosen as the closest vertex to the tree and a path from nearest tree vertex (orange edges) is found. (b) After the path was added, the tree continues to grow greedily until all groups are covered.}\label{fig:heuristic_phases}
\end{figure}

\niceparagraph{Phase~3-Tree-to-tour conversion.}
In the final phase (illustrated in Fig.~\ref{fig:PH_phase3}), we transform the group-covering tree $T$ into a feasible \gip tour by adding a minimum-cost set of edges so as to make $T$ an Eulerian subgraph.

Concretely, let $O$ denote the set of vertices in $T$ with odd degree. A standard approach to making the graph Eulerian is to compute a minimum-weight perfect matching $M$ on the complete graph induced by $O$, where edge weights correspond to shortest-path distances in $G_n$, and to add the matched shortest paths to~$T$. This procedure follows the matching phase of Christofides' algorithm~\cite{christofides2022worst} and can be implemented using Blossom-based algorithms for minimum-weight perfect matching~\cite{cook1999computing,edmonds1965paths}. The matching adds exactly one incident edge to each odd-degree vertex, resulting in the augmented graph $T+M$ being Eulerian. In our context, using an optimal matching minimizes the additional cost required to make $T$ traversable and typically yields higher-quality tours.


We refer to the heuristic described so far as \emph{covering tree + minimum matching}. 
For scalability and speed, we consider a lightweight alternative based on \emph{greedy} nearest-neighbor matching on $O$, in which the two closest unmatched vertices are repeatedly paired and removed~\cite{karp1981maximum,vazirani2001approximation}. We refer to this heuristic as \emph{covering tree + greedy matching}. 

In practice, the greedy matching variant substantially reduces heuristic computation time and often produces solutions close to those obtained with optimal matching at a fraction of the computational cost. The trade-off between solution quality and runtime for these two variants is evaluated in Sec.~\ref{heuristic-eval}: while minimum-weight perfect matching gives slightly improved heuristics, the greedy matching approach enables better scaling to very large \gip instances. After augmenting $T$ using the chosen matching strategy, we extract an Eulerian tour and shortcut repeated vertices to obtain the final \gip solution.

Throughout the experiments conducted in Sec.~\ref{sec:evaluation}, the \emph{covering tree + greedy matching} was the applied heuristic.

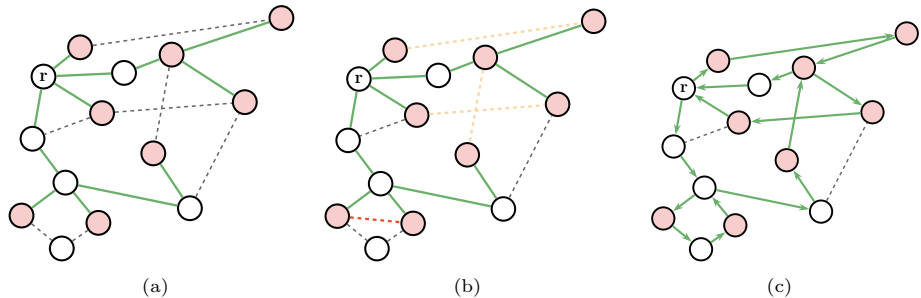
\begin{figure}[tbp]
     \centering
     \begin{subfigure}[b]{0.32\textwidth}
         \centering
         
        \resizebox{\textwidth}{!}{\begin{tikzpicture}[
    base/.style={circle, draw, ultra thick, minimum size=6.5mm, inner sep=0pt},
    standard/.style={base, fill=white},
    visited/.style={base, fill=nodered},
    root/.style={base, fill=white, minimum size=6.5mm, font=\Large\bfseries}, 
    treeEdge/.style={draw=edgegreen, line width=1.8pt},
    queryEdge/.style={draw=black!60, line width=1.2pt, dashed}
]

    \node[root] (r) at (0,3) {r}; 
    
    \node[visited] (v1) at (1,3.8) {};
    \node[standard] (n1) at (2.2,3.1) {};
    \node[visited] (v2) at (3.5,3.6) {};
    \node[visited] (v3) at (6.5,4.6) {};
    \node[visited] (v4) at (1.6,2) {};
    \node[visited] (v5) at (5.5,2.3) {};
    \node[standard] (n2) at (-0.3,1.3) {};
    \node[standard] (n3) at (0.6,0.1) {};
    \node[visited] (v7) at (-0.6,-0.8) {};
    \node[visited] (v8) at (1.5,-1) {};
    \node[standard] (n4) at (0.5,-1.7) {};
    \node[standard] (n5) at (4, -0.6) {};
    \node[visited] (v9) at (3,0.9) {};

    \draw[treeEdge] (r) -- (v1);
    \draw[treeEdge] (r) -- (n1);
    \draw[treeEdge] (n1) -- (v2) -- (v3);
    \draw[treeEdge] (r) -- (v4);
    \draw[treeEdge] (r) -- (n2) -- (n3);
    \draw[treeEdge] (n3) -- (v7);
    \draw[treeEdge] (n3) -- (v8);
    \draw[treeEdge] (n3) -- (n5);
    \draw[treeEdge] (n5) -- (v9);
    \draw[treeEdge] (v2) -- (v5);

    \draw[queryEdge] (v1) -- (v3);
    \draw[queryEdge] (v4) -- (v5);
    \draw[queryEdge] (v4) -- (n2);
    \draw[queryEdge] (v2) -- (v9);
    \draw[queryEdge] (n5) -- (v5);
    \draw[queryEdge] (v7) -- (n4) -- (v8);

\end{tikzpicture}
        }
         \caption{}
         \label{fig:panel_a}
     \end{subfigure}
     \hfill
     \begin{subfigure}[b]{0.32\textwidth}
         \centering
         \resizebox{\textwidth}{!}{\definecolor{dashedred}{HTML}{EE4B2B}
\definecolor{dashedblue}{HTML}{0096FF}

\begin{tikzpicture}[
    base/.style={circle, draw, ultra thick, minimum size=6.5mm, inner sep=0pt},
    standard/.style={base, fill=white},
    visited/.style={base, fill=nodered},
    root/.style={base, fill=white, minimum size=6.5mm, font=\Large\bfseries},
    treeEdge/.style={draw=edgegreen, line width=1.8pt},
    queryEdge/.style={draw=black!60, line width=1.2pt, dashed},
    blueEdge/.style={draw=dashedblue, line width=1.8pt, dashed},
    redEdge/.style={draw=dashedred, line width=1.8pt, dashed}
]

    \node[root] (r) at (0,3) {r}; 
    \node[visited] (v1) at (1,3.8) {};
    \node[standard] (n1) at (2.2,3.1) {};
    \node[visited] (v2) at (3.5,3.6) {};
    \node[visited] (v3) at (6.5,4.6) {};
    \node[visited] (v4) at (1.6,2) {};
    \node[visited] (v5) at (5.5,2.3) {};
    \node[standard] (n2) at (-0.3,1.3) {};
    \node[standard] (n3) at (0.6,0.1) {};
    \node[visited] (v7) at (-0.6,-0.8) {};
    \node[visited] (v8) at (1.5,-1) {};
    \node[standard] (n4) at (0.5,-1.7) {};
    \node[standard] (n5) at (4, -0.6) {};
    \node[visited] (v9) at (3,0.9) {};
    
    \draw[treeEdge] (r) -- (v1);
    \draw[treeEdge] (r) -- (n1);
    \draw[treeEdge] (n1) -- (v2) -- (v3);
    \draw[treeEdge] (r) -- (v4);
    \draw[treeEdge] (r) -- (n2) -- (n3);
    \draw[treeEdge] (n3) -- (v7);
    \draw[treeEdge] (n3) -- (v8);
    \draw[treeEdge] (n3) -- (n5);
    \draw[treeEdge] (n5) -- (v9);
    \draw[treeEdge] (v2) -- (v5);

    \draw[blueEdge] (v1) -- (v3);
    \draw[blueEdge] (v4) -- (v5);
    \draw[queryEdge] (v4) -- (n2);
    \draw[blueEdge] (v2) -- (v9);
    \draw[queryEdge] (n5) -- (v5);
    \draw[queryEdge] (v7) -- (n4) -- (v8);

    \draw[redEdge] (v7) -- (v8);
    
\end{tikzpicture}
         }
         \caption{}
         \label{fig:panel_b}
     \end{subfigure}
     \hfill
     \begin{subfigure}[b]{0.32\textwidth}
         \centering
         \resizebox{\textwidth}{!}{\usetikzlibrary{arrows.meta}

\definecolor{edgegreen}{HTML}{6ab06a}
\definecolor{nodered}{HTML}{f8cecc}
\definecolor{dashedorange}{HTML}{f9d9a5}
\definecolor{dashedred}{HTML}{EE4B2B}

\begin{tikzpicture}[
    base/.style={circle, draw, ultra thick, minimum size=6.5mm, inner sep=0pt},
    standard/.style={base, fill=white},
    visited/.style={base, fill=nodered},
    root/.style={base, fill=white, minimum size=6.5mm, font=\Large\bfseries},
    OtreeEdge/.style={draw=edgegreen, line width=1.8pt, -{Stealth[length=3mm, width=2mm]}},
    queryEdge/.style={draw=black!60, line width=1.2pt, dashed},
    orangeEdge/.style={draw=dashedorange, line width=1.8pt, dashed},
    redEdge/.style={draw=dashedred, line width=1.8pt, dashed}
]

    \node[root] (r) at (0,3) {r}; 
    \node[visited] (v1) at (1,3.8) {};
    \node[standard] (n1) at (2.2,3.1) {};
    \node[visited] (v2) at (3.5,3.6) {};
    \node[visited] (v3) at (6.5,4.6) {};
    \node[visited] (v4) at (1.6,2) {};
    \node[visited] (v5) at (5.5,2.3) {};
    \node[standard] (n2) at (-0.3,1.3) {};
    \node[standard] (n3) at (0.6,0.1) {};
    \node[visited] (v7) at (-0.6,-0.8) {};
    \node[visited] (v8) at (1.5,-1) {};
    \node[standard] (n4) at (0.5,-1.7) {};
    \node[standard] (n5) at (4, -0.6) {};
    \node[visited] (v9) at (3,0.9) {};

    \draw[OtreeEdge] (r) -> (v1);
    \draw[OtreeEdge] (n1) -> (r);
    \draw[OtreeEdge] (v2) -> (n1);
    \draw[OtreeEdge] (v3) -> (v2);
    \draw[OtreeEdge] (v4) -> (r);
    \draw[OtreeEdge] (r) -> (n2);
    \draw[OtreeEdge] (n2)-> (n3);
    \draw[OtreeEdge] (n3) -- (v7);
    \draw[OtreeEdge] (v8) -> (n3);
    \draw[OtreeEdge] (n3) -> (n5);
    \draw[OtreeEdge] (n5) -- (v9);
    \draw[OtreeEdge] (v2) -- (v5);
    \draw[OtreeEdge] (v1) -- (v3);
    \draw[OtreeEdge] (v5) -> (v4);
    \draw[OtreeEdge] (v9) -> (v2);
    \draw[OtreeEdge] (v7) -> (n4); 
    \draw[OtreeEdge] (n4) -> (v8); 

    \draw[queryEdge] (v4) -- (n2);
    \draw[queryEdge] (n5) -- (v5);

\end{tikzpicture}
         }
         \caption{}
         \label{fig:panel_c}
     \end{subfigure}
        \caption{Illustration of Phase 3 of the primal heuristic for \gip: (a) The phase starts with a tree  (green) connecting the root to a covering set of vertices (red). (b) A matching is found between odd-degree tree vertices, resulting in an Eulerian graph. The matching may include actual graph edges (blue) or virtual edges representing shortest paths (red) (c) An Eulerian tour is imposed on the graph, orienting the selected edges to form a valid \gip tour.
        }
        \label{fig:PH_phase3}
\end{figure}





\section{Additional Experimental Results} \label{appendix:complement-eval}

In this section we complement the experimental discussion in Sec.~\ref{sec:evaluation} with the following: (i) We elaborate on the inspection-planning simulation, used in the \texttt{Controlled} instances. (ii) We complement the scale-range analysis with small scale experiments. (iii) We extend our ablation study regarding BnC algorithmic building blocks, with a primal-heuristic discussion.

\subsection{Inspection-Planning Simulator}\label{appendix:simulator}
While real-world datasets capture practical inspection scenarios, simulation enables independent control over key problem parameters such as graph size and number of POIs,
allowing us to isolate and study the scaling behavior of different MILP formulations.
To this end, we developed 
a configurable inspection-planning simulator that generates problem instances on a point robot model, preserving the structural characteristics of roadmap-based inspection
graphs, while enabling precise control over instance scale and coverage complexity.

The point robot operates in a three-dimensional configuration space
$\mathcal{C} = \mathbb{R}^2 \times \mathrm{SO}(2)$, representing planar position and orientation. It is equipped with a camera characterized by a fixed field-of-view (FOV) angle and a maximal inspection range. Each scenario is generated by randomly placing
two-dimensional obstacles, a set of POIs $\mathcal{P}$, and a designated starting configuration $r$ in a bounded planar workspace.

An RRG roadmap is then constructed in the robot’s configuration space,
respecting obstacle constraints using standard collision-checking procedures
\cite{karaman2010incremental}. Vertices correspond to collision-free robot configurations, and edges represent dynamically feasible local motions with associated traversal costs.
Based on the camera’s FOV and inspection range, each vertex $v \in V$ is assigned a visibility set $\chi(v) \subseteq \mathcal{P}$, consisting of the POIs that can be inspected when the robot is positioned at $v$. This process yields a graph inspection planning instance as defined in Def.~\ref{def:gipp}. In the experiments below, we systematically vary the roadmap size $n=|V|$ and the number of POIs $k=|\mathcal{P}|$ to analyze solver behavior across a wide range of problem scales.

The simulation environment is a planar maze populated with randomly placed L-shaped obstacles, inducing narrow passages and nontrivial connectivity patterns. POIs are uniformly distributed in the free space, and inspection plans must navigate the roadmap to ensure coverage while maintaining global connectivity. 
The process of generating a representative simulated instance is demonstrated in Fig.~\ref{fig:sim-building}. 
\begin{figure}[t]
    \centering
    \begin{subfigure}[t]{0.32\linewidth}
        \centering
        \includegraphics[height=4cm, width=\linewidth]{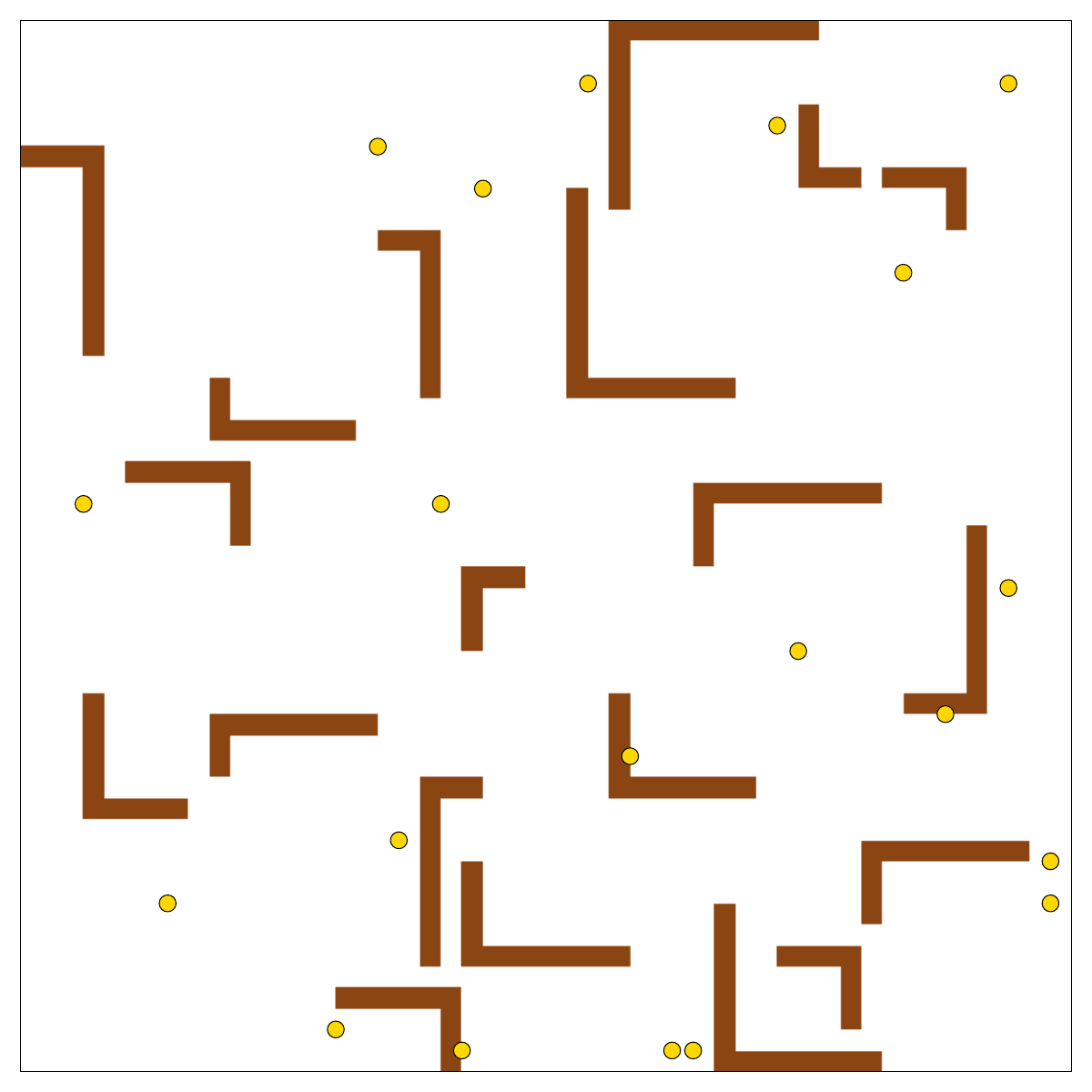}
        \caption{}
        \label{fig:sampling-a}
    \end{subfigure}\hfill
    \begin{subfigure}[t]{0.32\linewidth}
        \centering
        \includegraphics[height=4cm, width=\linewidth]{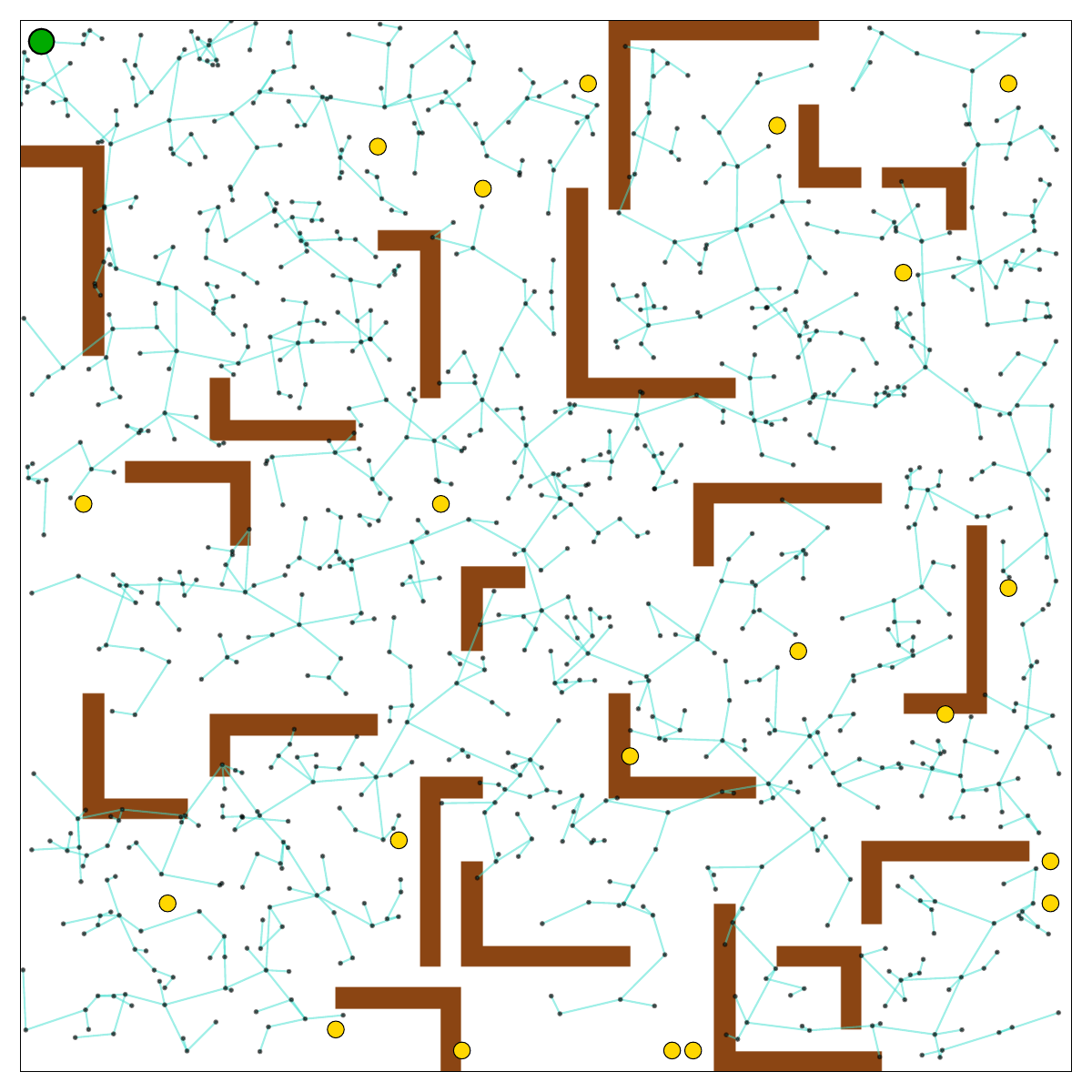}
        \caption{}
        \label{fig:sampling-b}
    \end{subfigure}\hfill
    \begin{subfigure}[t]{0.32\linewidth}
        \centering
        \includegraphics[height=4cm, width=\linewidth]{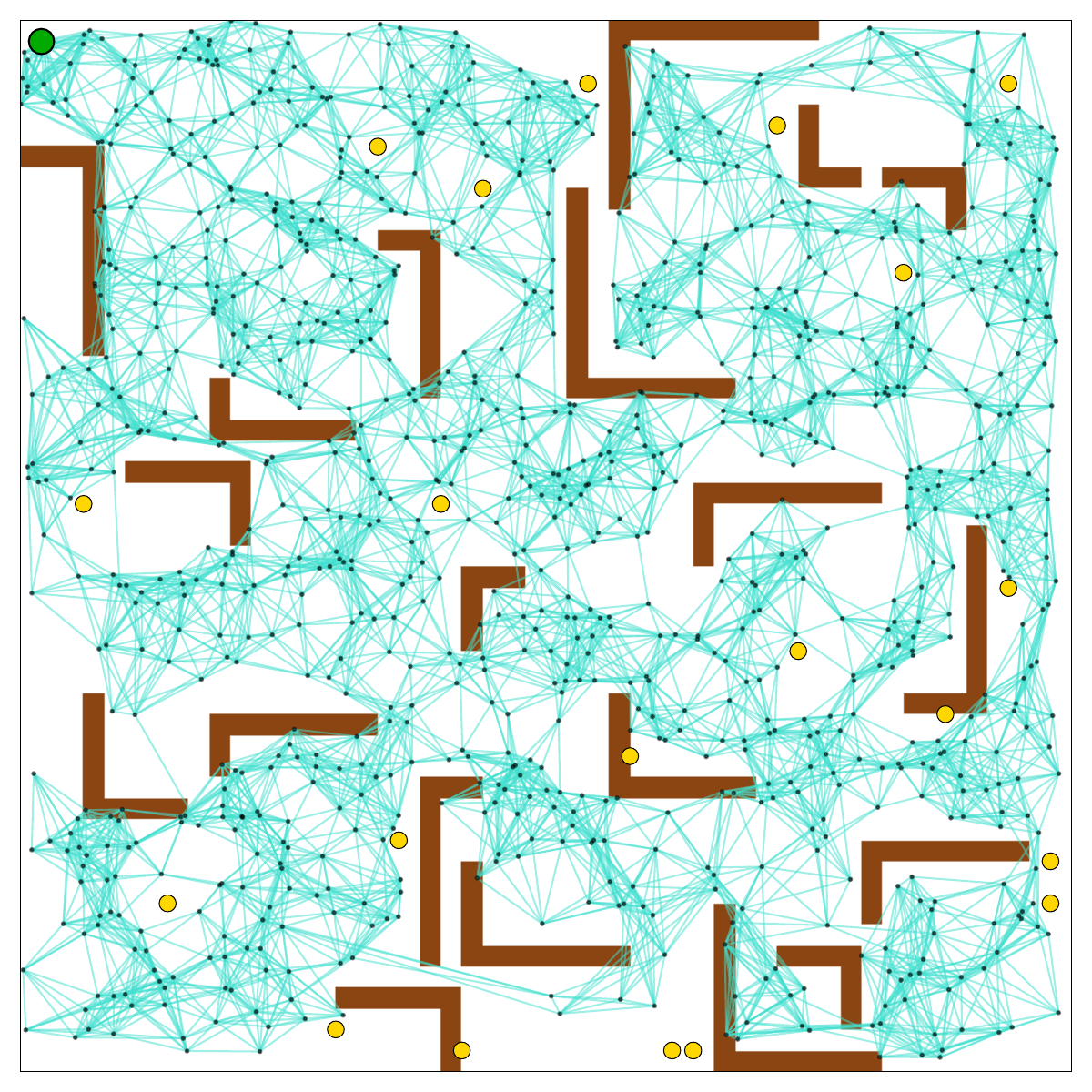}
        \caption{}
        \label{fig:sampling-c}
    \end{subfigure}
    \caption{Inspection planning simulation. (a) The simulation environment is a planar maze of given size, with randomly allocated L-shaped obstacles (in brown). $k=50$ POIs are scattered in the free space at random (in yellow). (b) RRT graph with $n=1,000$ vertices built to span the robot's configuration space with the robot's starting configuration located in the top-left corner and marked in green. 
    (c) Edges are locally added to create an RRG graph which is used as the inspection-planning graph.}
    \label{fig:sim-building}
\end{figure}

\subsection{Evaluation of Small Instances} 
\label{appendix:small-exp}
Recall that in Sec.~\ref{sec:form-comp} we studied the optimality gap on large-scale simulated instances (Fig.~\ref{fig:large_scale_comp}). We now examine the opposite end of the scale spectrum, where tighter formulations become feasible and exact convergence is sometimes attainable in the controlled scenarios. 
Fig.~\ref{fig:small_scale_comp} reports the final optimality gap after a time limit of $500\,\mathrm{s}$. Solver were applied in identical configuration to Sec.~\ref{sec:evaluation}.

\begin{figure}
    \centering
    \begin{subfigure}[b]{0.24\linewidth}
        \centering
        \includegraphics[width=\linewidth]{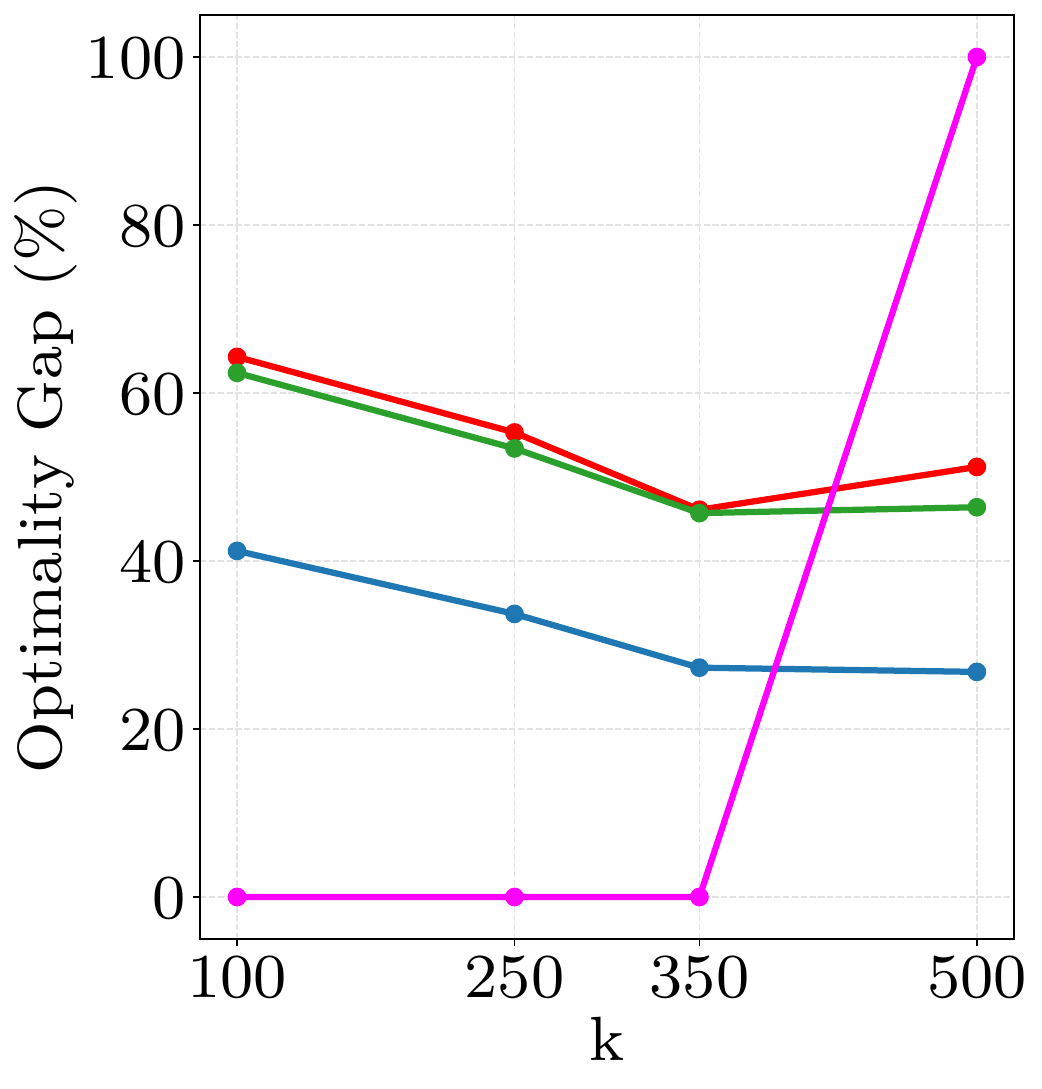}
        \caption{$n=250$}
    \end{subfigure}
    \hfill
    \begin{subfigure}[b]{0.24\linewidth}
        \centering
        \includegraphics[width=\linewidth]{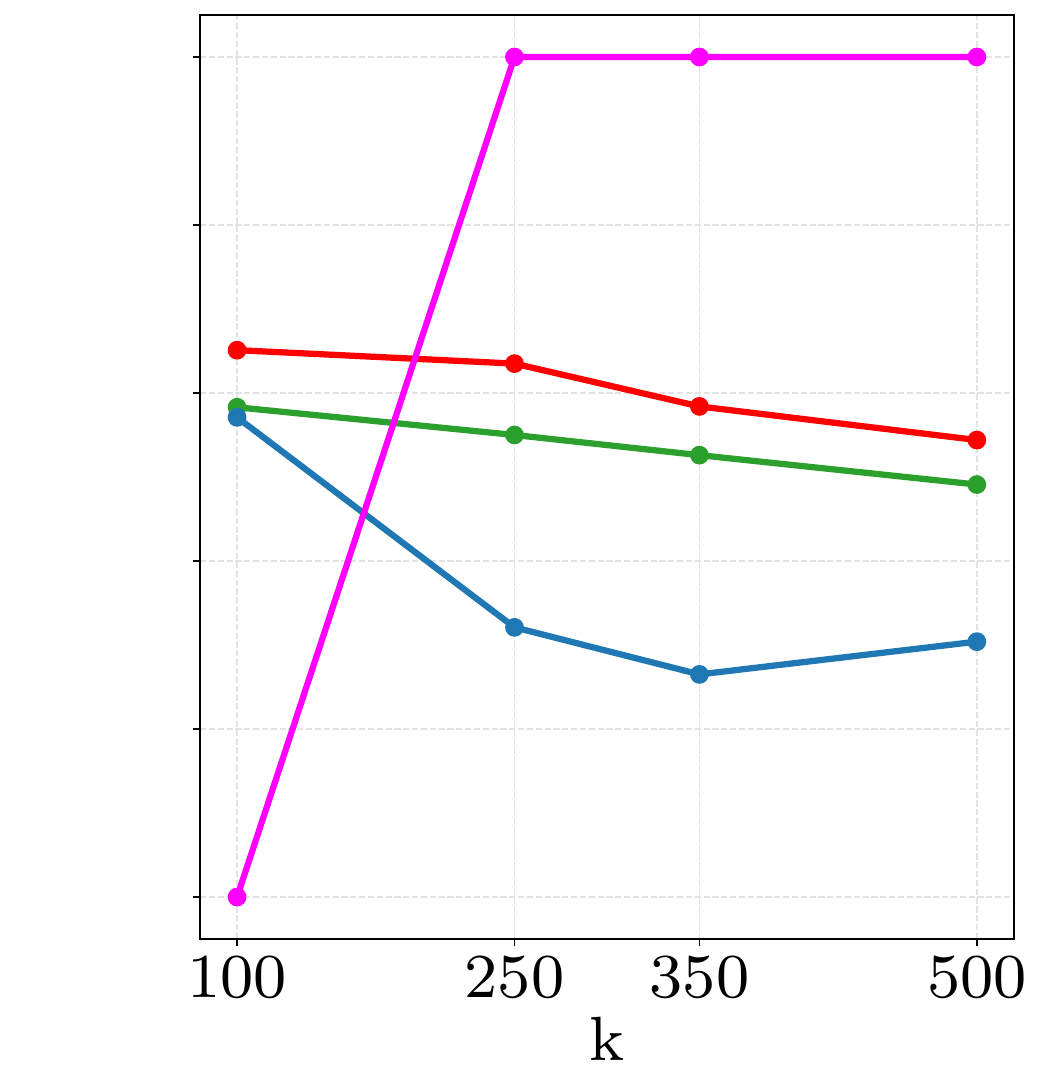}
        \caption{$n=500$}
    \end{subfigure}
    \hfill
    \begin{subfigure}[b]{0.24\linewidth}
        \centering
        \includegraphics[width=\linewidth]{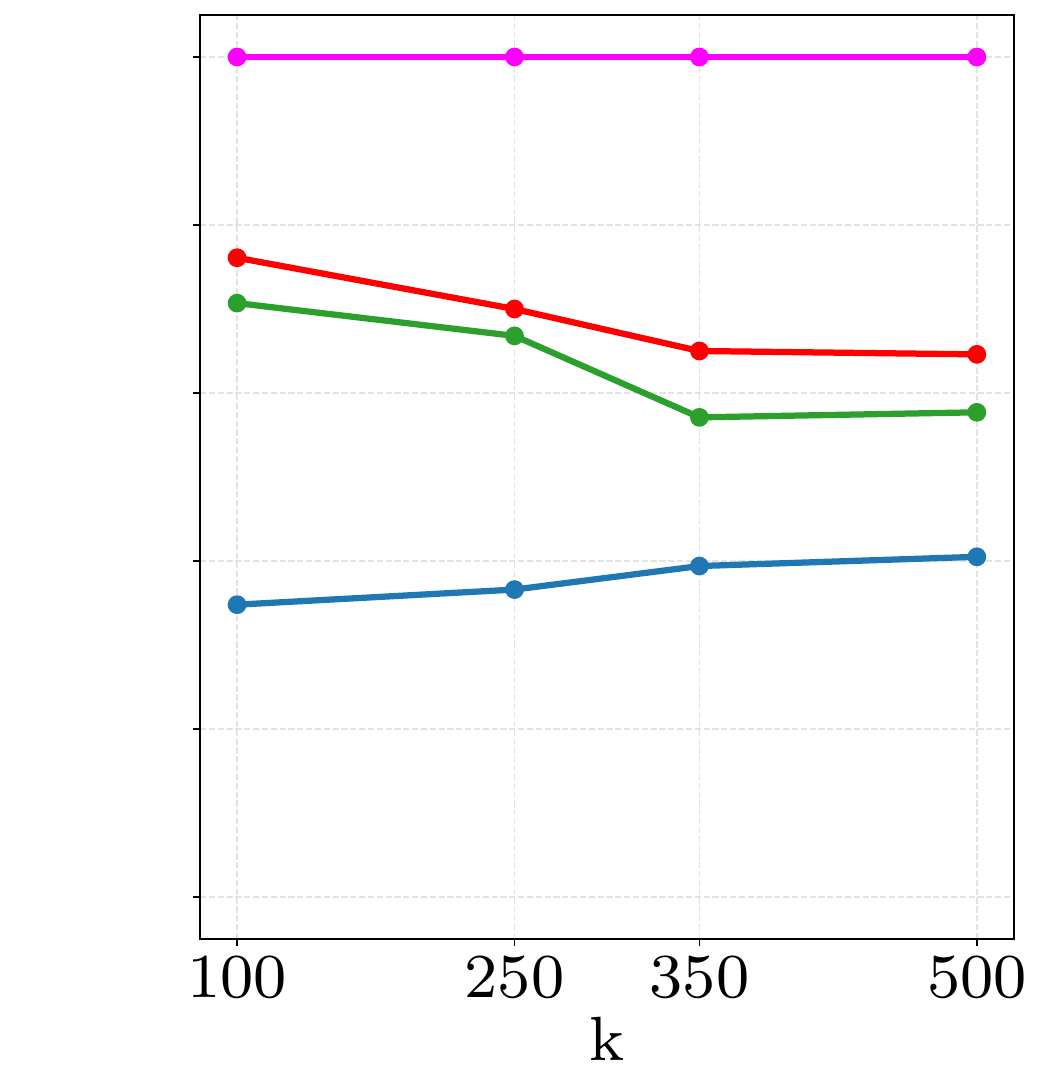}
        \caption{$n=750$}
    \end{subfigure}
    \hfill
    \begin{subfigure}[b]{0.24\linewidth}
        \centering
        \includegraphics[width=\linewidth]{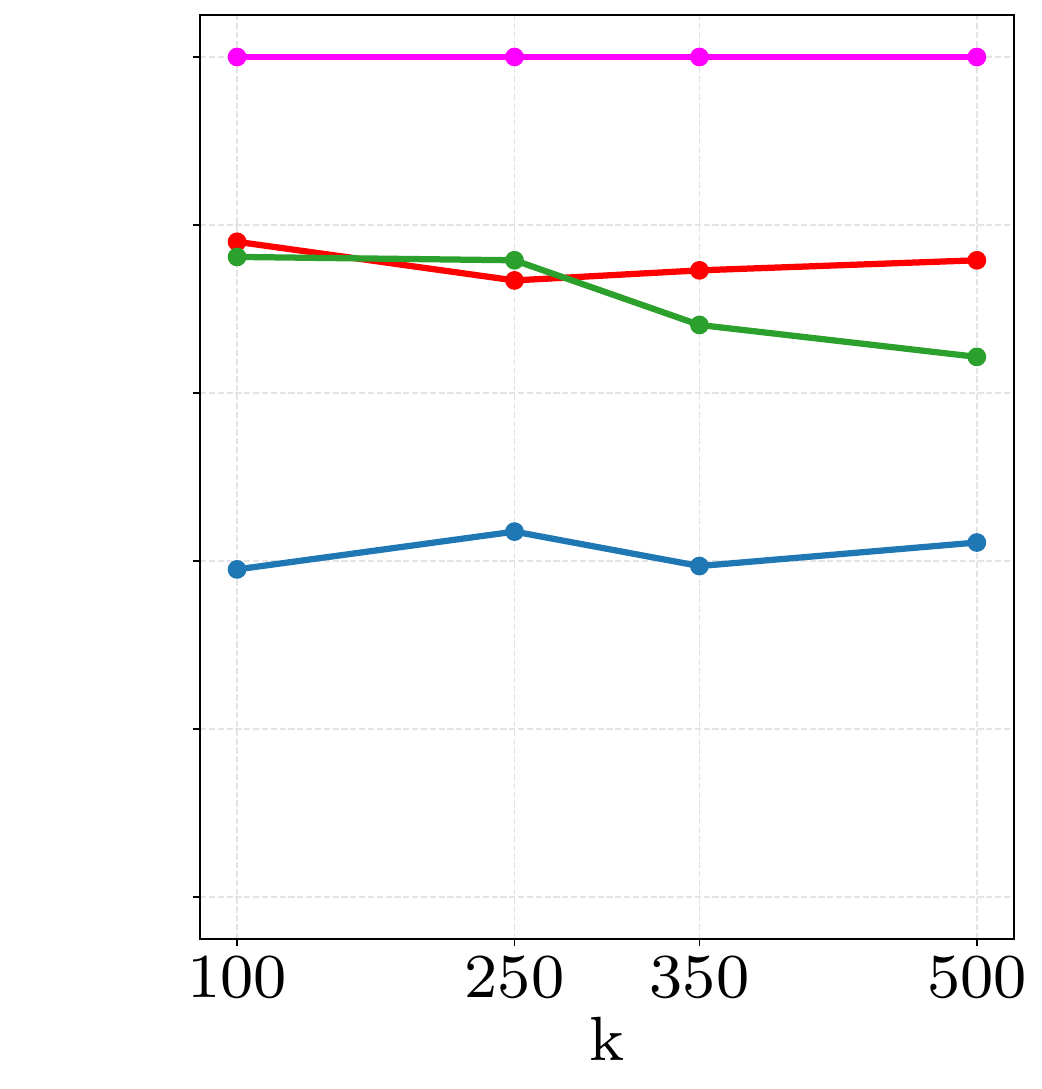}
        \caption{$n=1000$}
    \end{subfigure}

    \vspace{0.3cm} 

    \centering
    \includegraphics[width=0.8\linewidth]{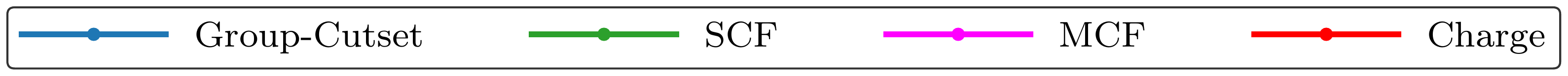}

    \caption{Optimality gap on small-scale simulated instances after $500\,\mathrm{s}$ for varying graph sizes ($n$) and number of POIs ($k$).}
    \label{fig:small_scale_comp}
\end{figure}

For instances in which the \algname{MCF} solver fits within available memory, it consistently solves the problem to optimality. This behavior is expected, as the \algname{MCF} solver provides a tight and complete encoding of connectivity and group-coverage constraints, making it the preferred choice whenever problem size permits.

In contrast, the more compact solvers (\algname{Charge} and \algname{SCF}) exhibit significantly larger optimality gaps on these instances. The Group-cutset formulation, however, maintains high performance even at small scales, indicating a degree of scale insensitivity.

One possible explanation for the weaker performance of the Charge and SCF formulations on these simulated instances, despite their competitive results on some larger real-world graphs (e.g., \texttt{Bridge-1000}), is the relatively small number of POIs. With fewer groups, coverage constraints become less restrictive, enlarging the feasible region of the LP relaxation  and making tight lower bounds harder to obtain. 
In such settings, formulation strength, measured by lower-bound tightness, becomes the dominant factor, where Group-cutset excels. This interpretation is supported by the observed trend that the optimality gaps for the \algname{Charge} and \algname{SCF} solvers decrease as the number of POIs increases, whereas the gap slightly increases for \algname{group-cutset} solvers.

\subsection{Evaluation of Primal Heuristic}
\label{heuristic-eval}
We evaluate the effectiveness of our problem-specific \gip heuristics discussed in App.~\ref{appendix:gip-heuristic}, and compare them against the heuristic proposed by \citet{mizutani2024leveraging}, referred to as the \emph{ST heuristic}. The ST heuristic first selects vertices greedily based on proximity to the root by finding the appropriate shortest-paths, until a group-covering set is obtained, then constructs a Steiner tree using a minimum spanning tree, which yields a 2-approximation of the optimal Steiner tree connecting those chosen vertices. Finally, their approach produces a tour by traversing the tree back and forth. We note that as presented, this heuristic is only applicable in an undirected graph regime. 

\begin{figure}[t]
    \centering
    \begin{subfigure}[t]{0.48\linewidth}
        \centering
        \includegraphics[width=\linewidth]{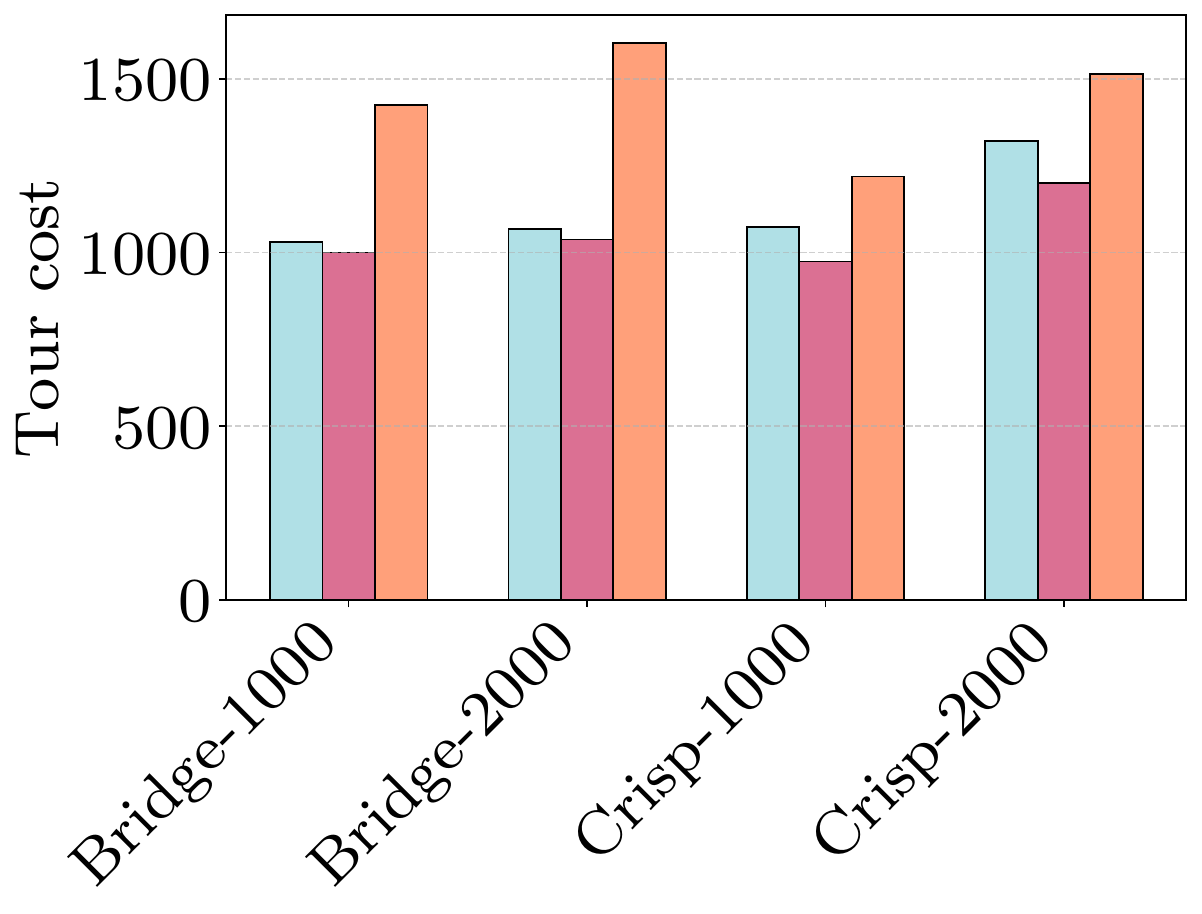}
        \caption{} 
        \label{fig:heuristics-tourcost}
    \end{subfigure}
    \hfill
    \begin{subfigure}[t]{0.48\linewidth}
        \centering
        \includegraphics[width=\linewidth]{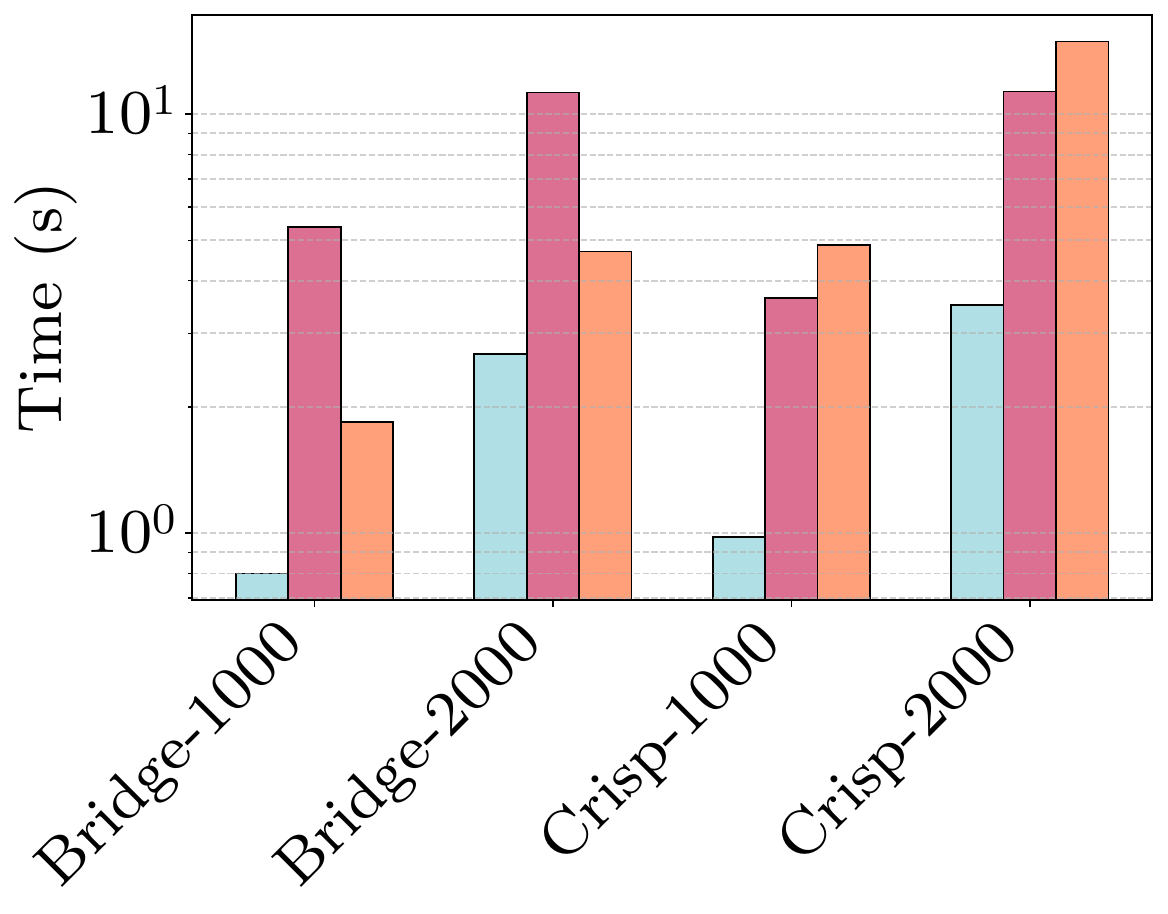}
        \caption{} 
        \label{fig:heuristics-runtime}
    \end{subfigure}
    \hfill

    \vspace{2mm} 
    
    \includegraphics[width=1\linewidth]{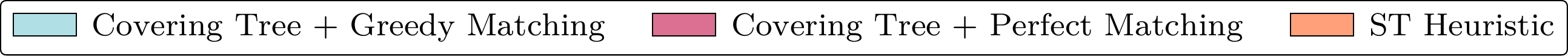}

    \caption{
    Heuristic evaluation.
    \textbf{(a)} Tour cost comparison at the root of the BnC tree.
    Tour costs on \texttt{CRISP} instances are scaled by $1,000 \times$.
    \textbf{(b)} Execution runtime in seconds.}
    \label{fig:ph-ablation1}
\end{figure}

\begin{figure}[h]
    \centering
    \includegraphics[width=0.6\linewidth]{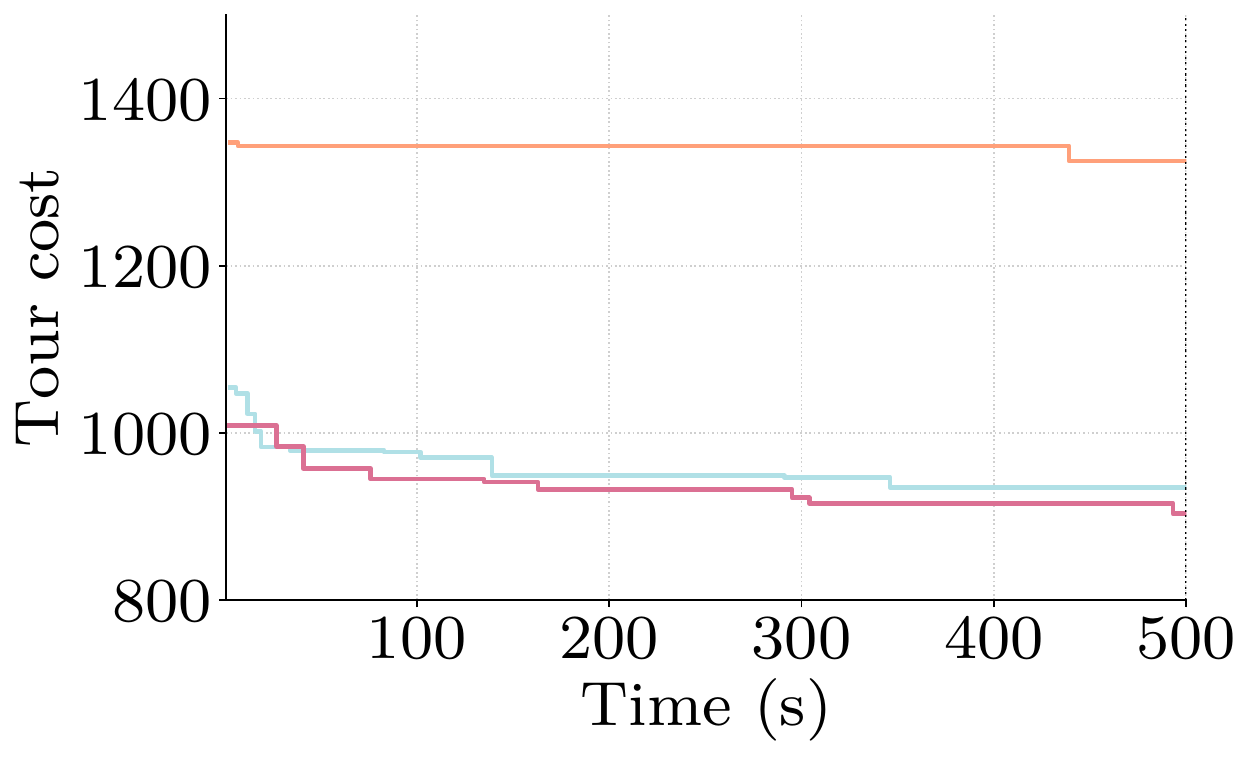} \\
    
    \vspace{2mm} 
    
    \includegraphics[width=1\linewidth]{Illustrations/Legend_Horizontal.pdf}

    \caption{
        Anytime performance on the \texttt{Bridge-1000} instance. 
        The plot shows the tour cost improvement over time for the different heuristics.
    }
    \label{fig:ph-ablation2}
\end{figure}

In contrast, our \gip heuristic builds the group-covering tree iteratively, minimizing the incremental edge cost while covering new groups.
This integrates vertex selection and tree construction into a single phase,
which tends to yield more compact covering trees.
Furthermore, the tree-to-tour conversion goes beyond simple back-and-forth
traversal by augmenting the tree to an Eulerian graph and extracting an Eulerian tour.

We evaluate two variants of our ``Covering-Tree'' heuristic on real-world scenarios, which differ
only in the final tour-construction phase (App.~\ref{appendix:gip-heuristic}):
one using minimum-weight \emph{perfect matching} and the other using \emph{greedy matching}.
Fig.~\ref{fig:ph-ablation1} compares the resulting heuristic solutions\footnote{To assess the intrinsic performance of the heuristics, both
Covering-Tree variants are evaluated without LP-based cost discounting, i.e., using the original edge costs of the instance graph.} on the
\texttt{CRISP} and \texttt{Bridge} instances.
Specifically, Fig.~\ref{fig:ph-ablation1}(a) reports the cost of the heuristic-generated tour. In Fig.~\ref{fig:ph-ablation1}(b) we report the running time of a single heuristic execution on the scenario problem instance, i.e. independently from the BnC solver.

Finally, Fig.~\ref{fig:ph-ablation2} illustrates a representative
\algname{Group-Cutset} solver run for the \texttt{Bridge-1000} instance,
augmented with each heuristic variant, over a time horizon of $t=500$ seconds.

Comparing the two Covering-Tree variants reveals a modest trade-off between solution quality and runtime (Fig.~\ref{fig:ph-ablation1}).
Greedy matching substantially reduces heuristic computation time, at the cost of a mild degradation in solution quality of up to approximately $5\%$.

For solvers with sufficiently long time limits, this degradation has negligible impact on the final solution quality. Fig.~\ref{fig:ph-ablation2} illustrates this behavior, showing that both variants, when used as primal heuristics within the BnC framework, consistently outperform the ST heuristic, with no clear dominance between them in this regime.

However, for the larger instances considered in Sec.~\ref{sec:large-eval},
heuristic runtime becomes a dominant factor.
In this regime, the faster greedy-matching variant provides a clear advantage, a trend we consistently observed during experimentation but do not report explicitly for brevity.
Accordingly, we adopt the greedy-matching variant within the \algname{Group-Cutset} BnC solver for all experimental evaluations
(Sec.~\ref{sec:evaluation}).

Comparing these results to those obtained using Gurobi’s generic internal heuristics when applied to the \algname{Group-Cutset} formulation highlights
the importance of problem-specific primal heuristics.
In additional experiments conducted during evaluation (not shown for brevity),
our heuristic consistently produced substantially better incumbent solutions than Gurobi’s default internal heuristics.
On the \texttt{CRISP-1000} instance, where the difference is smallest, incumbent costs were approximately five times lower, while on more challenging instances
the improvement reached up to two orders of magnitude.
These results demonstrate the critical role of tailored primal heuristics for obtaining high-quality incumbents in lazy-constraint settings.

\end{document}